\newtheorem{theorem}{Theorem}[section]
\newtheorem{protocol}{Guideline}[section]
\newtheorem{guideline}{Metric}[section]
\title{A Workflow for Offline Model-Free Robotic Reinforcement Learning}
\author{
  Aviral Kumar$^{\star, 1}$, Anikait Singh$^{\star, 1}$, Stephen Tian$^{1}$, Chelsea Finn$^{2}$, Sergey Levine$^{1}$\\
  $^1$ UC Berkeley, $^2$ Stanford University ~~~~~~~ ($^*$ Equal Contribution)\\
  \texttt{aviralk@berkeley.edu, asap7772@berkeley.edu}
}
\newcommand{\figleft}{{\em (Left)}}
\newcommand{\figright}{{\em (Right)}}
\newcommand{\figtop}{{\em (Top)}}
\newcommand{\figbottom}{{\em (Bottom)}}
\newcommand{\x}{\mathbf{x}}
\newcommand{\data}{\mathcal{D}}
\newcommand{\policy}{\pi}
\newcommand{\behavior}{{\pi_\beta}}
\newcommand{\bellman}{\mathcal{B}}
\newcommand{\bs}{\mathbf{s}}
\newcommand{\ba}{\mathbf{a}}
\newcommand{\E}{\mathbb{E}}
\begin{document}
\maketitle

%===============================================================================

\begin{abstract}
    Offline reinforcement learning (RL) enables learning control policies by utilizing only prior experience, without any online interaction. This can allow robots to acquire generalizable skills from large and diverse datasets, without any costly or unsafe online data collection. Despite recent algorithmic advances in offline RL, applying these methods to real-world problems has proven challenging. Although offline RL methods can learn from prior data, there is no clear and well-understood process for making various design choices, from model architecture to algorithm hyperparameters, without actually evaluating the learned policies online. In this paper, our aim is to develop a practical workflow for using offline RL analogous to the relatively well-understood workflows for supervised learning problems. To this end, we devise a set of metrics and conditions that can be tracked over the course of offline training, and can inform the practitioner about how the algorithm and model architecture should be adjusted to improve final performance. Our workflow is derived from a conceptual understanding of the behavior of conservative offline RL algorithms and cross-validation in supervised learning. We demonstrate the efficacy of this workflow in producing effective policies without any online tuning, both in several simulated robotic learning scenarios and for three tasks on two distinct real robots, focusing on learning manipulation skills with raw image observations with sparse binary rewards. Explanatory video and additional results can be found at \href{https://sites.google.com/view/offline-rl-workflow}{sites.google.com/view/offline-rl-workflow}. 
\end{abstract}

\keywords{workflow, offline RL, offline tuning}

\begin{wrapfigure}{r}{0.63\textwidth}
\vspace{-1.25cm}
\begin{center}
\includegraphics[width=0.99\linewidth]{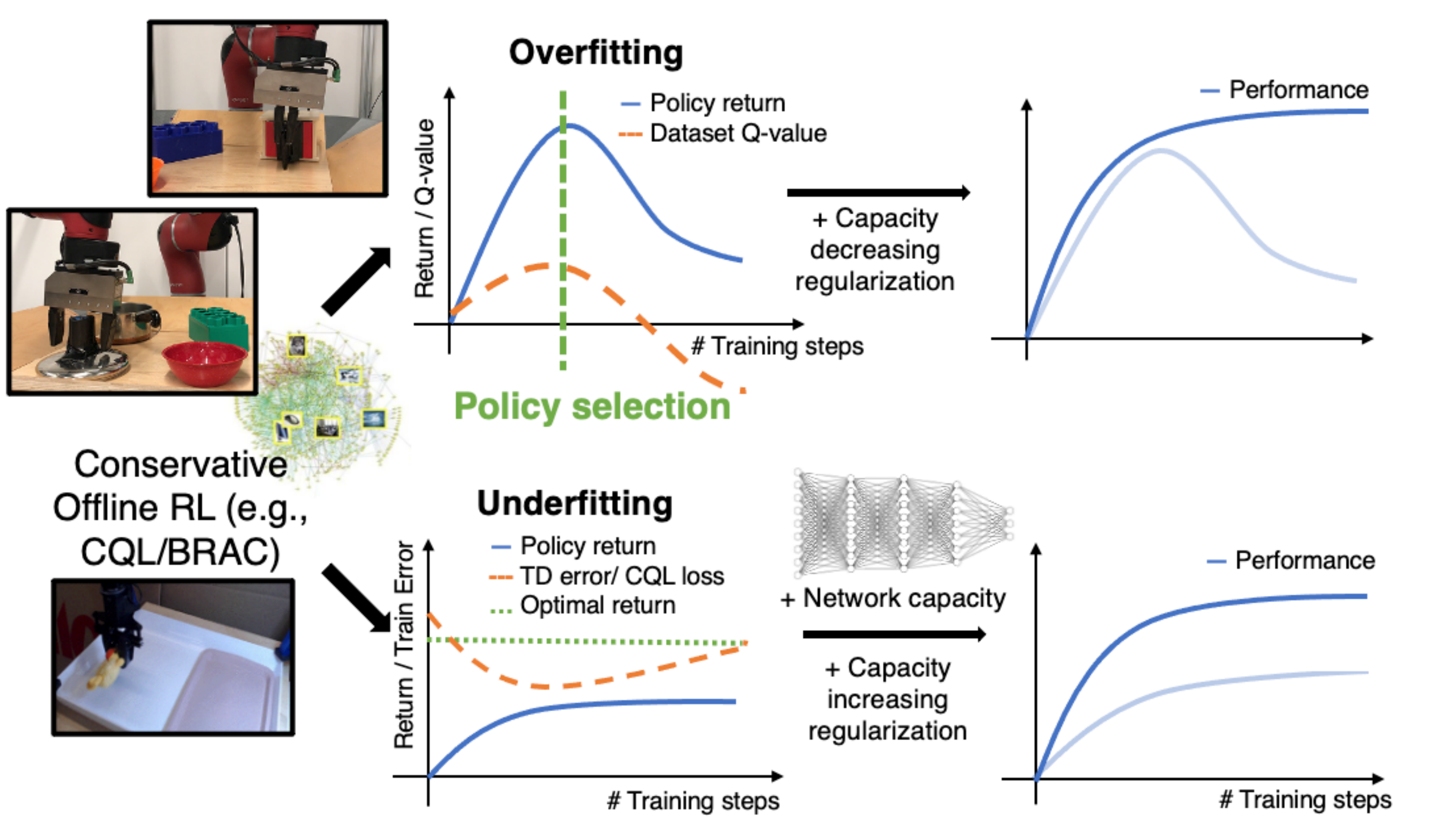}
\vspace{-0.6cm}
\end{center}
\caption{\small{\label{fig:teaser}\textbf{Our proposed workflow}  aims to detect overfitting and underfitting, and provides guidelines for addressing these issues via policy selection, regularization, and architecture design. We evaluate this workflow on \textbf{two} real-world robotic systems and simulation domains, and we find it to be effective.}}
\vspace{-0.45cm}
\end{wrapfigure}

\vspace{-5pt}
\section{Introduction}
\vspace{-8pt}
% first para discussed the recent interests in offline RL, but poses the question of is whether we can apply it to the real world
Offline reinforcement learning (RL) can in principle make it possible to convert existing large datasets of robotic experience into effective policies, without the need for costly or dangerous online interaction for each training run. While offline RL algorithms have improved significantly~\citep{kalashnikov2021mt,kumar2020conservative,singh2020cog,chebotar2021actionable,kalashnikov2018scalable}, applying such methods to real-world robotic control problems presents a number of major challenges. In standard online RL, any intermediate policy found during training is executed in the environment to collect more experience, which naturally allows for an evaluation of the policy performance. This ability to evaluate intermediate policies lets practitioners use ``brute-force'' to evaluate the effects of various design factors, such as model capacity and expressivity, the number of training steps, and so forth, and facilitates comparatively straightforward tuning. In contrast, offline RL methods do not have access to real-world on-policy rollouts for evaluating the learned policy. Thus, in order for these methods to be truly practical for real-world applications, we not only require effective algorithms, but also an effective \emph{workflow}: a set of protocols and metrics that can be used to reliably and consistently adjust model capacity, regularization, etc in offline RL to obtain policies with good performance, without requiring real-world rollouts for tuning.   

A number of prior works have studied \emph{model selection} in offline RL by utilizing off-policy evaluation (OPE) methods~\cite{precup2000eligibility} to estimate policy performance. These methods can be based either on model or value learning~\citep{kostrikov2020statistical,paduraru2012off,paine2020hyperparameter,nachum2020reinforcement}
or importance sampling~\cite{precup2000eligibility,thomas2015high,thomas2015higheval,jiang2015doubly}. However, developing reliable OPE methods is itself an open problem, and modern OPE methods themselves suffer from hyperparameter selection challenges (see \citet{fu2021benchmarks} for an empirical study). Moreover, accurate off-policy evaluation is likely not necessary to simply tune algorithms for best performance -- we do not need a precise estimate of \emph{how} good our policy is, but rather a workflow that enables us to best improve it by adjusting various algorithm hyperparameters.

In this paper, we devise a practical workflow for selecting regularizers, model architectures, and policy checkpoints for offline RL methods in robotic learning settings. We focus on a specific class of conservative offline RL algorithms~\citep{levine2020offline,kumar2020conservative} that regularize the Q-function, but also show that our workflow can be effectively applied to policy constraint methods~\citep{wu2019behavior}. Our aim is not to focus on complete off-policy evaluation or to devise a new approach for off-policy evaluation, but rather to adopt a strategy similar to the one in supervised learning. Analogously to how supervised learning practitioners can detect overfitting and underfitting by tracking training and validation losses, and then adjust hyperparameters based on these metrics, our workflow (see Figure~\ref{fig:teaser} for a schematic) first defines and characterizes overfitting and underfitting, proposes metrics and conditions that users can track to determine if an offline RL exhibits overfitting or underfitting, and then utilizes these metrics to inform design decisions pertaining to neural net architectures, regularization, and early stopping. This protocol is intended to act as a ``user's manual'' for a practitioner, with guidelines for how to modify algorithm parameters for best results without real-world evaluation rollouts.    

The primary contribution of this paper is a simple yet effective workflow for robotic offline RL. We propose metrics and protocols to assist practitioners in selecting policy checkpoints, regularization parameters, and model architectures for conservative offline RL algorithms such as CQL~\citep{kumar2020conservative} and BRAC~\citep{wu2019behavior}. We empirically verify the efficacy of our proposed workflow on simulated robotic manipulation problems as well as three real-world robotic manipulation problems on two different robots, with diverse objects, pixel observations, and sparse binary reward supervision. 
Experimentally, we evaluate our method on two real-world robots (the Sawyer and WidowX robots), and one realistic simulated tasks. Our approach is effective in all of these cases, and on two tasks with the Sawyer robot that initially fail completely, our workflow improves the success rate to \textbf{70\%}.

%===============================================================================

\vspace{-5pt}
\section{Related Work}
\label{sec:related}
\vspace{-5pt}

\textbf{Robotic RL with offline datasets.} Learning-based methods have been applied to a number of robotics problems, such as grasping objects~\cite{kalashnikov2018qtopt,zeng2018learning}, in-hand object manipulation~\cite{dexterity, VanHoofInHandManipulation, dexterous1, dexterous2}, pouring fluids~\cite{schenck2017visual}, door opening~\cite{yahya2017collective}, and manipulating cloth~\cite{matas18}. While the majority of these works use standard online RL, a number of prior works have used also leveraged robotic datasets to train skills in addition to active environment rollouts. \citet{kalashnikov2018qtopt},  \citet{julian2020efficient}, and \citet{cabi2019framework} use offline pre-training followed by a finetuning phase to improve the policy. Visual foresight~\citep{finn2017deep,ebert2018visual,xie2019improvisation,hristov2018interpretable,tian2020model} train a video-predictive dynamics model for offline planning. \citet{young2020visual,johns2021coarse} lean skills in an offline manner and use it for imitation. \citet{mandlekar2020iris,m2020learning} learn hierarchical skills and combine them via imitation learning. Our work is complementary to these prior works, in that our workflow can be applied to any robotic offline RL system.

\textbf{Offline deep RL algorithms.} Algorithms for offline deep RL~\citep{lange2012batch,levine2020offline} can be divided into three categories: those that constrain the policy to the dataset~\citep{fujimoto2018off,kumar2019stabilizing,wu2019behavior,peng2019awr,jaques2019way,nair2020accelerating}, those that prevent overestimation via critic regularization~\citep{kumar2020conservative,kostrikov2021offline,fakoor2021continuous} and those that train dynamics models and apply a reward penalty~\citep{yu2020mopo,kidambi2020morel}. These algorithms have been applied in robotics, for example when learning from unlabeled data~\citep{singh2020cog}, robotic manipulation~\citep{Rafailov2020LOMPO,kalashnikov2018qtopt}, goal-conditioned RL~\citep{chebotar2021actionable,khazatsky2021can} and multi-task RL problems~\citep{kalashnikov2021mt}.
Rather than developing a new offline RL method, our work develops criteria and workflow rules that simplify the application of these methods to new robotics tasks.

\textbf{Off-policy evaluation for model selection in offline RL.} To the best of our knowledge, prior work that attempts to tackle model-selection in offline RL has focused
exclusively on devising off-policy evaluation (OPE) methods. These methods utilize importance sampling~\citep{precup2001off,voloshin2019empirical,thomas2015higheval} or learn a dynamics model or a value function~\citep{jiang2015doubly,paine2020hyperparameter,kostrikov2020statistical,nachum2019dualdice} to estimate the policy return. However, empirical studies by \citet{fu2021benchmarks} and \citet{qin2021neorl} show that none of these OPE methods actually perform reliably and consistently across tasks and offline datasets of the kind we are likely to find in the real-world, and present tuning challenges of their own. Our workflow does not perform direct off-policy evaluation, and instead utilizes comparative metrics across checkpoints and training runs based on observations about the behavior of specific types of offline RL algorithms.

%===============================================================================

\vspace{-5pt}
\section{Preliminaries, Background, and Definitions}
\label{sec:background}
\vspace{-5pt}
The goal in RL is to optimize the infinite horizon discounted return  $R = \sum_{t = 0}^{\infty}\gamma^t r(\bs_t, \ba_t)$, where $r(s,a)$ represents the reward function evaluated at a state-action pair $(\bs, \ba)$. We operate in the \emph{offline} RL setting and are provided with a fixed dataset $\mathcal{D} = \{(\bs, \ba, r(\bs, \ba), \bs')\}$, consisting of transition tuples obtained from rollouts under a behavior policy $\behavior(\ba|\bs)$. Our goal is to obtain the best possible policy by only training on this fixed offline dataset $\data$, with no access to online rollouts. We focus on \emph{conservative} offline RL algorithms that modify the Q-function to penalize distributional shift, with most experiments on CQL~\citep{kumar2020conservative}, though we also adapt our workflow to BRAC~\citep{wu2019behavior} in Appendix~\ref{app:which_algos}.

\textbf{Conservative Q-learning (CQL).} The actor-critic formulation of CQL trains a Q-function $Q_\theta(\bs, \ba)$ with a separate policy $\pi_\phi(\ba|\bs)$, which maximizes the expected Q-value $\E_{\bs \sim \mathcal{D}, \ba \sim \pi_\phi}[Q_\theta(\bs, \ba)]$ like other standard actor-critic deep RL methods~\citep{lillicrap2015continuous,fujimoto2018addressing,haarnoja2018soft}. However, in addition to the standard TD error $\mathcal{L}_\text{TD}(\theta)$ (in blue below), CQL applies a regularizer $\mathcal{R}(\theta)$ (in red below) to prevent overestimation of Q-values for out-of-distribution (OOD) actions. This term minimizes the Q-values under a distribution $\mu(\ba|\bs)$, which is automatically chosen to pick actions
$\ba$ with high Q-values $Q_\theta(\bs, \ba)$, and counterbalances this term by maximizing the values of the actions in the dataset: \vspace{-0.3cm}
\begin{equation}
\label{eqn:cql_training}
\hspace{-1cm} \small{\min_{\theta}~ \textcolor{red}{\alpha\left(\mathbb{E}_{\bs \sim \mathcal{D}, \ba \sim \mu(\cdot|\bs)} \left[Q_\theta(\bs,\ba)\right] - \mathbb{E}_{\bs, \ba \sim \mathcal{D}}\left[Q_\theta(\bs,\ba)\right]\right)}+\frac{1}{2} \textcolor{blue}{\mathbb{E}_{\bs, \ba, \bs' \sim \mathcal{D}}\left[\left(Q_\theta(\bs, \ba) - \bellman^\policy\bar{Q}(\bs, \ba)\right)^2 \right]}},
\vspace{-0.1cm}
\end{equation}
where $\bellman^\policy \bar{Q} (\bs, \ba)$ is the Bellman backup operator with a delayed target Q-function, $\bar{Q}$: $\bellman^\policy \bar{Q}(\bs, \ba) := r(\bs, \ba) + \gamma \E_{\ba' \sim \pi(\ba'|\bs')}[\bar{Q}(\bs', \ba')]$. In practice, CQL computes $\mu(\ba|\bs)$ using actions sampled from the policy $\pi_\phi(\ba|\bs)$. More discussion of CQL is in Appendix~\ref{app:additional_background}. In this paper, we will utilize CQL as a base algorithm that our workflow intends to tune, but we also extend it to BRAC.

\begin{wraptable}{r}{0.7\linewidth}
 \vspace{-0.4cm}
  \centering
  \scriptsize
  \def\arraystretch{0.9}
  \setlength{\tabcolsep}{0.42em}
\begin{tabularx}{0.95\linewidth}{c X X}
\toprule
\textbf{Quantity} & \textbf{Supervised Learning} & \textbf{Conservative Offline RL}\\
\midrule
Test error & Loss $\mathcal{L}$ evaluated on test data, $\mathcal{D}_{\text{test}}$ & Performance of policy, $J(\pi)$ \\
Train error & Loss $\mathcal{L}$ evaluated on train data, $\mathcal{D}_\text{train}$ & Objective in Equations~\ref{eqn:generic_offline_rl},~\ref{eqn:cql_training}\\
\midrule
Overfitting & $\mathcal{L}(\mathcal{D}_\text{train})$ low, $\mathcal{L}(\mathcal{D}_\text{val})$ high, $\mathcal{D}_\text{val}$ is a validation set drawn i.i.d. as $\mathcal{D}_\text{train}$ & Training objective in Equation~\ref{eqn:cql_training} is extremely low, low value of $J(\pi)$ \\
Underfitting & high value of train error $\mathcal{L}(\mathcal{D}_\text{train})$ & Training objective in  Equation~\ref{eqn:cql_training} is extremely high, low value of $J(\pi)$\\
\bottomrule
    \end{tabularx}
    \vspace{-0.1cm}
         \caption{\label{tab:summary} \footnotesize{Summary of train error, test error and our definitions of overfitting and underfitting in supervised learning and conservative offline RL methods. We will propose metrics to measure these phenomena in a purely offline manner and recommend how to tune the underlying method accordingly.  
     }}
\vspace{-0.4cm}
\end{wraptable}
\textbf{Overfitting and underfitting in CQL.} Conservative offline RL algorithms~\citep{kumar2020conservative,kostrikov2021offline} like CQL can be sensitive to design choices, including number of gradient steps for training~\citep{kumar2021implicit,anonymous2021value} and network capacity. These challenges are also present in supervised learning, but supervised learning methods benefit from a simple and powerful workflow that involves using training error and validation error to characterize overfitting and underfitting. A practitioner can then make tuning choices based on these characterizations. To derive an analogous workflow for offline RL, we first ask:\textbf{ what do overfitting and underfitting actually mean for the case of conservative offline RL?}

To define overfitting and underfitting generically for any conservative offline RL method, we consider an abstract optimization formulation for such methods~\citep{kumar2020conservative}: \vspace{-0.3cm}

\begin{equation}
\label{eqn:generic_offline_rl}
    \pi^* := \arg \max_{\pi}~~ J_{\mathcal{D}}(\pi) - \alpha D(\pi, \pi_\beta)~~~~~~~~~~~ \text{(Conservative offline RL)}.
\end{equation}
\vspace{-0.3cm}
$J_{\mathcal{D}}(\pi)$ denotes the average return of policy $\pi$ in the empirical MDP induced by the transitions in the offline dataset $\mathcal{D}$, and $D(\pi, \pi_\beta)$ denotes a closeness constraint to the behavior policy, effectively applied by the offline RL method. Our definition of conservative offline RL requires that this divergence be computed in expectation over the state visitation distribution of the learned policy $\pi$ in the empirical MDP as discussed in Appendix~\ref{app:which_algos}. For example, Equation~\ref{eqn:cql_training} translates to utilizing $D_\text{CQL}(p, q) := \sum_{\x} p(\x) (p(\x)/q(\x) - 1)$ in Equation~\ref{eqn:generic_offline_rl} (see Theorem 3.5 in \citet{kumar2020conservative} for a proof). The training loss is discussed in Equations~\ref{eqn:cql_training} and \ref{eqn:generic_offline_rl} and the test loss is equal to the negative of the actual return $J(\pi)$ of the learned policy. Analogously to supervised learning, we can use the notion of train and test error to define overfitting and underfitting in offline RL, as discussed in Table~\ref{tab:summary}. However, note that the conditions summarized in Table~\ref{tab:summary} are not measurable completely offline. Precisely estimating if a run of an offline RL method overfits or underfits requires evaluating the learned policy via interaction with the real-world environment. In Section~\ref{sec:workflow_metrics}, our goal will be to devise offline metrics for characterizing overfitting that do not have this requirement. We will tailor our study specifically towards CQL, though we extend it to BRAC in Appendix~\ref{app:which_algos}. A similar procedure could be devised for other offline RL methods, but we leave this for future work.

%===============================================================================

\vspace{-8pt}
\section{Detecting Overfitting and Underfitting in Conservative Offline RL}
\label{sec:workflow_metrics}
\vspace{-8pt}

% what is our goal
In standard supervised learning, we can determine if a method overfits or underfits by comparing the training loss to the same loss function evaluated on a held-out validation dataset, which serves as a ``proxy'' test dataset. In contrast, the return of the learned policy $J(\pi)$ in RL does not have a direct proxy that can be computed offline. Thus, our goal is to identify offline metrics and conditions that allow us to measure overfitting and underfitting in conservative offline RL, with a focus on CQL. {We also adapt these conditions to BRAC~\citep{wu2019behavior}, a policy-constraint method in Appendix~\ref{app:brac_example}.}

\begin{wrapfigure}{r}{0.3\textwidth}
\vspace{-0.3cm}
\includegraphics[width=\linewidth]{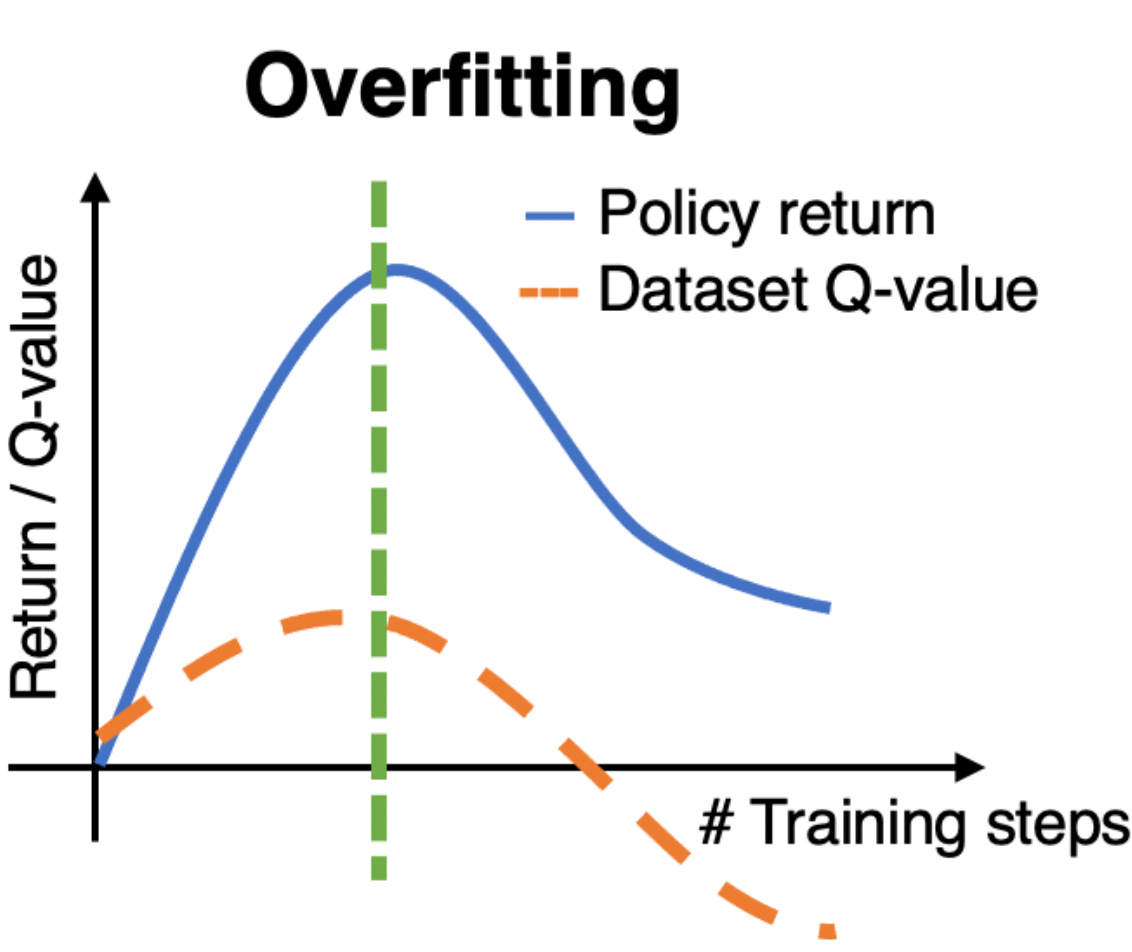}
\vspace{-0.9cm}
\end{wrapfigure}
\textbf{\underline{Detecting overfitting in CQL.}} Our definition of overfitting (Table~\ref{tab:summary}) corresponds to a low value for the training loss (Equation~\ref{eqn:cql_training}), but poor actual policy performance $J(\pi)$. To detect this, we analyze the time series of the estimated Q-values averaged over the dataset \emph{samples} $(\bs, \ba, r, \bs') \in \mathcal{D}$ over the course of training with a large number of gradient steps.
A run is labeled as overfitting if we
see that the expected dataset Q-value exhibits a non-monotonic trend: if the average Q-values first increase and then decrease as shown in the figure on the right. 
Additionally, we would see that training loss in Equation~\ref{eqn:cql_training} eventually becomes very low. Why do we see such a trend in the average dataset Q-value? Since CQL selectively penalizes the average Q-value under the distribution $\mu(\ba|\bs)$ supported on actions with large Q-values,
we would expect the Q-values on states from the dataset $\bs \sim \mathcal{D}$ and the learned $\ba \sim \pi(\cdot|\bs)$ to be small since the policy is trained to maximize the Q-function as well. This in turn would lead to an eventual reduction in the average Q-value on dataset actions, $\E_{\bs, \ba \sim \mathcal{D}}[Q_\theta(\bs, \ba)]$. This would be visible after sufficiently many steps of training, when values have propagated via Bellman backups in Equation~\ref{eqn:cql_training} giving rise to the non-monotonic trend. If such a trend is observed, this raises two questions, as we discuss next.

\textbf{\textit{What does a low average Q-value $\E_{\bs, \ba \sim \data}[Q_\theta(\bs, \ba)]$ imply about $J(\pi)$?}} We show in Appendix~\ref{app:overfitting_and_underfitting} that, in principle, CQL training (Equation~\ref{eqn:cql_training}) should never learn Q-values smaller than the dataset Monte-Carlo return, and the Q-values should increase unless the learned policy $\pi$ is better than $\pi_\beta$. Intuitively, this is because the objective in Equation~\ref{eqn:cql_training} aims to also \emph{maximize} the average dataset Q-value and thus the Q-values for the behavior policy are not underestimated in expectation. Now, if the policy optimizer finds a policy that attains a smaller learned Q-value than the dataset return, the policy can always be updated further towards the behavior policy so as to raise the Q-value. Therefore, Q-values can only decrease when the policy found by CQL is better than the behavior policy. We formalize this intuition in Appendix~\ref{app:overfitting_and_underfitting} in Theorem~\ref{thm:cql_no_reduce}. Of course, these insights only apply to runs where the value of the training CQL regularizer is small, otherwise out-of-distribution Q-values may be overestimated. Thus, a low Q-value on $(\bs, \ba) \in \mathcal{D}$ indicates that the Q-function predicts extremely small Q-values on actions sampled from $\mu(\ba|\bs)$. {Typically this would mean the highest Q-value actions $\ba$ at a state $\bs \in \mathcal{D}$ are those sampled from the offline dataset, drawn from the behavior policy. Thus, policy optimization, which aims to maximize the Q-value, would make $\pi(\ba|\bs)$ closer to the behavior policy $\pi_\beta(\ba|\bs)$ on $\bs \in \mathcal{D}$, implying that the resulting policy would have poor performance $J(\pi)$, that matches or is worse than $\pi_\beta$.}

\textbf{\textit{Which training checkpoint is likely to attain the best policy performance?}} Tracking overfitting in supervised learning is important for selecting the best-performing checkpoint, before overfitting becomes severe. Analogously, rather than quantifying what a ``low'' Q-value means, we can compare the average dataset Q-value across different checkpoints within the same run, using a relative comparison to pick the best policy. 
Since CQL aims to increase the average dataset Q-value (Equation~\ref{eqn:cql_training}), we would expect Q-values to initially increase, until learning starts to overfit and the average dataset Q-value starts decreasing. We should therefore select the latest checkpoint that corresponds to a peak in the estimated dataset Q-value. A visual illustration of this idea is shown in the figure above, where the checkpoint marked by the green line is recommended to be chosen. \textbf{In summary}, \textbf{(a)} to detect overfitting we can track:
\begin{tcolorbox}[colback=blue!6!white,colframe=black,boxsep=0pt,top=3pt,bottom=5pt]
\begin{guideline}[Overfitting]
\label{guideline:overfitting}
A low average data Q-value $\E_{\bs, \ba \sim \mathcal{D}}[Q_\theta(\bs, \ba)]$ that decreases with more gradient steps on Equation~\ref{eqn:cql_training} indicates that the offline RL algorithm is overfitting.
\end{guideline}
\end{tcolorbox} 
and \textbf{(b)} further, given a run that exhibits overfitting, our principle for policy selection is given by:
\begin{tcolorbox}[colback=blue!6!white,colframe=black,boxsep=0pt,top=3pt,bottom=5pt]
\begin{protocol}[Policy selection]
\label{guideline:policy checkpoint}
If a run overfits (per Metric~\ref{guideline:overfitting}), select the checkpoint that attains the highest average dataset Q-value before overfitting for deployment.
\end{protocol}
\end{tcolorbox} 
Finally, for actor-critic algorithms~\citep{fujimoto2018addressing} that update the actor slower than the critic, the next policy checkpoint after the peak in the average dataset Q-value appears must be selected. In most of our experiments, we find that simply utilizing the policy checkpoint at the point of the peak in the Q-value also leads to good results making this a rare concern, but in some cases, utilizing the next checkpoint after the Q-value peak performs better empirically.  

\begin{wrapfigure}{r}{0.25\textwidth}
\vspace{-0.6cm}
\includegraphics[width=\linewidth]{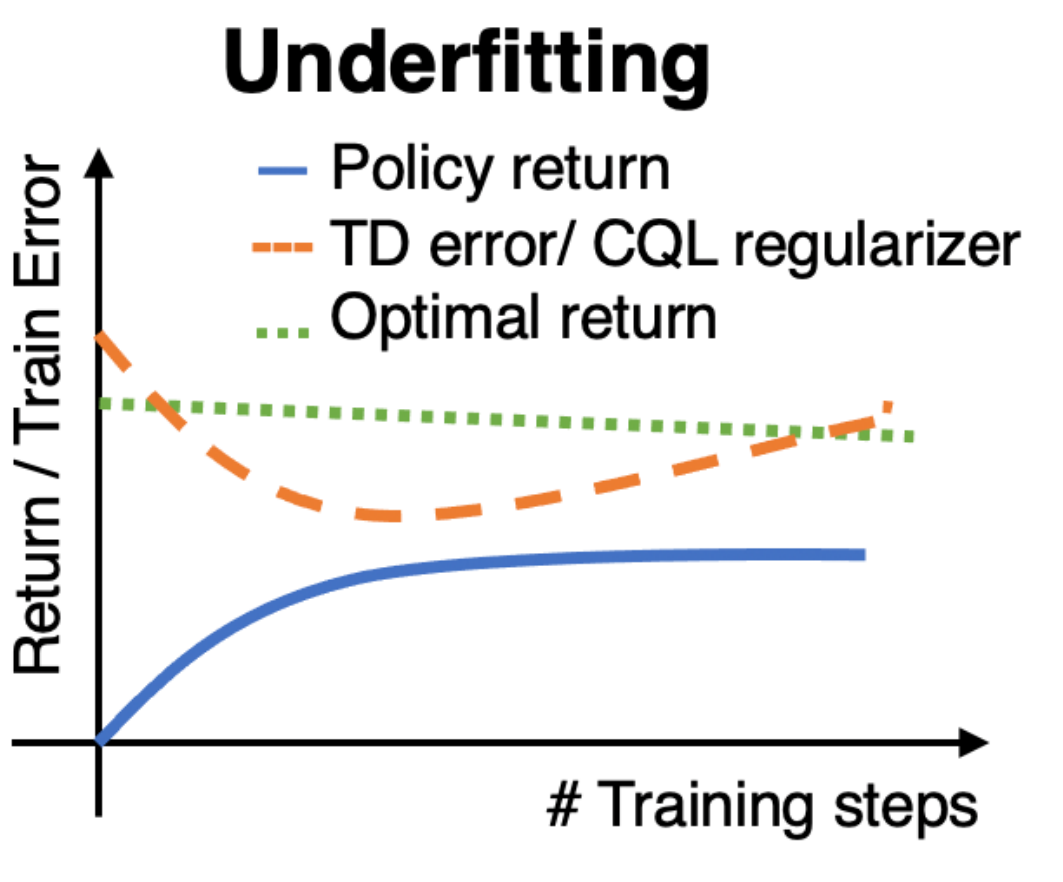}
\vspace{-1cm}
\end{wrapfigure}
\textbf{\underline{Detecting underfitting in CQL.}} Next, we turn to devising a procedure to detect underfitting. As summarized in Table~\ref{tab:summary}, underfitting occurs when the RL algorithm is unable to minimize the training objective in Equation~\ref{eqn:cql_training} effectively. Therefore, large values for the TD error, the CQL regularizer, or both imply underfitting. A large value for the CQL regularizer, $\mathcal{R}(\theta)$, indicates an overestimation of Q-values relative to their true value~\citep{kumar2020conservative} and thus, unlike the overfitting regime, we would \emph{not} expect the average learned Q-value to
decrease with more training.
Thus, one approach to predict underfitting is to track both the TD error, $\mathcal{L}_\mathrm{TD}(\theta)$, and the CQL regularizer, $\mathcal{R}(\theta)$, and check if the  value of even one of these quantities is large. More discussion is provided in Appendix~\ref{app:overfitting_and_underfitting}.   

\textit{\textbf{How do we determine if the TD error and the CQL regularizer are ``large''?}} In order to determine if the error of a particular run is large, we can rerun the base CQL algorithm but with models of higher capacity, {which does not necessarily correspond to the function approximator size, as we will discuss in Section~\ref{sec:addressing_workflow}.} For each model, we record the corresponding training errors and check if the training TD error and CQL regularizer value are reduced with capacity increase. If increasing capacity leads to a reduction in the loss without exhibiting the overfitting signs described previously, then we are in an underfitting regime. Another approach to answer the question is to utilize the value of the TD error ($\mathcal{L}_\text{TD}(\theta)$) and the task horizon ($1/(1 - \gamma)$) to estimate the overall error in the learned Q-values against the actual Q-value, which is equal to $\mathcal{L}_\text{TD}(\theta) / (1 - \gamma)$~\citep{munos2003api} (see Appendix~\ref{app:overfitting_and_underfitting}). If this overall error spans the range of allowed Q-values on the task -- which could be inferred based on the structure of the reward function in the task -- then we can say that the algorithm is underfitting. 
\begin{tcolorbox}[colback=blue!6!white,colframe=black,boxsep=0pt,top=3pt,bottom=5pt]
\begin{guideline}[Underfitting]
\label{guideline:underfitting}
Compute the values of the training TD error, $\mathcal{L}_\mathrm{TD}(\theta)$ and CQL regularizer, $\mathcal{R}(\theta)$ for the current run and another identical run with increased model capacity. If the training errors reduce with increasing model capacity, the original run was underfitting.
\end{guideline}
\end{tcolorbox}
\vspace{-15pt}
\section{Addressing Overfitting and Underfitting in Conservative Offline RL}
\label{sec:addressing_workflow}
\vspace{-10pt}
The typical workflow for supervised learning not only identifies overfitting and underfitting, but also guides the practitioner how to adjust their method so as to alleviate it (e.g., by modifying regularization or model capacity), thus improving performance.
Can we devise similar guidelines to address overfitting and underfitting with conservative offline RL? Here, we discuss some ways to adjust regularization and model capacity to alleviate these phenomena.

\textbf{Capacity-decreasing regularization for overfitting.} As we observed in Section~\ref{sec:workflow_metrics}, the mechanism behind extremely low Q-values on the dataset is that CQL training minimizes Q-values on actions sampled from $\mu(\ba|\bs)$. {Two possible approaches to preventing over-minimization of these values are \textbf{(1)} applying regularization such as dropout~\citep{srivastava2014dropout} on Q-function layers, similar to supervised learning, and \textbf{(2)} enforcing that representations of the learned Q-function match a pre-specified target for all state-action tuples.} For \textbf{(2)}, we can apply techniques such as a variational information bottleneck (VIB)~\citep{alemi2016deep,achille2018emergence} 
regularizer on the learned representations, $\phi(\bs)$. Formally, let $(\bs, \ba)$ denote a state-action pair. Instead of predicting a deterministic $\phi(\bs) \in \mathbb{R}^d$ (Figure~\ref{fig:architectures}), we modify the Q-network to predict two distinct vectors, $\phi_m(\bs) \in \mathbb{R}^d$ and $\phi_\Sigma(\bs) \in \mathbb{R}^d$, and sample $\phi(\bs)$ randomly from a Gaussian centered at $\phi_m$ with covariance $\phi_\Sigma$, i.e., $\phi(\bs) \sim \mathcal{N}(\phi_m(\bs), \text{diag}(\phi_\Sigma(\bs))$. VIB then regularizes $\mathcal{N}(\phi_m(\bs), \text{diag}(\phi_\Sigma(\bs))$ to be close to a prior distribution, $\mathcal{N}(0, \mathbb{I})$: \vspace{-0.2cm}

\begin{equation}
\label{eqn:bottleneck}
    \min_\theta~~ \mathcal{L}_\text{CQL}(\theta) + \beta \E_{\bs \sim \mathcal{D}} \left[\mathrm{D_{KL}}\left(\mathcal{N}(\phi_m(\bs), \text{diag}(\phi_\Sigma(\bs)))~\big\vert \big\vert~\mathcal{N}(0, \mathbb{I})\right)\right] ~~~~~~~ \text{(VIB regularizer)},~~~~
\end{equation}
\vspace{-0.2cm}
% where $\beta$ denotes a hyperparameter that is chosen via Lagrangian dual descent (Appendix ??). 
\begin{tcolorbox}[colback=blue!6!white,colframe=black,boxsep=0pt,top=3pt,bottom=5pt]
\begin{protocol}
\label{guideline:addressing_overfitting}
To address overfitting, we recommend using some form of capacity-decreasing regularization on the Q-function, such as dropout or the VIB regularizer shown in Equation~\ref{eqn:bottleneck}.
\end{protocol}
\end{tcolorbox} 

\textbf{Capacity-increasing techniques for underfitting.} To address underfitting, we need to increase model capacity to improve optimization of the training objective. Analogous to supervised learning, model capacity can be increased by using more expressive neural nets (e.g., ResNets~\citep{he2016deep}, transformers~\citep{vaswani2017attention}) for representing the learned policy. We use ResNets in our experiments (Figure~\ref{fig:architectures}). However, the RL setting presents an additional challenge with capacity: while larger models \emph{in principle} have more capacity, recent work~\citep{ghosh2020representations,kumar2021implicit,anonymous2021value} has shown that utilizing larger networks to represent Q-functions does not always improve its capacity in practice, because TD-based RL methods introduce an ``implicit under-parameterization'' effect that can result in aliased (i.e., similar) internal representations for different state-action inputs, even for very large neural networks that can express the true Q-function effectively. To address this issue, these works apply a ``capacity-increasing'' regularizer to Q-function training. For instance, we can use the DR3 regularizer~\citep{anonymous2021value}, which penalizes the dot product of $\phi(\bs)$ and $\phi(\bs')$ for a transition $(\bs, \ba, \bs') \in \mathcal{D}$, and hence reduces aliasing. This objective is given by:\vspace{-0.6pt}

\begin{equation}
\label{eqn:dr3}
    \min_\theta~~ \mathcal{L}_\text{CQL}(\theta) + \beta \E_{\bs, \ba, \bs' \sim \mathcal{D}} \left[ \big\vert\phi(\bs)^\top \phi(\bs') \big\vert\right] ~~~~~~~~~ \text{(DR3 regularizer~\citep{anonymous2021value})},~~~~~ 
\end{equation}
\vspace{-0.2cm}
% where $\beta$ is a hyperparameter. 
\begin{tcolorbox}[colback=blue!6!white,colframe=black,boxsep=0pt,top=3pt,bottom=5pt]
\begin{protocol}
\label{guideline:addressing_underfitting}
To address underfitting, we recommend using some capacity-increasing regularization on the Q-function and the policy either in conjunction or separately. Examples: \textbf{(1)} bigger policy networks (e.g., ResNets), \textbf{(2)} DR3 regularizer on the Q-network. 
\end{protocol}
\end{tcolorbox} 

\vspace{-0.1cm}
\section{What About the Hyperparameter $\alpha$?}
\label{sec:alpha_tuning_main}
\vspace{-0.1cm}
The guidelines in the preceding paragraph suggest how to adjust capacity, but do not tell us how to tune the multiplier on the CQL term, $\alpha$, in Equation~\ref{eqn:cql_training}. This multiplier trades off minimizing TD error with a correction for distributional shift. An inappropriate choice of $\alpha$ will inhibit good policy performance, since CQL would be insufficiently constrained against out-of-distribution actions with excessively low values of $\alpha$, while being too constrained to stay close to the dataset with excessively high values. In our experiments, both in simulation and in the real-world, we found that a default value of $\alpha=1.0$ taken from prior work~\citep{singh2020cog} worked for all scenarios without any tuning; however, we do provide guidelines for tuning $\alpha$ values if required. We expect that tuning $\alpha$ is especially needed when the data is highly diverse or when it is generated from a narrow expert policy. 

\textbf{How can we detect excessively large $\alpha$ values?} Since a larger value of $\alpha$ would correspond to a higher weight on the CQL regularizer $\mathcal{R}(\theta)$, which minimizes Q-values, we would expect that Q-values learned with a large $\alpha$ would exhibit an overfitting trend per Metric~\ref{guideline:overfitting}, where Q-values would decrease with more training steps. Thus, if the Q-values on the dataset exhibit a decreasing (overfitting-like) trend despite applying the mitigation strategies in Section~\ref{sec:addressing_workflow}, it indicates that $\alpha$ may be too large and we need to reduce $\alpha$. This is formalized as:
\begin{tcolorbox}[colback=blue!6!white,colframe=black,boxsep=0pt,top=3pt,bottom=5pt]
\begin{protocol}[Guideline for decreasing $\alpha$]
\label{guideline:decreasing_alpha_main}
If a run of CQL with $\alpha = \alpha_0$ exhibits a trend that resembles overfitting per Metric~\ref{guideline:overfitting} and correcting for overfitting based on Guideline~\ref{guideline:addressing_overfitting} does not address it, then re-run CQL with a smaller value of $\alpha=\alpha_1$. If this new run with $\alpha = \alpha_1$ exhibits overfitting as well, decrease $\alpha$ from $\alpha_0$ and $\alpha_1$. 
\end{protocol}
\end{tcolorbox}

\textbf{How can we detect excessively small $\alpha$ values?} When $\alpha$ is too small, we would expect that the Q-values do not decrease with more training, since the CQL regularizer has minimal effect. Thus a run of CQL with a very small $\alpha$ will resemble underfitting, as identified by Metric~\ref{guideline:underfitting}. Given a run with non-decreasing Q-values and a high value of the training CQL regularizer, our first step is to determine if the run is underfitting due to insufficient capacity or just has a smaller $\alpha$. Thus, we first detect underfitting (Metric~\ref{guideline:underfitting}) and re-run training with a higher-capacity model (e.g., a Resnet policy, DR3~\citep{anonymous2021value} capacity-increasing regularizer). If we find that even higher-capacity models are unable to reduce the value of the CQL regularizer during training, then this indicates that $\alpha$ is too small. This is expected because, no matter what the capacity of the model, a small $\alpha$ would cause the policy to pick unseen, out-of-distribution actions due to erroneous Q-function overestimation. Once such a scenario is detected, we can increase $\alpha$, until the value of the CQL regularizer is sufficiently negative and then utilize the other workflow guidelines.
\begin{tcolorbox}[colback=blue!6!white,colframe=black,boxsep=0pt,top=3pt,bottom=5pt]
\begin{protocol}[Guideline for increasing $\alpha$]
\label{guideline:increasing_alpha_main}
If a run of CQL exhibits a trend that resembles underfitting per Metric~\ref{guideline:underfitting}, and increasing model capacity per recommendations mentioned in Guideline~\ref{guideline:addressing_underfitting} does not reduce the CQL regularizer, then we suggest first increasing the coefficient of the CQL regularizer $\alpha$ until the final value of the CQL regularizer is lower than 0, and then applying the other workflow guidelines with this new $\alpha$ value.
\end{protocol}
\end{tcolorbox}

 \vspace{-0.3cm}
\section{Evaluation of Our Workflow Metrics and Protocols in Simulation}
\label{sec:workflow_exps}
\vspace{-0.3cm}

\begin{wrapfigure}{r}{0.43\textwidth}
\vspace{-0.8cm}
\begin{center}
\includegraphics[width=\linewidth]{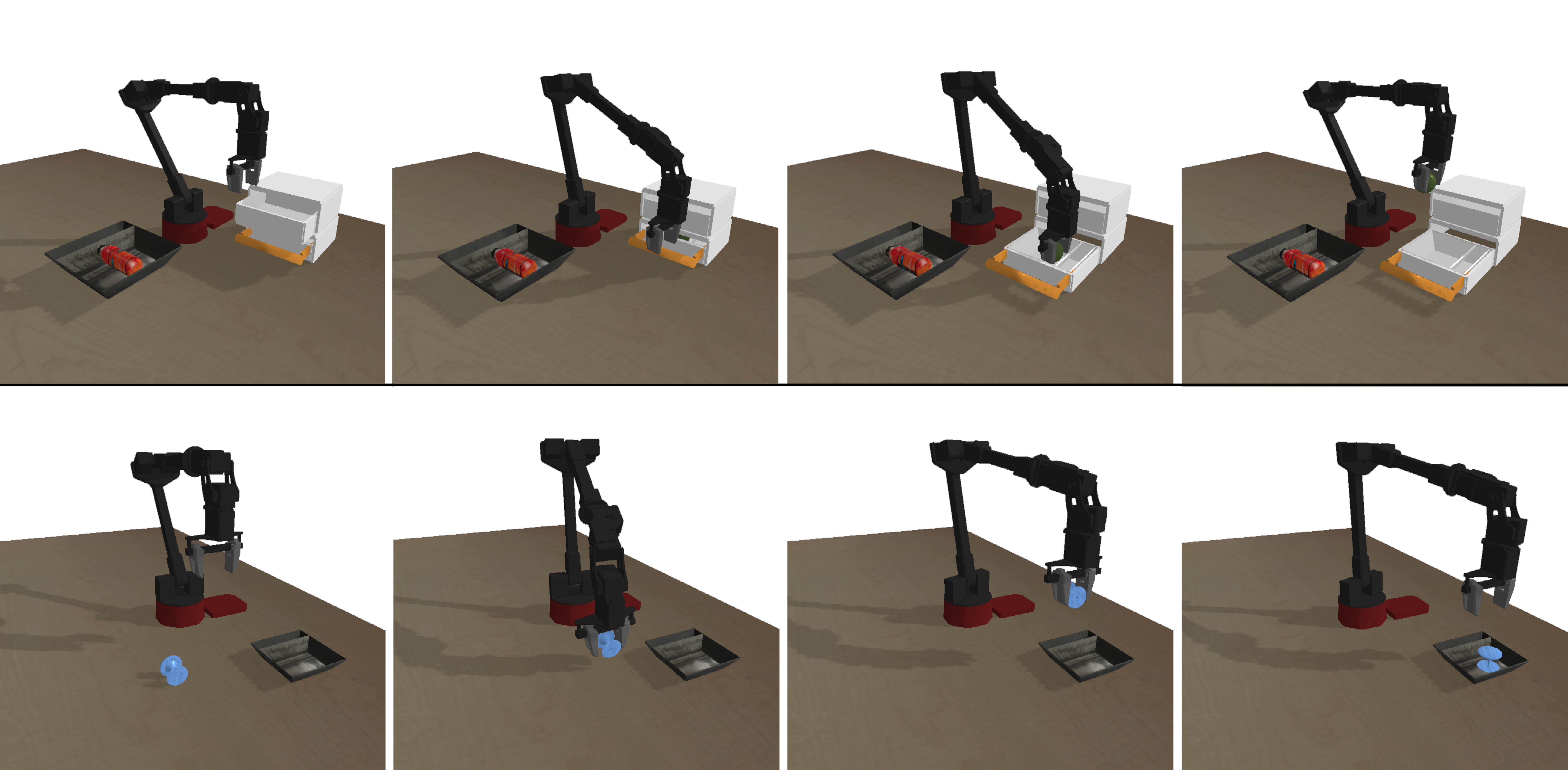}
\vspace{-0.6cm}
\end{center}
\caption{\label{fig:sim_envs} \footnotesize{\textbf{Simulated  domains~\citep{singh2020cog}. } Examples of trajectories for the \figtop \; pick and place task and \figbottom \; grasping from drawer task.}}
\vspace{-0.49cm}
\end{wrapfigure}
Next, we empirically validate the workflow proposed in Sections~\ref{sec:workflow_metrics} and \ref{sec:addressing_workflow} on a suite of simulated robotic manipulation domains that mimic real-robot scenarios, operating directly from image observations with sparse binary rewards. We will examine how applying the workflow in Section~\ref{sec:workflow_metrics} to detect overfitting or underfitting and then utilizing the strategies in Section~\ref{sec:addressing_workflow} affects the performance of offline RL methods. An improved performance would indicate the efficacy of our workflow in guiding a practitioner in making successful design decisions without any online tuning.  

\begin{wrapfigure}{r}{0.55\textwidth}
\vspace{-0.7cm}
\begin{center}
\includegraphics[width=\linewidth]{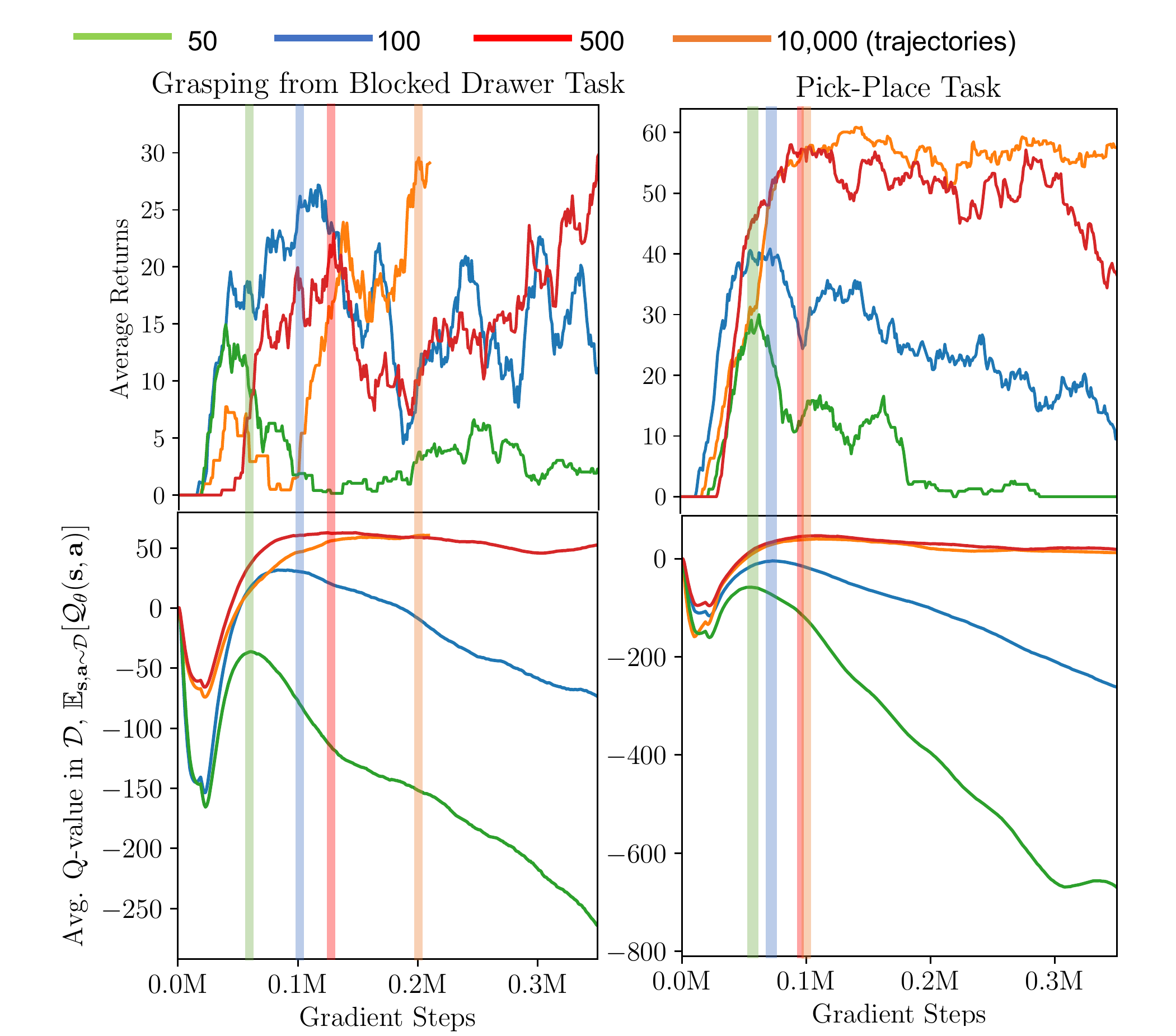}
\vspace{-0.7cm}
\end{center}
\caption{\footnotesize{\label{fig:overfitting_grasping_task} \textbf{Policy performance (Top) and average dataset Q-values of CQL (bottom) with varying number of trajectories.} Vertical bands indicate regions around the peak in average Q-value and observe that these regions correspond to policies with good actual performance.}}
\vspace{-0.4cm}
\end{wrapfigure}
\textbf{Experimental setup.} We use the environments from \citet{singh2020cog} to design offline RL tasks and datasets that we use for our empirical analysis. We consider two tasks: \textbf{(1)} a pick and place task and \textbf{(2)} a grasping object from a drawer task. Examples of trajectories in both of these simulated domains are shown in Figure~\ref{fig:sim_envs} and are detailed in Appendix~\ref{app:setup_details}. Briefly, the \emph{\textbf{pick and place}} task consists of a 6-DoF WidowX robot in front of a tray with an object. The goal is to put the object inside the tray. A non-zero reward of +1 is provided only when the object has been placed in the box. The offline dataset for this task consists of trajectories that grasp an object with a 35\% success and other trajectories that place an object with a 40\% success. Our second task is a \emph{\textbf{grasping from drawer task}} where the WidowX robot is placed in front of a drawer and multiple objects. The robot can open or close the drawer, grasp objects from inside the drawer or on the table, and place them anywhere in the scene. The goal is to close the top drawer, then open the bottom drawer and take the object out. Only if the object has been taken out, a reward of +1 is obtained. The offline dataset consists of trajectories with a 30-40\% success rate for opening and closing a drawer and other trajectories with only 40\% placing success. We use $\alpha=1.0$ for CQL training in all experiments, wich is directly taken from prior work~\citep{singh2020cog}, without any tuning. However, too low or too high $\alpha$ values will inhibit the effectiveness of regular CQL and we first need to tune $\alpha$ using Guidelines~\ref{guideline:decreasing_alpha_main} and \ref{guideline:increasing_alpha_main} in such scenarios, before addressing underfitting and overfitting as discussed in Section~\ref{sec:alpha_tuning_main}. We present results assessing the efficacy of Guidelines~\ref{guideline:increasing_alpha_main} and \ref{guideline:decreasing_alpha_main} in Appendix~\ref{app:alpha_hparam}. We utilize a standard convolutional net for the Q-function (Fig.~\ref{fig:standard}) and the policy. More details on our setup are provided in Appendix~\ref{app:setup_details}.

\textbf{Scenario \#1: Variable amount of training data.} Our first scenario consists of the simulated tasks discussed above with a variable number of trajectories in the training data (50, 100, 500, 10000). We run CQL and track metrics~\ref{guideline:overfitting} and \ref{guideline:underfitting} in each case. Observe in  Figure~\ref{fig:overfitting_grasping_task} (bottom) that with fewer trajectories, the average dataset Q-value $\E_{\bs, \ba \sim \data}[Q_\theta(\bs, \ba)]$ first rises, and then drops. This matches the description of overfitting in Section~\ref{sec:workflow_metrics}. Observe in Figure~\ref{fig:bottleneck_plots} (left) that, at the same time, the value of the CQL regularizer is very low, which is not consistent
\begin{wrapfigure}{r}{0.55\textwidth}
\vspace{-0.5cm}
\begin{center}
\includegraphics[width=\linewidth]{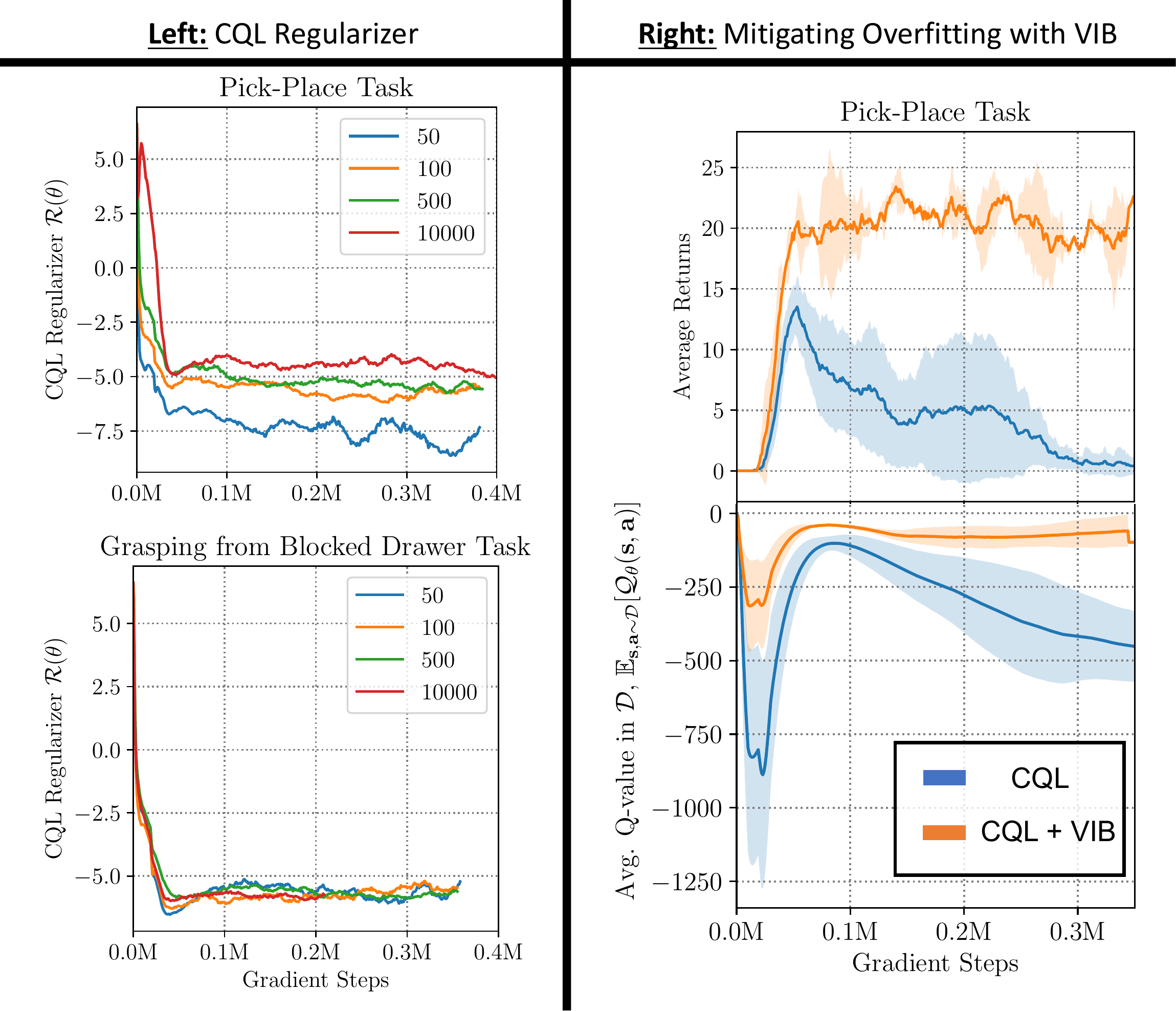}
\vspace{-0.75cm}
\end{center}
\caption{\footnotesize{\label{fig:bottleneck_plots} \textbf{Left:} CQL regularizer attains low values, especially with 50 and 100 trajectories in the pick and place task, \textbf{Right:} Using VIB mitigates overfitting, giving rise to a stable trend in Q-values and better performance which does not degrade with more training steps.}}
\vspace{-0.55cm}
\end{wrapfigure}
with what we expect of underfitting. Thus, we can conclude that these conditions exhibit overfitting, especially with 50 and 100 trajectories. The vertical dashed lines indicate the checkpoints that would be selected for evaluation per Guideline~\ref{guideline:policy checkpoint}. We further visualize the performance of the chosen checkpoints against the actual return of each intermediate policy in Figure~\ref{fig:overfitting_grasping_task} (top). Note that this value is obtained by rolling out the learned policy, and would not be available in a realistic offline RL setting, but is provided only for analysis. Selecting the checkpoint based on Guideline~\ref{guideline:policy checkpoint} leads us to select a model with close to the peak performance over the training process, validating the efficacy of Guideline~\ref{guideline:policy checkpoint}.

Since we detected overfitting by following our workflow, we now aim to address it by using the VIB regularizer in the setting with 100 trajectories. As shown in Figure~\ref{fig:bottleneck_plots} (right), applying this regularizer not only alleviates the drop in Q-values after many training steps, but allows us to pick later checkpoints in training which perform better than base CQL on both the tasks. This validates that overfitting, as detected via our workflow, can be effectively mitigated by decreasing capacity, in this case by using VIB. {We evaluate dropout, $\ell_1$ and $\ell_2$ regularization schemes in Appendix~\ref{app:other_overfitting_corrections}.}

\begin{wrapfigure}{r}{0.72\textwidth}
\vspace{-0.75cm}
\begin{center}
\includegraphics[width=0.32\linewidth]{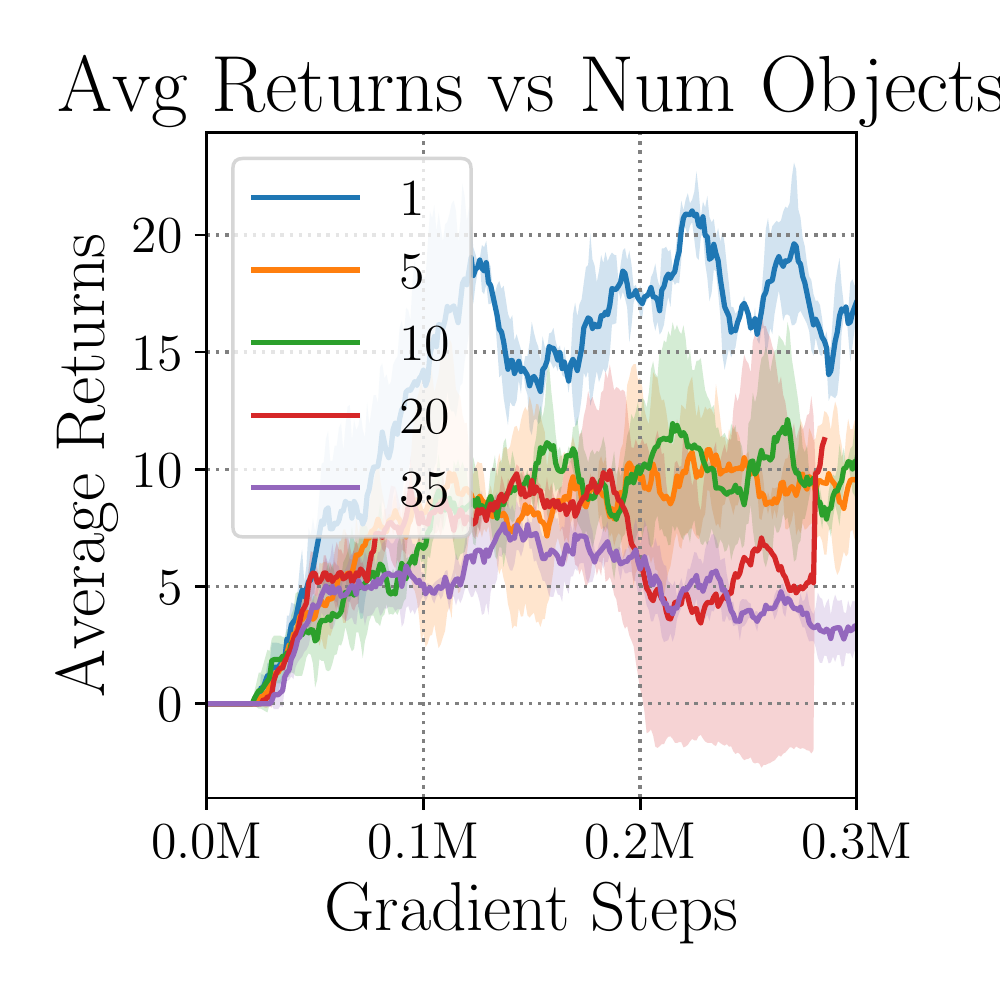}
\includegraphics[width=0.32\linewidth]{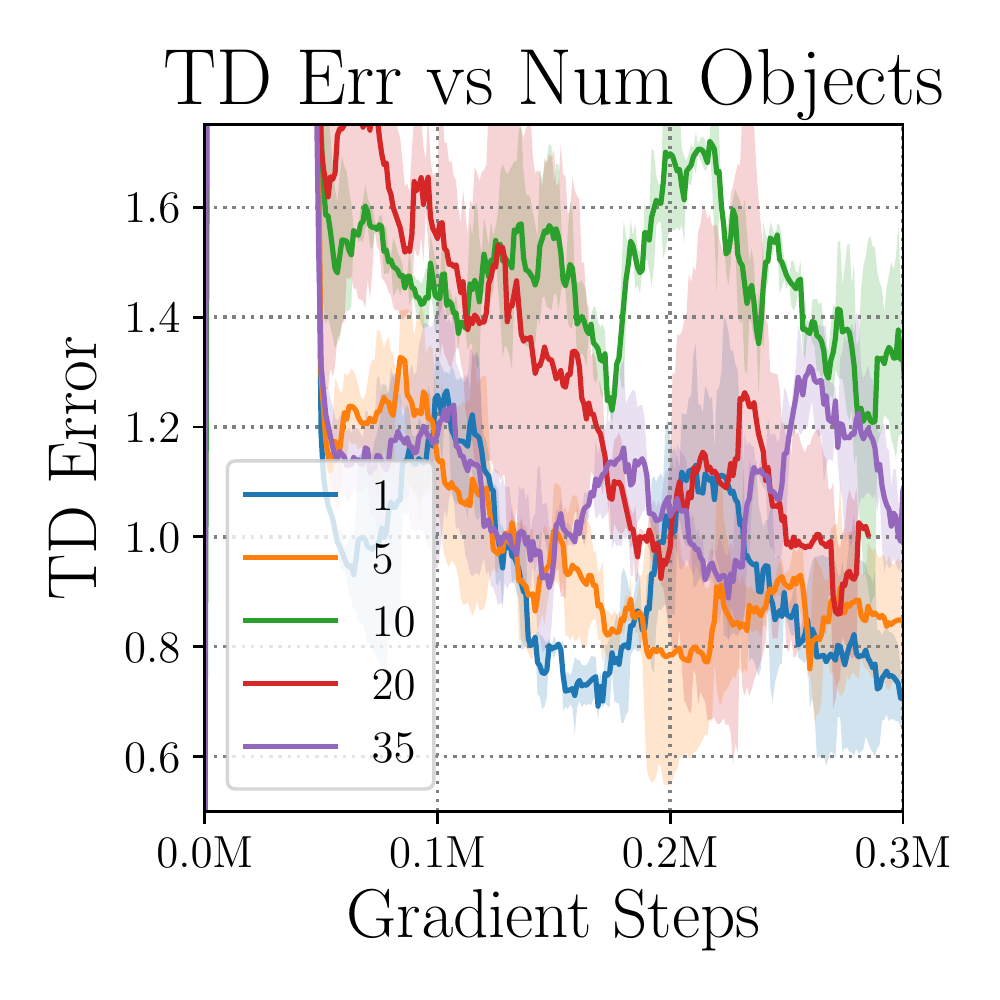}
\includegraphics[width=0.32\linewidth]{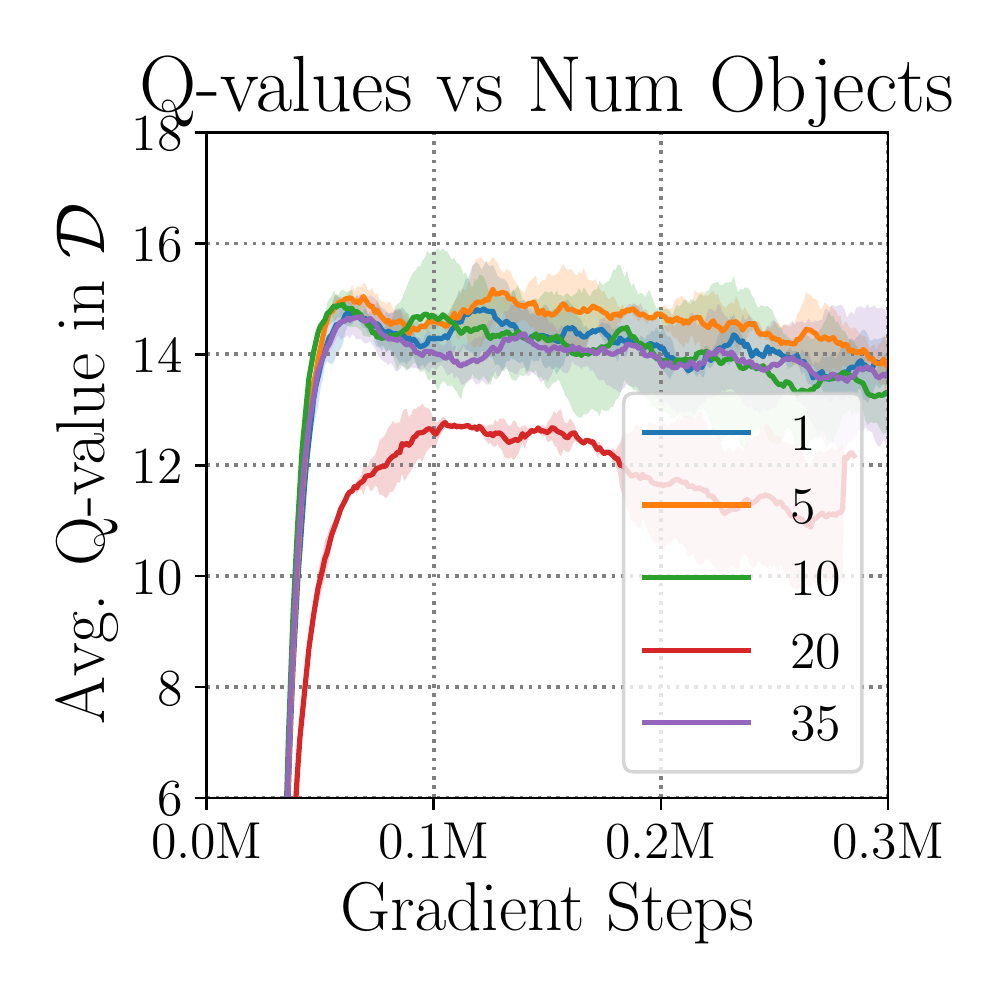}
\vspace{-0.4cm}
\end{center}
\caption{\footnotesize{\label{fig:object_diversity} \textbf{Performance (left), TD error (middle) and average dataset Q-values (right) for the pick and place task with a variable number of objects.} Note that while the learned Q-values increase and stabilize, the TD error values in scenarios with more than 10 objects are large (1.0-2.0). Correspondingly, the performance generally decreases as the number of objects increases.}}
\vspace{-0.35cm}
\end{wrapfigure}
\textbf{Scenario \#2: Multiple training objects.} Our second test scenario consists of the pick and place task, modified to include a variable number of object types (1, 5, 10, 20, 35). Handling more objects requires higher capacity, since each object has a different shape and appearance. In each case, CQL is provided with 5000 trajectories. Following our workflow from Section~\ref{sec:workflow_metrics}, we first compute the average dataset Q-value and the training TD error. We observe in Figure~\ref{fig:object_diversity} that, unlike in Scenario \#1, Q-values do not generally decrease when trained for many steps, suggesting that the Q-function is likely not overfitting. To check for underfitting, we visualize the training TD error and find that, with 10, 20 and 35 objects, TD error magnitudes are in the range of [1.0, 2.0], which suggests a overall Q-value error of [30.0, 60.0] since the task horizon is 30. On an absolute scale, this error magnitude is large: since the rewards are 0/1, the range of difference between actual Q-values for any two policies is at most 30, which suggests that the error magnitude in the runs in Figure~\ref{fig:object_diversity} are high. Hence, we conclude that this scenario generally exhibits underfitting with more objects. Indeed this trend is reflected in the policy performance that we plot for analysis in Figure~\ref{fig:object_diversity}: note that the policy return decreases with an increased number of objects, and the policy performance initially increases and saturates at a suboptimal value in the settings that exhibit underfitting, consistent with Section~\ref{sec:workflow_metrics}. 

\begin{wrapfigure}{r}{0.23\textwidth}
\vspace{-0.3cm}
\begin{center}
\hspace{-10pt} \includegraphics[width=1.1\linewidth]{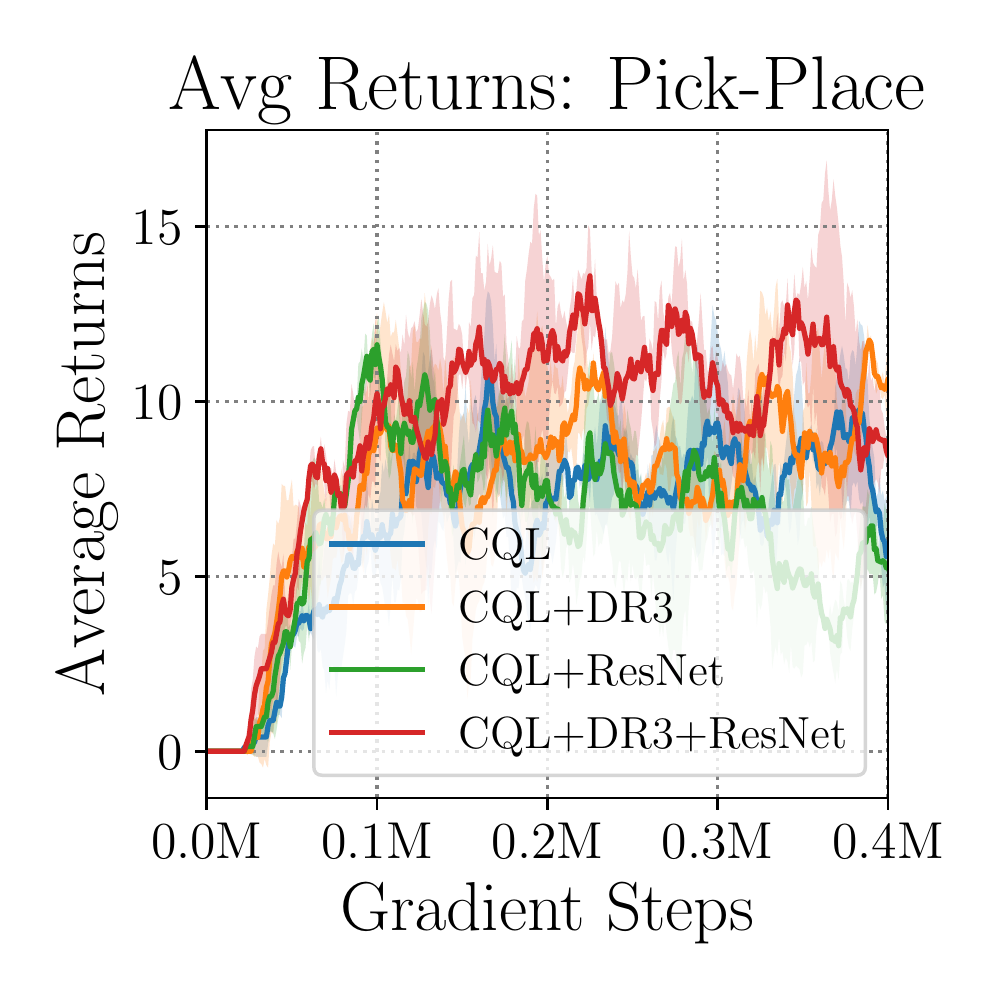}
\vspace{-0.9cm}
\end{center}
\caption{\footnotesize{\label{fig:pickplace35} Our underfitting correction on the 35-object pick-place task.}}
\end{wrapfigure}
To address underfitting detected by our workflow in the multi-object case, we apply the proposed capacity-increasing measures to the 35-object task {(results for 10 and 20 object settings are in Appendix~\ref{app:other_num_objects})}.
We use a more expressive ResNet architecture for the policy and the DR3 regularizer for the Q-function together. Observe in the figure on the right that this combination (shown in red) improves policy performance in this setting (compared to green), which validates our workflow protocol for addressing underfitting. Metrics for this run are provided in Appendix~\ref{app:sim_studies}.

We provide simulated experiments for tuning $\alpha$ in Appendix~\ref{app:alpha_hparam} and also our apply our workflow to effectively tune a different offline RL algorithm, BRAC~\citep{wu2019behavior} in Appendix~\ref{app:brac_example}.

\begin{wrapfigure}{r}{0.6\textwidth}
\vspace{-0.5cm}
\begin{center}
\includegraphics[width=\linewidth]{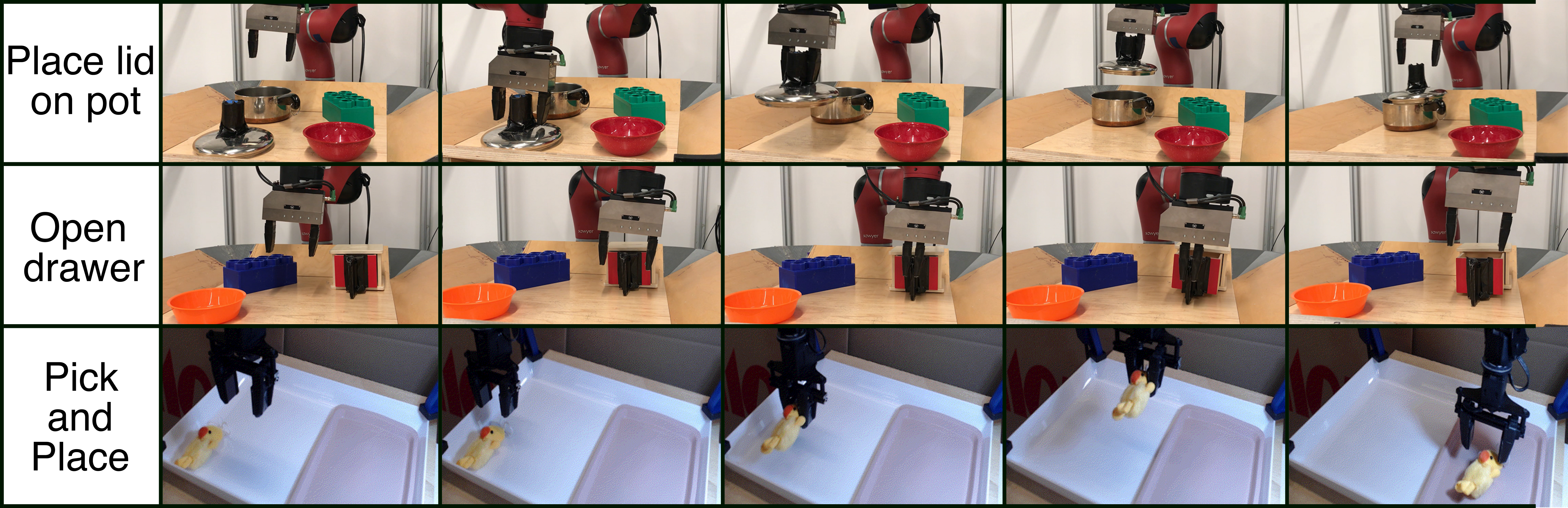}
\vspace{-0.6cm}
\end{center}
\caption{\label{fig:real_tasks} \footnotesize{\textbf{Real-world tasks}. Successful rollouts of CQL tuned with our workflow from Sections~\ref{sec:workflow_metrics} \& \ref{sec:addressing_workflow}. Top to bottom:\; Sawyer lid on pot, Sawyer drawer opening, WidowX pick-place task. }}
\vspace{-0.6cm}
\end{wrapfigure}

\vspace{-5pt}
\section{Tuning CQL for Real-World Robotic Manipulation}
\label{sec:real_world_case}
\vspace{-5pt}

Having evaluated the efficacy of our proposed workflow in simulation, we now utilize our workflow to tune CQL for real-world robotic manipulation. We test in two setups that require the robot to learn from sparse binary rewards and image observations. The settings differ in robot platform, task specification, and dataset size. Additional results and robot videos are at the following website: \url{https://sites.google.com/view/offline-rl-workflow}

\textbf{Sawyer manipulation tasks~\citep{khazatsky2021can}.} First, we train a Sawyer robot in a tabletop setting to perform two tasks: \textbf{(1)} placing the lid onto a pot and \textbf{(2)} opening a drawer. The robot must perform these tasks in the presence of visual distractor objects, as shown in Figure~\ref{fig:real_tasks}. We directly use the dataset of 100 trajectories for each task collected by \citet{khazatsky2021can} for our experiments so as to mimic the real-world use case of leveraging existing data with offline RL. We use four-dimensional actions with 3D end-effector velocity control in $xyz$-space and 1D gripper open/close action. More details regarding the setup are provided in Appendix \ref{app:setup_details}.

\begin{wrapfigure}{r}{0.5\textwidth}
\vspace{-0.85cm}
\begin{center}
\includegraphics[width=0.49\linewidth]{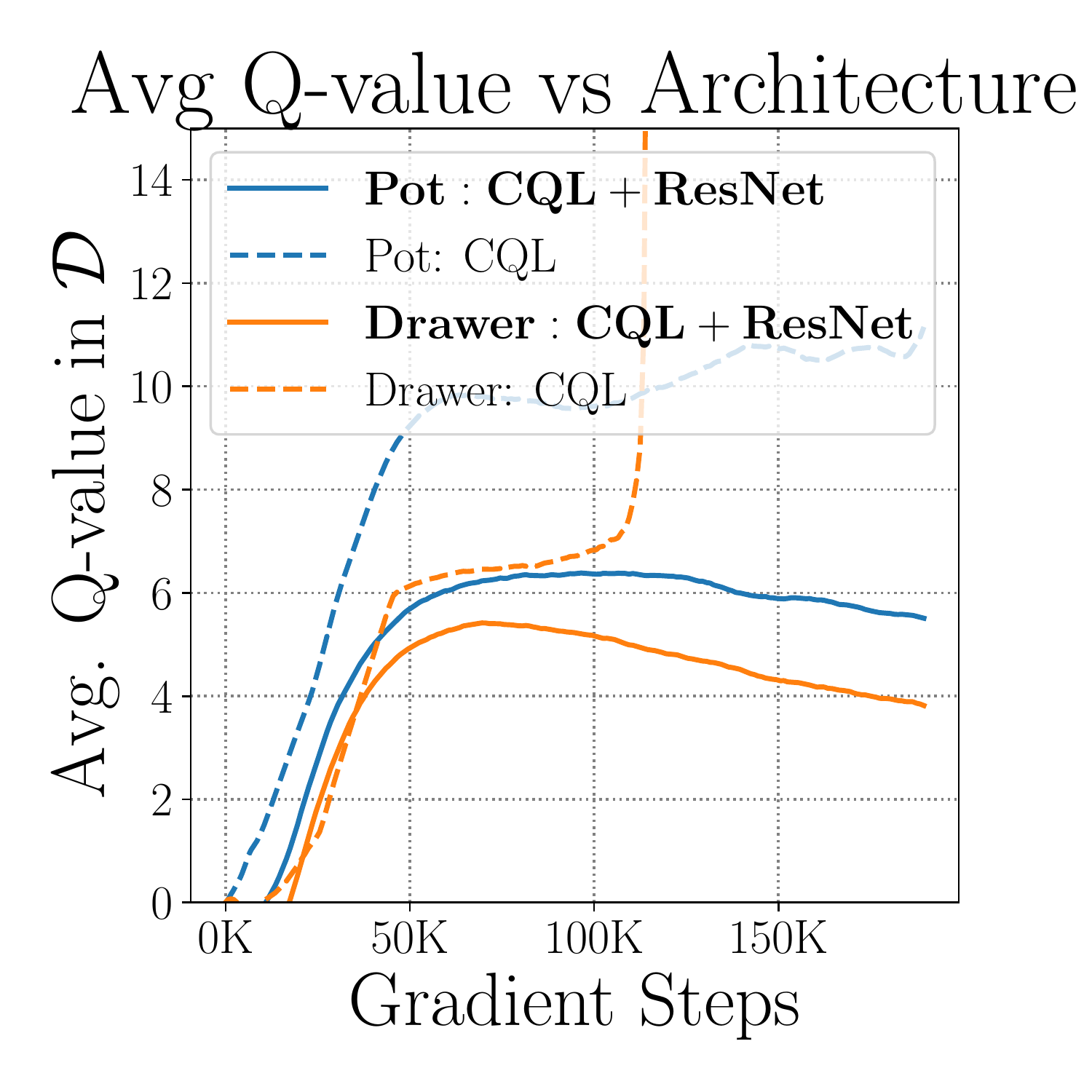}
\includegraphics[width=0.49\linewidth]{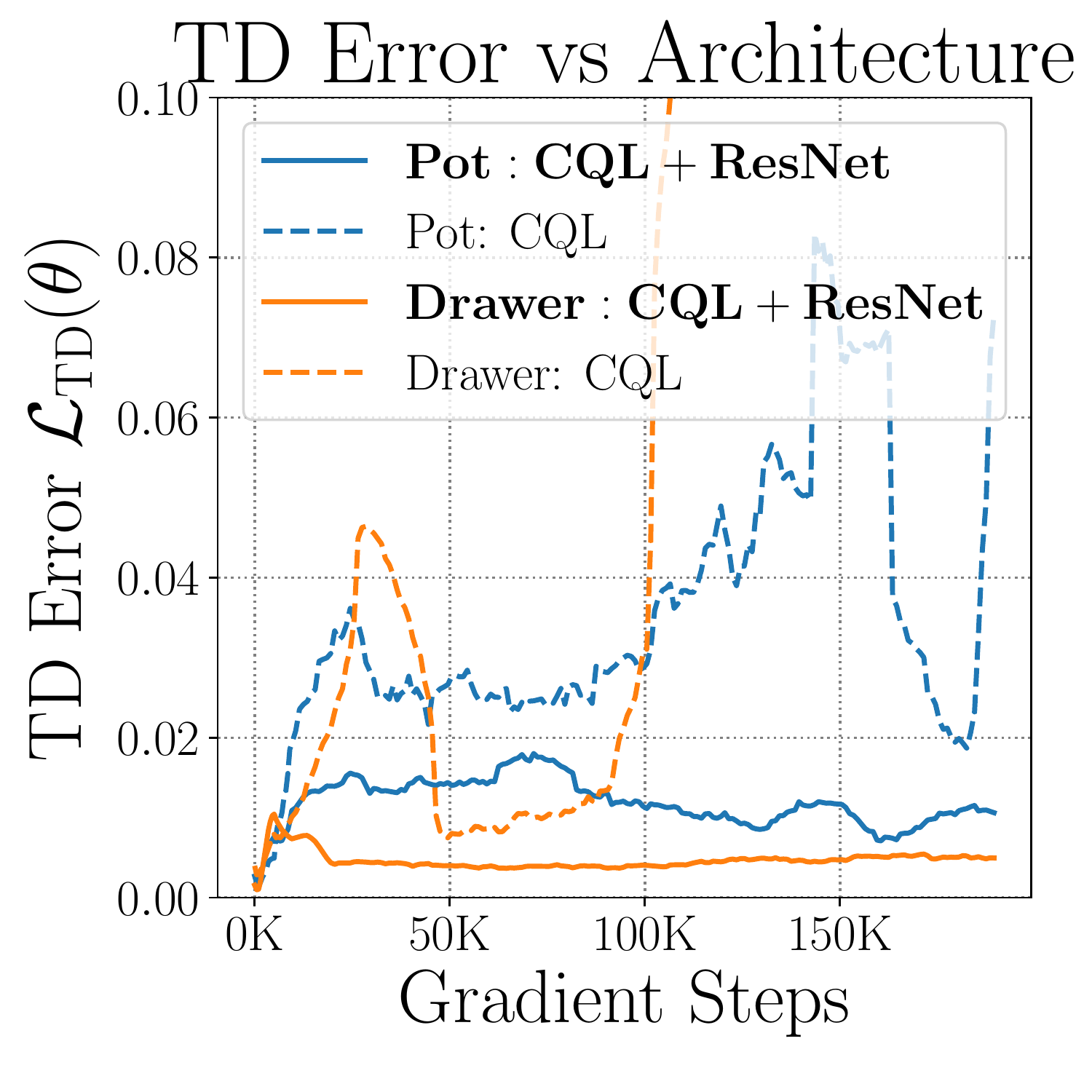}
\vspace{-0.8cm}
\end{center}
\caption{\footnotesize{\label{fig:sawyer_training} Average Q-value and TD error on Sawyer tasks as model capacity increases. Q-values increase over training with lower capacity ruling out overfitting and increasing model capacity leads to a reduction in TD error indicating the presence of underfitting.}}
\vspace{-0.55cm} 
\end{wrapfigure}
We run default CQL on these tasks and track the average Q-value, TD error, and CQL regularizer value. As shown in Figure~\ref{fig:sawyer_training}, the average Q-value does not decrease over training, and the TD error (and CQL regularizer shown in Appendix~\ref{app:real_studies}) is large. Per our discussion in Section~\ref{sec:workflow_metrics}, this indicates underfitting. Following our guidelines from Section~\ref{sec:addressing_workflow}, we utilize a more expressive ResNet
policy (Figure~\ref{fig:architectures}), which increases the number of total convolutional layers from 3 to 9. We observe that this reduces the values of both the TD error Figure~\ref{fig:sawyer_training} and CQL regularizer (Appendix~\ref{app:real_studies}) on both tasks. We then evaluate  the learned policy over 12 trials conducted with different sets of distractor objects, including ones that are unseen during training. While the policy trained using base CQL
is unable to successfully complete either task even once attaining a score of 0/12 on both tasks, the run that uses ResNet
attains a significantly better success rate of \textbf{9/12} on the put lid on pot task and \textbf{8/12} on the drawer opening task, equal to \textbf{70.8\%} success rate on average.

\textbf{WidowX pick and place task.} In our second setting, we tune CQL on a pick and place task with a WidowX 250 robotic arm, shown in Figure~\ref{fig:real_tasks}. The dataset consists of 200 trajectories collected by running a noisy scripted policy (Appendix~\ref{app:setup_details}) with 35\% success. We run CQL on this task and track the average Q-values, which we find initially increase and then decrease (Figure~\ref{fig:bottleneck_plots_real} (left; labeled as ``Q-values'')), indicating overfitting. We then evaluate our policy selection scheme, which in this case suggests deploying checkpoint 50, the immediate checkpoint after the peak in Q-values. 
To see if this checkpoint is effective, we evaluate the performance of a few other policy checkpoints (for analysis only) and plot this performance trend in Figure~\ref{fig:bottleneck_plots_real} (right) as a dashed line.
Observe that indeed the checkpoint found by our approach attains the highest success rate (\textbf{7/9}) compared to other checkpoints, which only succeed $\leq$~4/9 times. This indicates the efficacy of our proposed metrics in identifying overfitting and our policy selection guideline.

\begin{figure}[t]
\centering
\includegraphics[width=0.32\linewidth]{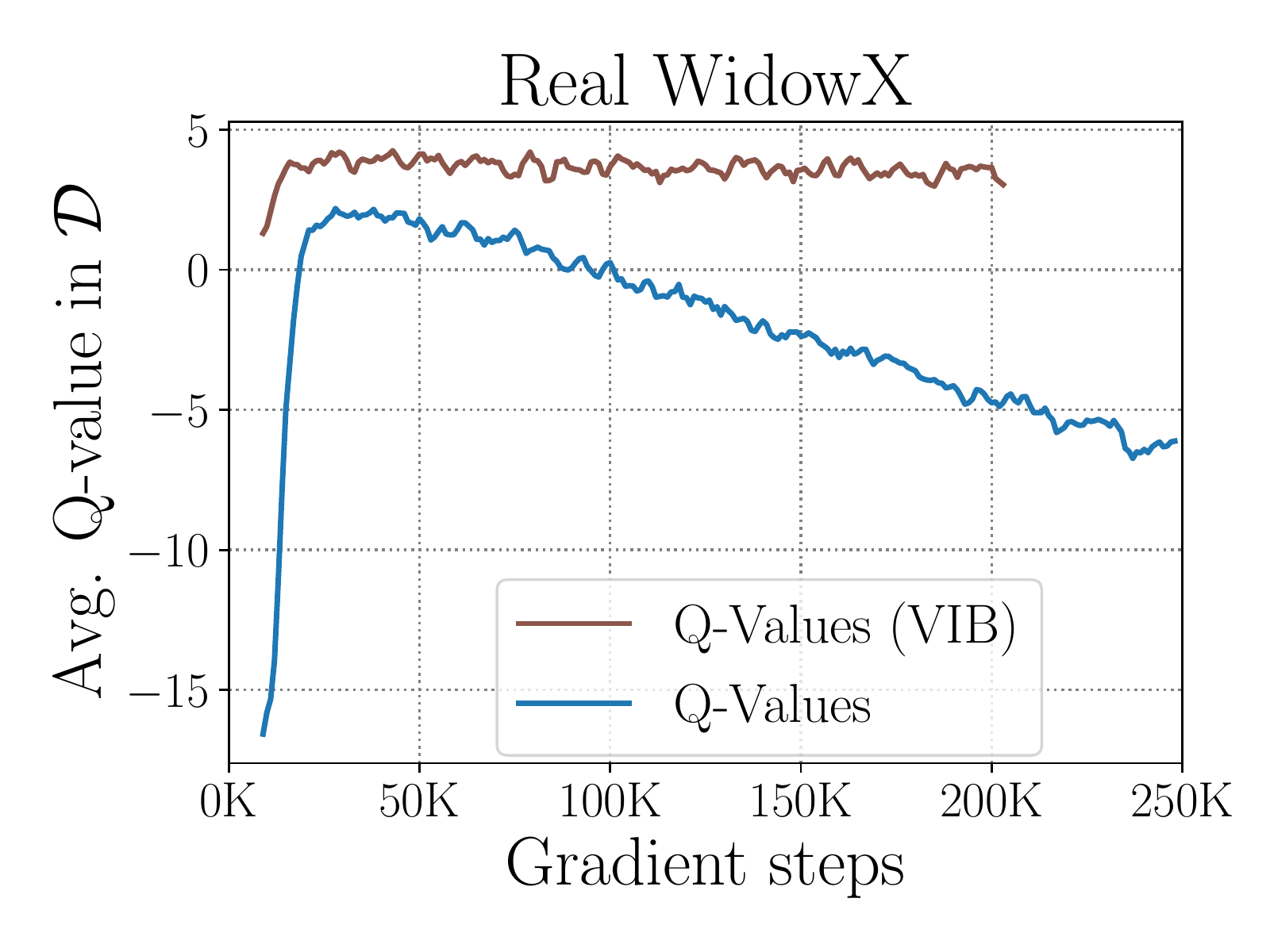}
\includegraphics[width=0.32\linewidth]{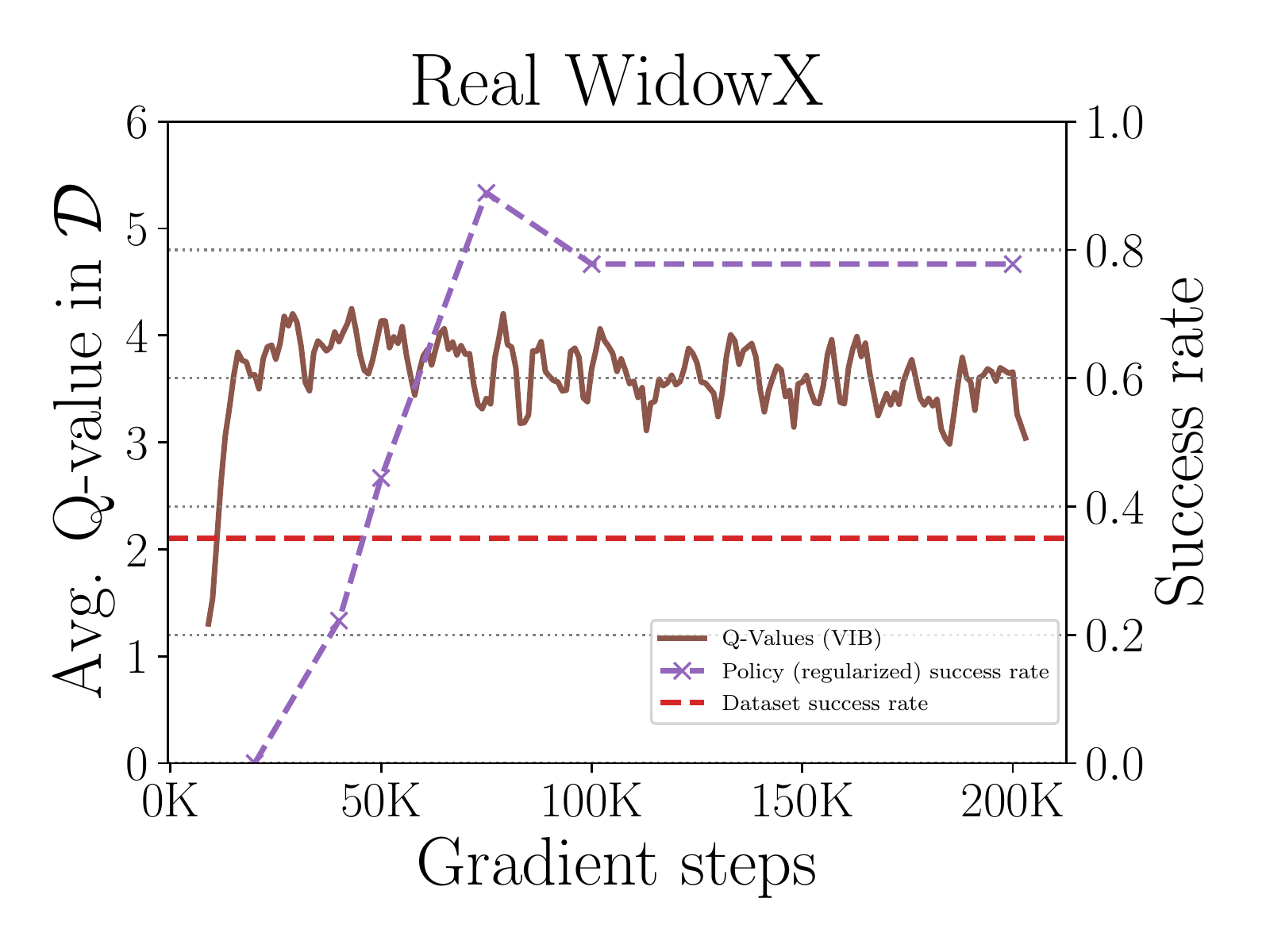} 
\includegraphics[width=0.32\linewidth]{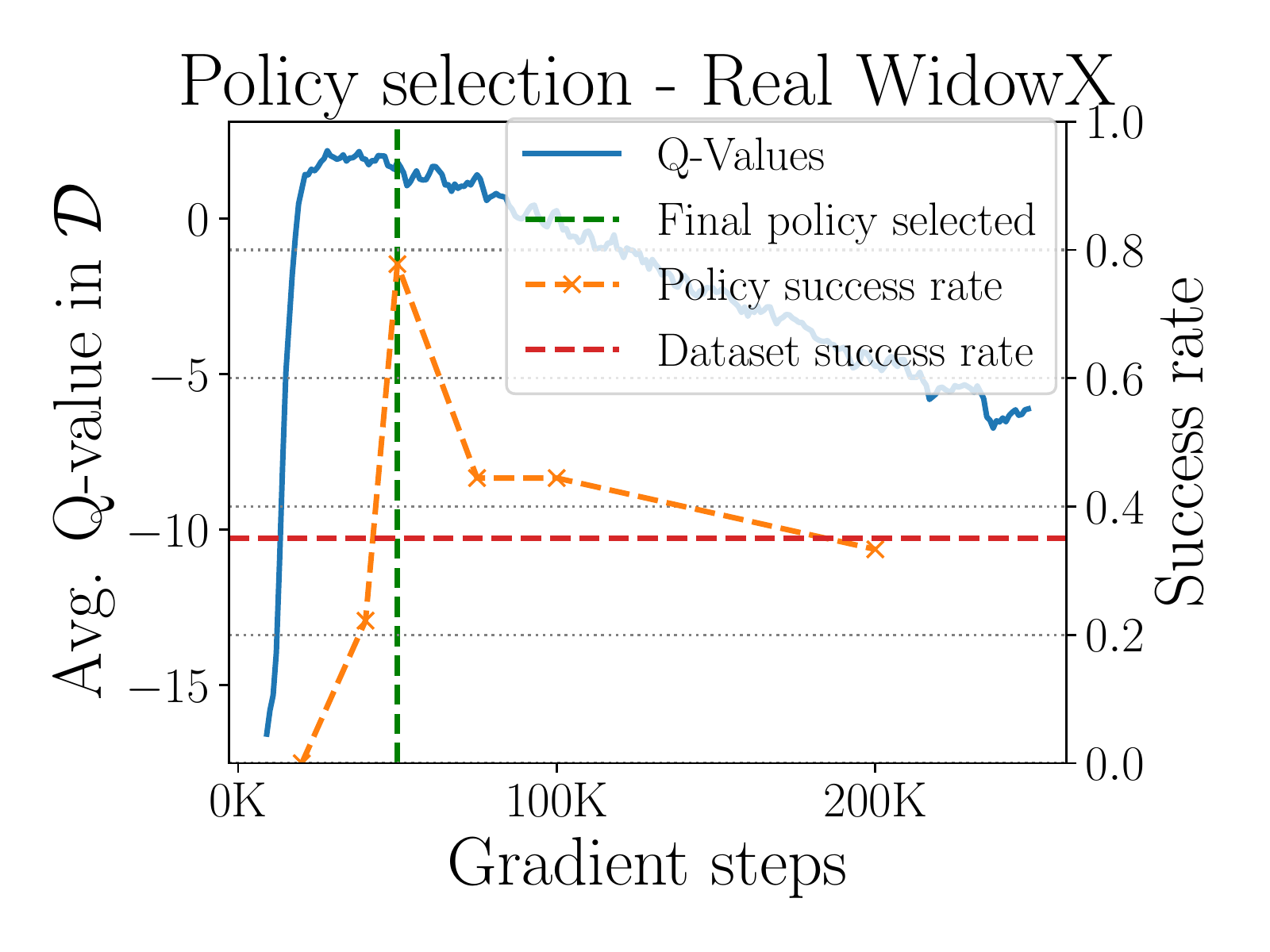}
\vspace{-5pt}
\caption{\label{fig:bottleneck_plots_real} \small{Q-values \textbf{(left)} and performance of CQL with \textbf{(middle)} and without \textbf{(right)} the variational information bottleneck correction for overfitting on the real-world widowX pick and place task. Since the Q-values start to decrease with more training, our workflow detects that CQL is overfitting. Using our policy selection guideline (Guideline~\ref{guideline:policy checkpoint}) enables us to choose checkpoint 50 marked with the green vertical dashed line (right) which performs well. Further, addressing overfitting by applying  the VIB regularizer stabilizes the Q-values (brown) which do not decrease unlike base CQL (blue) (left). Finally, applying the VIB regularizer improves performance and reduces sensitivity to policy selection (middle).}}
\vspace{-15pt}
\end{figure}
Since overfitting is detected, we now turn to addressing overfitting by adding the variational information bottleneck regularizer (Equation~\ref{eqn:bottleneck}) during training. As shown in Figure~\ref{fig:bottleneck_plots_real} (left),the Q-values obtained after the addition of this regularizer (shown in brown; labeled ``Q-values (VIB)'') are now stable -- they increase and stabilize, and do not decrease over the course of training. In this case, our proposed workflow would suggest evaluating any policy checkpoint that attains this stable value. We evaluate four of these policies for visualization purposes in Figure~\ref{fig:bottleneck_plots_real} (middle) and observe that all of these policies attain either a \textbf{7/9} or \textbf{8/9} success rate, comparable or better than the base CQL algorithm shown in Figure~\ref{fig:bottleneck_plots_real} (right). This indicates that addressing overfitting not only leads to some gains in performance but also greatly simplifies policy selection as all checkpoints perform similarly and attain good performance. We summarize these results in Table~\ref{table:real_widowx_results} below, where the bold entries denote the checkpoints found by our policy selection rule.

{\footnotesize
\begin{table}[h]
\begin{center}
    \begin{tabular}{ |p{4cm}|p{1.5cm}|p{1.5cm}|p{1.5cm}|p{1.5cm}|  }
         \hline
         \multicolumn{5}{|c|}{\textbf{Real-World WidowX Pick and Place: Correcting Overfitting}} \\
         \hline
         \textbf{Method} & Epoch 50 & Epoch 75 & Epoch 100 & Epoch 200\\
        %  \hline
        %  100 Trajectories & 0/9 & 2/9 & 3/9 & 0/9 \\
        %  \hline
        %  100 Trajectories + VIB & 3/9 & 3/9 & 3/9 & 2/9 \\
         \hline
         CQL & \textbf{7/9} & 4/9 & 4/9 & 2/9 \\
         \hline
         CQL + VIB & 3/9 & \textbf{8/9} & \textbf{7/9} & \textbf{7/9} \\
         \hline
    \end{tabular}
    \vspace{0.1cm}
    \caption{\label{table:real_widowx_results} \small{Performance of various policy checkpoints of CQL and CQL + VIB on the real WidowX pick and place task. Note that when overfitting is corrected via VIB, multiple policy checkpoints perform well.}}
    \vspace{-0.7cm}
\end{center}
\end{table}
}

These results indicate the effectiveness of our workflow in tuning CQL by addressing overfitting and underfitting on multiple real robot platforms.

\vspace{-5pt}
\section{Discussion}
\vspace{-7pt}
\label{sec:conclusion}
While offline RL algorithms have improved significantly over the past years, applying these methods to real-world robotic domains is still challenging due to little guidance on selecting policies, adding regularization, or modifying model capacity. In this paper, we devise a \emph{workflow} for conservative offline RL algorithms such as CQL, which consists of a set of metrics and conditions that can be tracked by a practitioner over the course of offline training as well as a set of recommendations to addresses the resulting overfitting and underfitting challenges. We use our workflow to tune CQL on a number of robotic manipulation problems with diverse robot types, multiple objects, and long horizons, all while learning without online interaction from image observations and sparse rewards. Both in simulation and the real world, we observe strong performance benefits.

While our proposed workflow is an initial step towards practical robotic offline RL and is based on our best conceptual understanding of certain offline RL algorithms, these guidelines are heuristic. The validation of our workflow is largely empirical in nature. However, we believe to some extent this is unavoidable, since a workflow is a set of guidelines and recommendations, rather than a rigid algorithm. Regardless of how theoretically justified it is, in the end, its value is determined by its ability to produce good results in practice. Our validation consists of a series of case studies, and although the results (as with any empirical finding) are specific to the used domains, we believe the breadth of tasks considered, which consist of two different real robots and multiple simulated tasks, indicates the broad applicability of our approach. Our workflow is also specific to conservative offline RL methods, and particularly CQL and BRAC. Extending our technique to devise similar workflows for other algorithms is an exciting direction for future work. Deriving theoretical guarantees regarding workflows of this type is also an important direction for future research. We hope that this work allows future development into theoretically justified workflows for offline RL algorithms.

\vspace{-5pt}
\section*{Acknowledgements}
\vspace{-5pt}
We thank Ilya Kostrikov, Avi Singh, Ashvin Nair, Alexander Khazatsky, Albert Yu, Jedrzej Orbik, and Jonathan Yang for their help with setting up and debugging various aspects of the experimental setup as well as for providing us with offline datasets we could test our workflow on. We thank Dibya Ghosh, anonymous reviewers, and the area chair from CoRL for constructive feedback on an earlier version of this paper. AK thanks George Tucker and Rishabh Agarwal for valuable discussions. This research was funded by the DARPA Assued Autonomy Program and compute support from Google and Microsoft Azure.

\bibliography{example} 

\newpage
\appendix
\part*{Appendices}

\begin{figure}
    % \vspace{-3em}
    \centering
    \begin{subfigure}[m]{0.47\textwidth}
    \includegraphics[width=\linewidth]{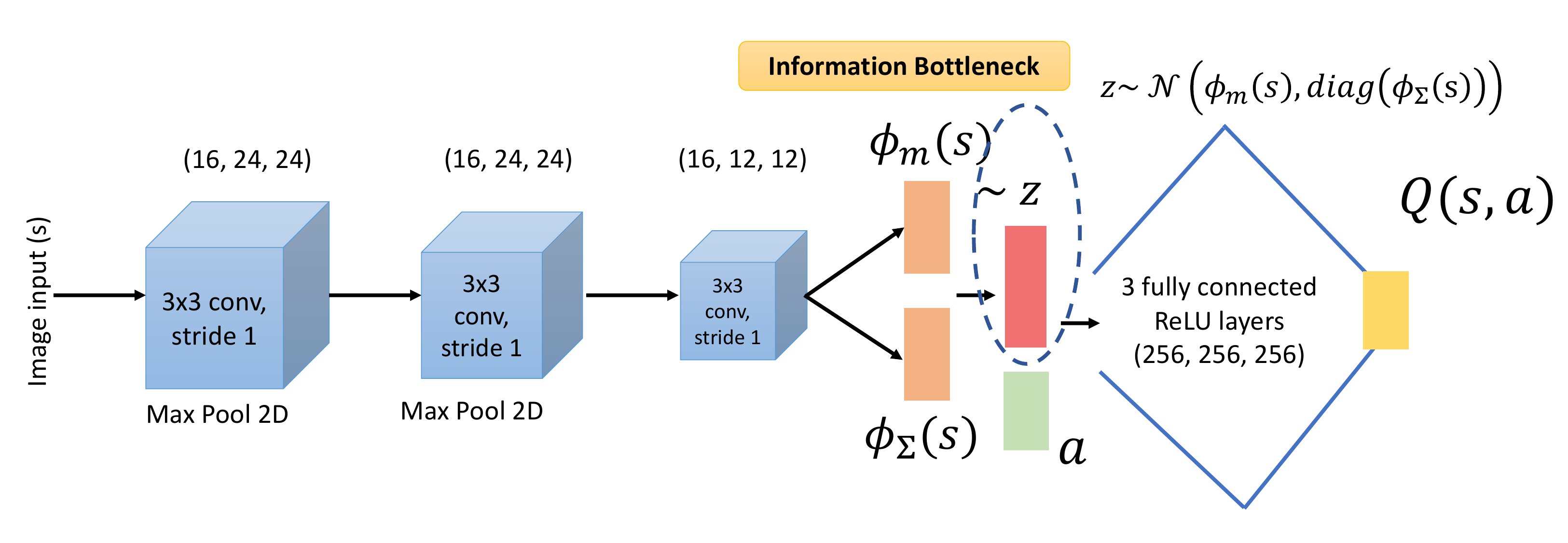}
    \end{subfigure}
    \hfill
    \begin{subfigure}[m]{0.47\textwidth}
        \includegraphics[width=\textwidth]{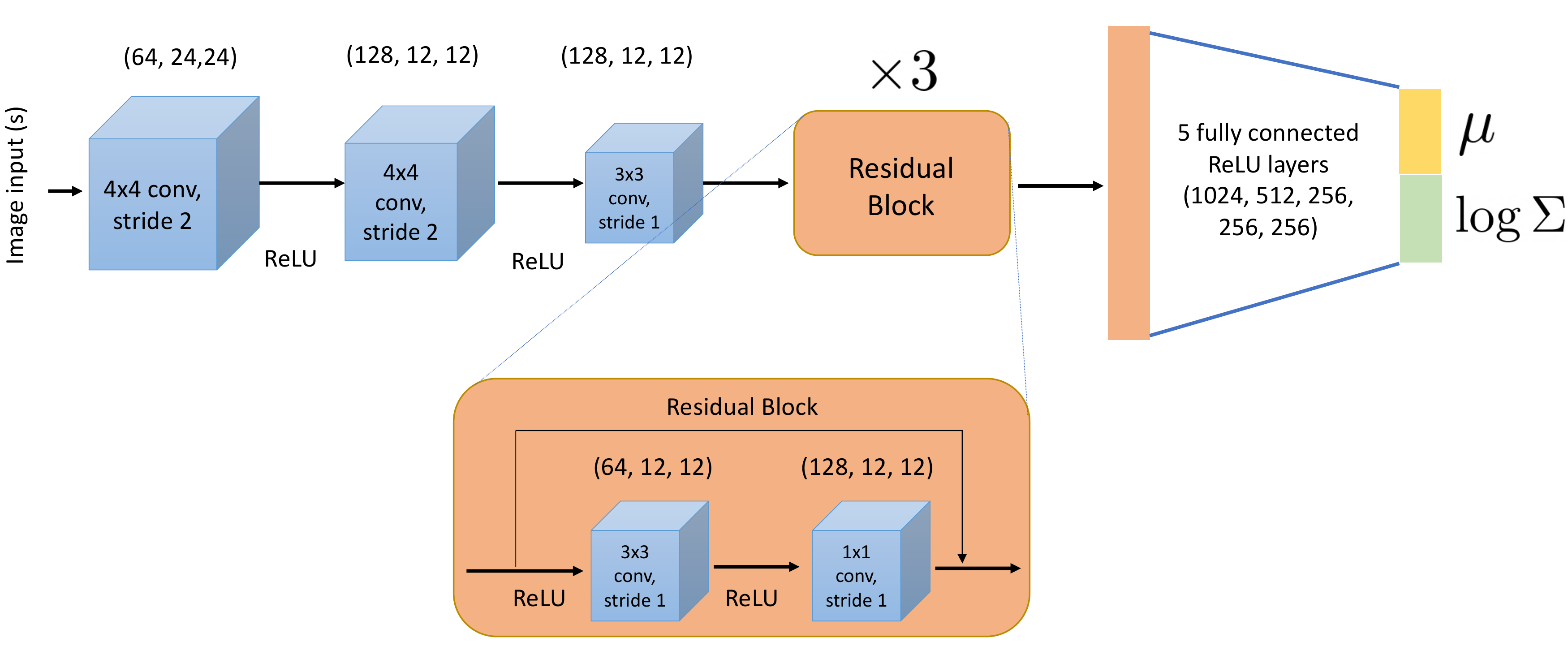}
     \end{subfigure}
    \caption{Architectures for addressing \figleft \; underfitting, using an information bottleneck regularizer, and \figright \; overfitting, using a residual network. 
        \label{fig:architectures}}
    % \vspace{-1.5em}
\end{figure}

\begin{figure}
% \vspace{-0.5cm}
\centering
\includegraphics[width=0.8\linewidth]{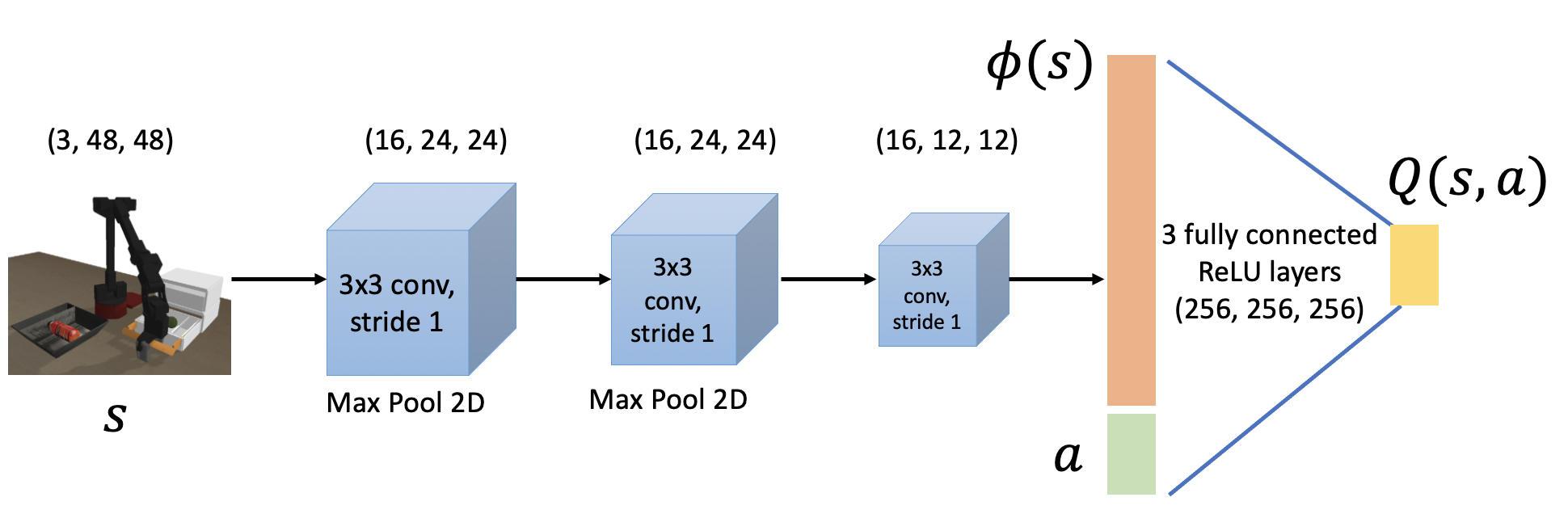}
\caption{\label{fig:standard} Standard architecture for the Q-function used in our experiments. We build on the code provided in \citet{singh2020cog} and utilize their default architecture.}
\end{figure}

\section{Additional Discussion of Overfitting and Underfitting}
\label{app:overfitting_and_underfitting}

In this section, we shall discuss additional details pertaining to various metrics and protocols for detecting overfitting and underfitting discussed in Section~\ref{sec:workflow_metrics}. We first formalize our insight as to why decreasing Q-values as a result of more gradient steps are indicative of overfitting in offline RL and then provide additional discussion about underfitting.

\subsection{Overfitting in CQL}
\label{sec:overfitting_more}
Our proposed workflow in Section~\ref{sec:workflow_metrics} characterizes overfitting in CQL as a non-monotonic, first increasing and then decreasing trend in the average dataset Q-value. To understand why this trend can be used to characterize overfitting, i.e., a reduction in the test objective $J(\pi)$ (the actual return of the learned policy) as per our definition in Section~\ref{sec:background}, Table~\ref{tab:summary}, we first characterize conditions under which CQL (Equation~\ref{eqn:cql_training}) Q-values averaged over the samples in the training dataset cannot exhibit a decreasing trend with more iterations of of training. To derive these conditions, we operate in the regime where the policy $\pi_\phi$ is trained to exactly maximize the Q-value, $\E_{\bs \sim \data, \ba \sim \pi_\phi(\cdot|\bs)}[Q_\theta(\bs, \ba)]$.

\textbf{Notation and Assumptions.} In order to understand the trend in the average dataset Q-value observed in our experiments, we consider a slightly modified variant of Equation~\ref{eqn:cql_training} marked with indices that denote the iteration of learning $k=1, 2, \cdots$:

\begin{equation}
\label{eqn:modified_cql}
    Q^{k+1} \leftarrow \arg\min_Q~ \alpha \left(\E_{\bs, \ba \sim \data, \mu}[Q(\bs, \ba)] - \E_{\bs, \ba \sim \data}[Q(\bs, \ba)]\right) + \frac{1}{2} \E_{\bs, \ba, \bs' \sim \data}\left[ (Q - \bellman^{\pi^k} Q^k)^2 \right],
\end{equation}
where $\pi^k$ is the policy that maximizes the current Q-function, $Q^k$. Thus, variant of CQL shown in Equation~\ref{eqn:modified_cql} implements \textit{exact} policy optimization at each step of training $k$: $\forall \bs, \ba, ~~\pi^k(\ba|\bs) = \mathbb{\delta}[\ba = \arg\max_{\ba'} Q^k(\bs, \ba')]$. Arguably, this is closer to how CQL (and other actor-critic algorithms) are performed in practice -- rather than performing a complete evaluation of the learned policy and only then performing policy improvement, these practical approaches perform alternating iterations of policy evaluation and improvement. The Q-learning variant of CQL~\citep{kumar2020conservative} actually performs exact policy improvement for each step, which is exactly what is shown in Equation~\ref{eqn:modified_cql}. As a result, we analyze Equation~\ref{eqn:modified_cql}. Our goal will be to understand the conditions under which the learned Q-values, averaged over the dataset, can exhibit a decreasing trend with more training.    

\begin{theorem}[Characterizing a decreasing trend in Q-values]
When running CQL using updates in Equation~\ref{eqn:modified_cql} in a tabular setting, using the policy $\pi^k(\ba|\bs) = \mathbb{\delta}[\ba = \arg\max_{\ba'} Q^k(\bs, \ba')]$, the expected Q-value on the dataset, i.e., $f(k) := \E_{\bs, \ba \sim \mathcal{D}}[Q^k(\bs, \ba)]$, is a non-decreasing function of iteration $k$, i.e., $f(k+1) \geq f(k)$, whenever either of the two conditions hold:
\begin{enumerate}
    \item The learned average dataset Q-value is smaller than the Monte-Carlo return of the dataset: $f(k) \leq \frac{1}{1 - \gamma} \E_{\bs, \ba \sim \data}[r(s, a)]$ ~~~~ (expected dataset return), or,
    \item The gap between the maximal value of the learned Q-function $\max_{\ba} Q^k(\bs, \ba)$ and the Q-function value at a different action $\ba'$ at a given state $\bs$ in expectation over all dataset states is large enough, i.e., $\E_{\bs \sim \data}[\max_{\ba} Q^k(\bs, \ba) - Q^k(\bs, \ba')] \geq \zeta$, where $\zeta$ depends inversely on the density of the action $\arg \max_{\ba} Q^k(\bs, \ba)$ under the behavior policy $\pi_\beta(\cdot|\bs)$.
\end{enumerate}
\label{thm:cql_no_reduce}
\end{theorem}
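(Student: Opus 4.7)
The plan is to solve the CQL update in Equation~\ref{eqn:modified_cql} in closed form in the tabular setting, then track how the dataset-averaged Q-value $f(k) = \E_{\bs \sim \hat{d}, \ba \sim \hatbehavior(\cdot|\bs)}[Q^k(\bs, \ba)]$ evolves across one iteration. Here $\hat{d}(\bs)$ is the empirical state distribution and $\hatbehavior(\ba|\bs)$ the empirical behavior policy induced by $\data$. The objective in Equation~\ref{eqn:modified_cql} is strictly convex and quadratic in $Q$, so setting its pointwise derivative with respect to $Q(\bs, \ba)$ to zero gives
\[ Q^{k+1}(\bs, \ba) \;=\; \bellman^{\pi^k} Q^k(\bs, \ba) \;-\; \alpha \, \frac{\mu(\ba|\bs) - \hatbehavior(\ba|\bs)}{\hatbehavior(\ba|\bs)}. \]

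Next I would take $\E_{\bs \sim \hat{d}, \ba \sim \hatbehavior}[\cdot]$ of both sides. The key algebraic observation is that the CQL penalty term vanishes on the data distribution because
\[ \textstyle\E_{\bs \sim \hat{d}}\bigl[\sum_{\ba} \hatbehavior(\ba|\bs)\cdot\tfrac{\mu(\ba|\bs)-\hatbehavior(\ba|\bs)}{\hatbehavior(\ba|\bs)}\bigr] = \E_{\bs \sim \hat{d}}\bigl[\sum_{\ba}\mu(\ba|\bs) - \sum_{\ba}\hatbehavior(\ba|\bs)\bigr] = 0, \]
since both $\mu(\cdot|\bs)$ and $\hatbehavior(\cdot|\bs)$ are normalized distributions. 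Using the hypothesis that $\pi^k$ is greedy w.r.t.\ $Q^k$, so $\E_{\ba' \sim \pi^k}[Q^k(\bs', \ba')] = \max_{\ba'} Q^k(\bs', \ba')$, and plugging in the Bellman backup yields
\[ f(k+1) \;=\; \bar r \;+\; \gamma \, \E_{\bs' \sim \hat{d}'}\bigl[\max_{\ba} Q^k(\bs', \ba)\bigr] \;=:\; \bar r + \gamma V^k, \]
where $\bar r := \E_{\bs, \ba \sim \data}[r(\bs, \ba)]$ and $\hat{d}'$ denotes the empirical next-state distribution along transitions in $\data$. The question then reduces to when $f(k+1) - f(k) = \bar r + \gamma V^k - f(k) \geq 0$.

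Condition 1 follows immediately in the regime $\hat{d}' \approx \hat{d}$ (which holds, e.g., when the dataset is drawn from the stationary distribution of $\pi_\beta$), because the max over actions dominates any action expectation, in particular the one under $\hatbehavior$, so $V^k \geq f(k)$; combined with $(1-\gamma)f(k) \leq \bar r$ this gives $\bar r + \gamma V^k - f(k) \geq \gamma(V^k - f(k)) \geq 0$. For Condition 2 I would expand the advantage as $V^k - f(k) = \E_{\bs \sim \hat{d}}\bigl[\sum_{\ba'} \hatbehavior(\ba'|\bs)\,(\max_{\ba} Q^k(\bs, \ba) - Q^k(\bs, \ba'))\bigr]$ and isolate the contribution of a single non-arg-max action $\ba'$: its contribution is its per-state gap times $\hatbehavior(\ba'|\bs)$, which is at most $1 - \hatbehavior(\arg\max_{\ba} Q^k(\bs, \ba) \mid \bs)$. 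This explains the inverse dependence of $\zeta$ on the behavior-policy density of the arg-max action stated in the theorem, with the threshold taking the form $\zeta \approx ((1-\gamma) f(k) - \bar r)/(\gamma(1 - \hatbehavior(\arg\max)))$ so that the weighted gap is large enough to compensate the shortfall $f(k) - \bar r/(1-\gamma)$.

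The main obstacle is handling the mismatch between $\hat{d}$ and $\hat{d}'$, since the clean inequality $V^k \geq f(k)$ and the derivation of Condition 2 both rely on their being comparable; for stationary datasets this is free, but for generic offline data the mismatch must either be pushed into the empirical MDP abstraction used in Appendix~\ref{app:which_algos} or absorbed additively into $\zeta$. A secondary subtlety is that the argument as written assumes the hard-max Q-learning form of Equation~\ref{eqn:modified_cql}; for the actor--critic variant the penalty-vanishing identity $\sum_\ba(\mu - \hatbehavior) = 0$ still holds for any valid $\mu$, but $V^k$ must be redefined as $\E_{\ba \sim \mu} Q^k$ and Condition 2 restated using the gap $\E_{\ba \sim \mu}Q^k - \E_{\ba \sim \hatbehavior}Q^k$.
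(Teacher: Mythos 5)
Your setup and your treatment of Condition 1 are essentially identical to the paper's proof: both solve Equation~\ref{eqn:modified_cql} pointwise to get $Q^{k+1} = \bellman^{\pi^k}Q^k - \alpha(\mu/\hatbehavior - 1)$, observe that the penalty vanishes in expectation under the empirical behavior policy, obtain $f(k+1) = \bar r + \gamma\,\E_{\bs}[\max_\ba Q^k(\bs,\ba)]$ (the paper likewise invokes stationarity of the data distribution to identify the next-state marginal with $\hat d$), and decompose $f(k+1)-f(k)$ into the shortfall term $\bar r - (1-\gamma)f(k)$ plus $\gamma$ times the nonnegative advantage term $\E_\bs[\max_\ba Q^k - \E_{\hatbehavior}Q^k]$. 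Where you diverge is Condition 2. The paper does not derive $\zeta$ from the behavior-policy weighting inside the advantage term as you do; instead it runs an induction comparing $f(k+1)-f(k)$ to $f(k)-f(k-1)$, splits into cases according to whether $\arg\max_{\ba'}Q^k(\bs',\ba')$ coincides with $\arg\max_{\ba'}Q^{k-1}(\bs',\ba')$, and extracts $\zeta = \E_{\bs'}[\alpha/\pi_\beta(\ba_2|\bs')]$ from the \emph{pointwise} CQL penalty $-\alpha\mu/\pi_\beta$ that survives inside $Q^k$ at next-state arg-max actions even though it cancels on the dataset average. This is the one mechanism your sketch misses: the paper's $\zeta$ scales with $\alpha$ and is inversely proportional to the behavior density of a specific (switched-to) action, which is exactly the dependence claimed in the theorem statement, whereas your $\zeta \approx ((1-\gamma)f(k)-\bar r)/(\gamma(1-\hatbehavior(\arg\max)))$ contains no $\alpha$ and \emph{increases} as the arg-max action becomes more likely under $\hatbehavior$ --- the opposite monotonicity. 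There is also a direction-of-inequality slip in your Condition 2 step: to turn a lower bound on the per-state gap into a lower bound on its weighted contribution to the advantage, you need a lower bound on the weight $\hatbehavior(\ba'|\bs)$, not the upper bound $1-\hatbehavior(\arg\max|\bs)$ that you use. That said, your one-step bound is arguably a cleaner and more honest reading of the decomposition than the paper's case analysis (which is itself heuristic at the step where the offset is assumed not to ``fully compensate''), and your explicit flagging of the $\hat d$ versus $\hat d'$ mismatch and of the actor--critic versus hard-max distinction addresses assumptions the paper uses implicitly.
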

\begin{proof}
To prove this result, we build on the analysis in \citet{kumar2020conservative} and find that the Q-function at iteration $k+1$ can be written as follows:
\begin{equation*}
    Q^{k+1}(\bs, \ba) := \left(\bellman^{\pi_k} Q^k\right)(\bs, \ba) - \alpha \left(\frac{\mu(\ba|\bs)}{\pi_\beta(\ba|\bs)} - 1 \right),
\end{equation*}
where $\pi_\beta(\ba|\bs)$ denotes the behavior policy action-conditioned on state marginal in the dataset $\mathcal{D}$. The average Q-value in the dataset is thus given by:
\begin{align*}
    f(k+1) &= \E_{\bs, \ba \sim \data}[Q^{k+1}(\bs, \ba)]\\
    &= \E_{\bs, \ba \sim \data}\left[\left(\bellman^{\pi^k} Q^k\right)(\bs, \ba)\right] - \alpha \E_{\bs \sim \data, \ba \sim \pi_\beta(\ba|\bs)}\left[ \frac{\mu(\ba|\bs)}{\pi_\beta(\ba|\bs)} - 1 \right]\\
    &= \E_{\bs, \ba \sim \data} \left[r(\bs, \ba) + \gamma \E_{\bs' \sim P(\bs'|\bs, \ba)}[\max_{\ba'} Q^k(\bs', \ba')] \right] - 0.
\end{align*}
Now, we consider the behavior of the above quantity $f(k)$, when the reward function $r(\bs, \ba) \geq 0$, and in particular, for our domains of interest, $\forall \bs, \ba, ~r(\bs, \ba) = 0$ or $r(\bs, \ba) = 1$. When the initial value function $\forall \bs, \ba,~Q^0(\bs, \ba) = 0$, we now wish to characterize conditions under which the Q-function iterates $Q^1, \cdots, Q^k, \cdots$ are monotonically increasing in expectation, i.e.,
\begin{equation*}
    \forall \bs, \ba, ~~\E_{\bs, \ba \sim \data}[Q^1(\bs, \ba)] \leq \E_{\bs, \ba \sim \data}[Q^2(\bs, \ba)] \leq \cdots \leq \E_{\bs, \ba \sim \data}[Q^k(\bs, \ba)] \leq \cdots.
\end{equation*}
We can analyze this progression using mathematical induction. To first prove the base case, note that since $Q^0(\bs, \ba) = 0$ (initialization), $f(1) = \E_{\bs, \ba \sim \data}[r(\bs, \ba)]$, and 
\begin{equation*}
f(2) = \E_{\bs, \ba \sim \data}[r(\bs, \ba) + \gamma \max_{\ba'} Q^1(\bs', \ba')] \geq \underbrace{\E_{\bs,\ba \sim \data}[r(\bs, \ba)]}_{=f(1)} + \gamma \underbrace{\E_{\bs', \ba' \sim \data P^{\pi_\beta}} [Q^1(\bs', \ba')]}_{\geq 0},    
\end{equation*}
since in expectation, $\E_{\bs', \ba' \sim \data P^{\pi_\beta}}[Q^1(\bs', \ba')] = \E_{\bs, \ba \sim \data}[r(\bs, \ba)]$, where $(\bs', \ba') \sim \data P^{\pi_\beta} = \mathcal{D}$, since the dataset distribution is the stationary state-action visitation distribution of the behavior policy $\pi_\beta$. Thus, we find that $f(2) \geq f(1)$, proving the base case for induction,

Now we assume that upto a given k, $\forall j \in [k], f(j) \geq f(j-1)$. Then, our aim is to derive the condition that $f(k+1) \geq f(k)$. To show this, we write out the expressions:
\begin{align}
    \nonumber
    f(k+1) &= \E_{\bs, \ba \sim \data}\left[r(\bs, \ba) + \gamma \E_{\bs'} \left[ \max_{\ba'} Q^k(\bs', \ba')\right] \right]\\
    \label{eqn:expressions}
    f(k) &= \E_{\bs, \ba \sim \data}\left[r(\bs, \ba) + \gamma \E_{\bs'} \left[ \max_{\ba'} Q^{k-1}(\bs', \ba') \right] \right],
\end{align}
and then expand $f(k+1) - f(k)$:
\begin{align*}
    f(k+1) - f(k) &= \E_{\bs, \ba \sim \data}[r(\bs, \ba)] + \gamma \E_{\bs'}[\max_{\ba'} Q^k(\bs, \ba)] - \E_{\bs, \ba \sim \data}[Q^k(\bs, \ba)]\\
    &= \underbrace{\E_{\bs, \ba \sim \data}[r(\bs, \ba)] - (1 - \gamma) f(k)}_{\text{(a)}} + \gamma \underbrace{\E_{\bs \sim \data}\left[ \max_{\ba} Q^k(\bs, \ba) - \E_{\pi_\beta}[Q^k(\bs, \ba)] \right]}_{\text{(b)}}.
\end{align*}

First, by definition note that $(b) \geq 0$. And term $(a) \geq 0$ for iterations $k$ where $f(k) \leq \frac{\E_{\bs, \ba \sim \data}[r(\bs, \ba)]}{1 - \gamma}$, which occurs whenever the average dataset Q-value, $f(k)$ is smaller than the dataset discounted cumulative reward. Thus, whenever the dataset Q-value is smaller than the average dataset discounted cumulative reward, $(a) \geq 0$, $(b) \geq 0$ implying that $f(k+1) \geq f(k)$.

Now, let's consider the case when the average dataset Q-value is smaller than the expected cumulative reward in the dataset and characterize the conditions under which the Q-values will exhibit a non-decreasing trend in this case. To characterize this condition, we begin with a direct difference of the expressions for $f(k)$ and $f(k-1)$ in Equation~\ref{eqn:expressions} and analyze the difference in Q-values from consecutive Q-function iterates at arg-max actions $\ba'$ at the next state $\bs'$. For all iterations $k$, where $(a) \leq 0$, i.e., the average dataset Q-value is higher than the dataset discounted cumulative reward, consider the expressions for $f(k+1)$ and $f(k)$ from Equation~\ref{eqn:expressions} again, and note that there are two cases for each state $\bs'$ appearing in the RHS of the expressions:

\textbf{Case 1: $\arg \max_{\ba'} Q^k(\bs', \ba') = \arg \max_{\ba'} Q^{k-1}(\bs', \ba')$:} In this case, using the expression for the Q-function obtained in CQL, we can express $Q^k$ as:
\begin{equation*}
Q^{k}(\bs', \ba') - Q^{k-1}(\bs', \ba') = \gamma \E_{\bs''}\left[ \max_{\ba''} Q^{k-1}(\bs'', \ba'') - \max_{\ba''} Q^{k-2}(\bs'', \ba'') \right],    
\end{equation*}
which is a similar expression to what already exists in an expansion of $f(k) - f(k-1)$ analogous to Equation~\ref{eqn:expressions}.

\textbf{Case 2: $\arg \max_{\ba'} Q^k(\bs', \ba') \neq \arg \max_{\ba'} Q^{k-1}(\bs', \ba')$:} Let $\ba_1 = \arg \max_{\ba'} Q^k(\bs', \ba')$ and let $\ba_2 = \arg \max_{\ba'} Q^{k-1}(\bs', \ba')$. Then, we can split their difference as:
\begin{equation*}
    Q^{k}(\bs', \ba_1) - Q^{k-1}(\bs', \ba_2) = \underbrace{Q^k(\bs', \ba_2) - Q^{k-1}(\bs', \ba_2)}_{(i)} + \underbrace{Q^k(\bs', \ba_1) - Q^k(\bs', \ba_2)}_{(ii)}.
\end{equation*}
Term $(ii)$ in the above expression is non-negative, since $\ba_1$ is the action with the highest Q-value $Q^k$ at state $\bs'$. Term $(i)$ in the above expression can be split further:
\begin{equation*}
    Q^k(\bs', \ba_2) - Q^{k-1}(\bs', \ba_2) := - \frac{\alpha}{\pi_\beta(\ba_2|\bs')} + \gamma \E_{\bs''}\left[ \max_{\ba''} Q^{k-1}(\bs'', \ba'') - \max_{\ba''} Q^{k-2}(\bs'', \ba'') \right].
\end{equation*}

The second term in the above expression is similar to the term in $f(k) - f(k-1)$, and thus if the offset $-\frac{\alpha}{\pi_\beta(\ba_2|\bs')}$ does not fully compensate 
for the increase due to term $(ii)$, by induction we can claim that $f(k+1) \geq f(k)$ if the inequality holds for all $j \leq k$.

\textbf{To summarize,} we can group the two cases to list down conditions under which the learned average dataset Q-value \emph{can} decrease in a given iteration $k$ of CQL. This means that it is not necessary that the Q-values would decrease when these conditions are met, but if these conditions are not met, then the Q-values cannot necessarily decrease with more training. For a given iteration $k$ of CQL, the average Q-value under the dataset can decrease when:

\begin{equation}
    \label{eqn:when_q_vals_decrease}
    f(k) \geq \frac{\E_{\bs, \ba \sim \data}[r(\bs, \ba)]}{1 - \gamma} ~~\text{and}~~~ \E_{\bs' \sim \data}\left[\max_{\ba} Q^k(\bs', \ba) - Q^k(\bs', \ba_2) \right] \leq \underbrace{\E_{\bs'}\left[\frac{\alpha}{\pi_\beta(\ba_2|\bs')}\right]}_{\zeta}.
\end{equation}
Thus, if the difference of Q-values across different actions in expectation over all states in the dataset is large enough, the condition in Equation~\ref{eqn:when_q_vals_decrease} is not met and we would expect Q-values to increase, and not decrease. Similarly, in the phase of learning where the Q-value is smaller than the average dataset return, we would expect the Q-values to continue increasing. Thus, the average dataset Q-value should be non-decreasing if either of the two conditions in Equation~\ref{eqn:when_q_vals_decrease} are not satisfied, which corresponds to conditions (1) and (2) in the theorem statement.
\end{proof}

\textbf{Interpretation of Theorem~\ref{thm:cql_no_reduce}: Early stopping and the peak in Q-values.} Now we shall deduce the conclusion of overfitting from Theorem~\ref{thm:cql_no_reduce}. The Q-values decrease only if the gap between Q-values at actions taken by two consecutive policy iterates is smaller than a quantity $\zeta$ that depends inversely on the likelihood of the action under the behavior policy. This means that if and once the Q-function finds a good policy $\pi$, better than the behavior policy $\pi_\beta$, the average dataset Q-values can start to decrease if $\pi$ is not close enough to $\pi_\beta$, since the actions from the learned policy $\pi(\ba'|\bs')$ will not have a high likelihood under the behavior policy $\pi_\beta(\cdot|\bs')$, and thus the $\zeta$ term in Equation~\ref{eqn:when_q_vals_decrease} can easily become larger than the gap between Q-values. Thus, we would expect that the peak in the Q-values would correspond to this a performing policy $\pi$, that is potentially different from the behavior policy. One would also expect that a decrease in the Q-function would cause the learned policy $\pi$ to gradually move towards the behavior policy as this would increase $\pi_\beta(\ba_2|\bs')$ by selecting action $\ba_2$ highly likely under the behavior policy and would thus reduce $\zeta$. On the other hand, if the Q-values continuously increase, the learned Q-values are either smaller than the dataset Monte-Carlo return or exhibit high gaps between Q-values. In such scenarios, we would expect more gradient steps of policy evaluation and improvement to actually improve the policy, and more training would lead to improved performance. Thus, this discussion implies that a non-monotonic trend in Q-values is indicative of overfitting towards the behavior policy (Metric~\ref{guideline:overfitting}) and that policy selection can be performed near the peak of the Q-values (Guideline~\ref{guideline:policy checkpoint}). 

{ \color{blue}

}

\subsection{Underfitting in CQL}
\label{app:underfitting}
The metric used to characterize underfitting in Section~\ref{sec:workflow_metrics} is to compute the value of TD error, $\mathcal{L}_\text{TD}(\theta)$ and the CQL regularizer, $\mathcal{R}(\theta)$ and inspect if these values are large either relative to a model with an increased capacity or on an absolute scale. To understand why this corresponds to underfitting, note that a large value of TD error corresponds to a Q-function that does not respect Bellman consistency conditions and hence may be arbitrarily worse, whereas a large positive value of the CQL regularizer corresponds to a Q-function that is not close to the behavior policy and hence may be choosing out-of-distribution actions. In either case, we would aim to learn a Q-function that minimizes both the TD-error and the CQL regularizer. 

Minimizing only one of the two objectives is not sufficient in this setting: \textbf{(1)} a Q-function that minimizes training TD error to a small enough value but attains a large value of the CQL regularizer is not sufficient since this Q-function may take erroneously high values on out-of-distribution actions, leading to a worse policy, and, \textbf{(2)} a Q-function that minimizes the CQL regularizer to a small value and attains a high value of the training TD error may not correspond to a valid Q-function which may lead to a worse policy, potentially close to the behavior policy. As a result, our Metric~\ref{guideline:underfitting} suggests tracking both of these values independently and utilizing a correction for underfitting if either of the two objectives (TD error and CQL regularizer) are not minimized to low-enough values.

\textbf{Utilizing a fix for underfitting by default in CQL.} Similar to supervised learning, precisely quantifying the amount of underfitting is hard in offline RL as well. It is an additional challenge in offline RL that the two objectives (TD error and CQL regularizer) may impose conflicting gradients, making it hard to identify the optimal value of these loss values. As a result, we would suggest that some of the proposed solutions for underfitting discussed in Section~\ref{sec:addressing_workflow} such as utilizing more expressive architectures be used even in cases where it is ambiguous as to whether the loss values are large or not, provided that there are no clear signs of overfitting (per Metric~\ref{guideline:overfitting}).  

\section{Additional Background}
\label{app:additional_background}

In this section, we provide additional background for the conservative Q-learning (CQL)~\citep{kumar2020conservative} algorithm that we use as the base algorithm for devising our workflow. We utilize the actor-critic instantiation of CQL that trains a conservative Q-function $Q_\theta(\bs, \ba)$ and a policy $\pi_\phi(\ba|\bs)$ that maximizes the Q-function. This algorithm proceeds in alternating steps of policy evaluation and policy improvement and our practical instantiation of this algorithm operates as per the following (policy evaluation and policy improvement) updates:
\begin{small}
\begin{align*}
    \theta^{k+1} \leftarrow& \small{\arg\min_{\theta}~ \alpha \mathbb{E}_{\bs \sim \data}\left[\log \sum_{\ba} \exp(Q_\theta(\bs, \ba))-\E_{\ba \sim \data}\left[Q_\theta(\bs, \ba)\right]\right] + \frac{1}{2} \E_{\bs, \ba, \bs' \sim \data}\left[\left(Q - {\bellman}^{\policy_k} \bar{Q} \right)^2 \right]\!}\\ 
    \phi^{k+1} \leftarrow& \arg\max_{\phi} \E_{\bs \sim \data, \ba \sim \policy^{k}_{\phi}(\ba|\bs)}\left[\hat{Q}_\theta^{k+1}(\bs, \ba)\right]~~~ \text{(policy improvement)} 
\end{align*}
\end{small}
In practice, these updates are performed via alternating gradient descent on the actor ($\pi_\phi(\ba|\bs)$) and the critic ($Q_\theta(\bs, \ba)$). While the hyperparameter $\alpha$ in the update above also needs to be chosen offline, we utilize the value of $\alpha=1.0$ from prior work, fixed across all our experiments in both simulated and real-world domains, and focus on tuning other decisions such as network size, regularization and policy selection.

\section{Additional Experimental Details}
\label{app:setup_details}

\subsection{Simulated Domains}
\label{sec:simulation}

In this section, we provide a detailed discussion of the domains used in our simulated experiments in Section~\ref{sec:workflow_exps}.  

\textbf{Pick and place task.}
As detailed in Section~\ref{sec:workflow_exps}, our first simulated domain consists of a 6-DoF WidowX robot in front of a tray containing a small object and a tray. The objective is to put the object inside the tray. The reward is +1 when the object has been placed in the tray, and zero otherwise. The offline dataset consists of trajectories that grasp the object with a 35\% success rate and place it with a success rate of 40\%. We collected the dataset using scripted policies that we briefly discuss below. For more detail, please refer to Appendix A.1 in \citet{singh2020cog}.

\textit{Scripted grasping policy.} Our scripted policy is identical to the policy in \citet{singh2020cog}. This policy is supplied with the object's (approximate) coordinates and can localize the object using background subtraction. Once the policy localizes the objects, it goes to the objects by executing actions with added noise and then closes the gripper when it is within some pre-specified distance of the object. This distance threshold is randomized similar to \citet{singh2020cog} and the grasp can fail or succeed with about a 35\% chances of success. 

\textit{Scripted pick and place policy.} As previously used in \citet{singh2020cog}, our scripted pick and place policy attempts a grasp as described above, and then tries to place the object randomly at some location in the workspace. Only if it places the object on the tray does it get a +1 reward, and after placing the object, it moves up and tries to hover around by executing small magnitude random actions until the episode terminates.

\textbf{Grasping from a blocked drawer.} 
The scripted policies we use for this task are borrowed from \citet{singh2020cog}. These policies can open and close both the drawers with 40-50\% success rates, can grasp objects with about a 70\% success rate, and place those objects at random locations in the workspace.
Since we use the datasets from \citet{singh2020cog} directly, the prior data does \emph{not} contain any interactions with the object inside the drawer and contains data such as behavior that blocks the drawer by placing objects in front of it.

\textit{Scripted drawer opening and closing.} Our scripted policy for drawer opening and closing moves the gripper to the handle, then pulls or pushes it to open/close the drawer. At each step, Gaussian noise is added to the data collection and it does not succeed 70\% of the times.

Pesudocode and more details of these policies, which are directly used from prior work~\citep{singh2020cog} is provided in Algorithms 1-3 of \citet{singh2020cog}. 

\subsection{Real-World Domains}
\label{sec:real_world}
\textbf{Sawyer tasks.} As detailed in Section~\ref{sec:real_world}, the dataset used for our Sawyer tasks is the same as \citet{khazatsky2021can}. We emphasize that we directly utilize the previously collected datasets from \citet{khazatsky2021can} to mimic the real-world use case of offline RL, where we are supposed to learn effective policies from a previously collected dataset. The datasets for each of two tasks (put lid on pot, open a drawer) consist of 100 trajectories which were collected using a 3Dconnexion SpaceMouse device. Each trajectory in both the datasets is of length 80, which is also the number of time steps provided to the learned policy for solving the task. We then label the trajectories using 0-1 reward indicating a success when the task is complete (i.e., the lid is on the pot, and the drawer is sufficiently open). We present some examples of trajectories in the dataset on the associated supplementary website \url{https://sites.google.com/view/offline-rl-workflow}.

\textbf{Real WidowX Pick and Place task.} We collect data for this task by utilizing a scripted policy that first localizes the object, then reaches for this object using noisy actions and then attempts a grasp (with added noise) and places the object on the tray imperfectly. The success rate of the policy is 35\% in both the grasping and the placing of the object on the the tray. A reward of +1 was provided when the policy was able to place the object in the tray. Each trajectory in this dataset is of length 15, which is also the time-limit provided to the learned policy for solving the task at evaluation time. We provide videos of sample trajectories in the dataset in the associated supplementary website \url{https://sites.google.com/view/offline-rl-workflow}.

\section{Detailed Empirical Results}
\label{app:more_results}

In this section, we provide additional empirical results for various components of our workflow, including missing evidence from the main paper. 

\subsection{Simulated Domains}
\label{app:sim_studies}

\textbf{Addressing overfitting in the grasping from blocked drawer task in Scenario \#1.} We first discuss the efficacy of the proposed correction for overfitting via the variational information bottleneck regularizer (Equation~\ref{eqn:bottleneck}) on the grasping from blocked drawer task. As shown in Figure~\ref{fig:bottleneck_on_drawer}, utilizing the bottleneck regularizer gives rise to a stable trend in Q-values (Q-values no more decrease with more training) as shown in the blue curve compared to the orange curve for base CQL, and as is evident from the policy performance plot, utilizing our fix for overfitting also leads to higher and stable performance.  

\begin{figure}
\centering
\includegraphics[width=0.3\linewidth]{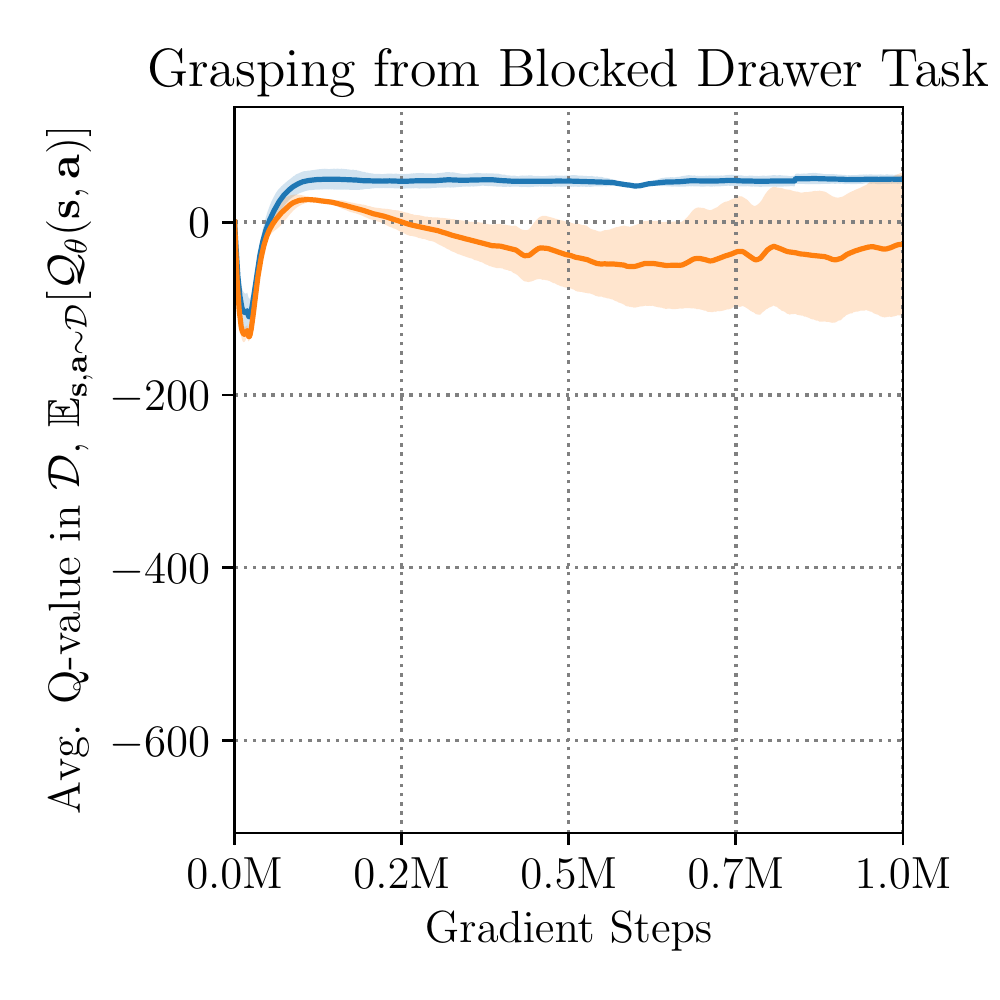}
\includegraphics[width=0.3\linewidth]{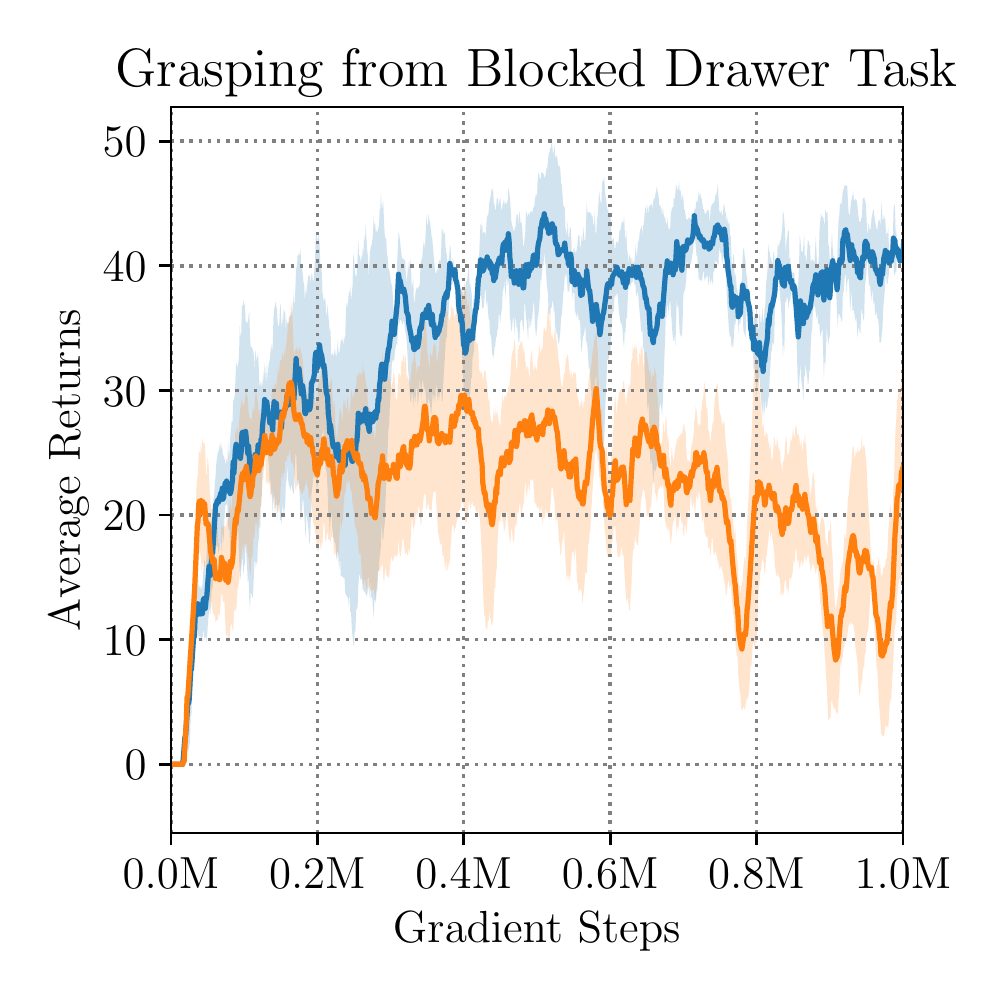}
\caption{\label{fig:bottleneck_on_drawer} \small{Trend in average dataset Q-value (left) and the performance of the policy (right) for base CQL (\textbf{orange}) and base CQL + overfitting correction using VIB (Equation~\ref{eqn:bottleneck}) (\textbf{blue}). Note that using the VIB regularizer addresses overfitting in that the Q-values increase and then stabilize and this stabilization effect is also observed in the performance of the policy, which also increases around two fold.}}
\vspace{-5pt}
\end{figure}

\textbf{Scenario \#2, multiple object pick and place task.} We provide the details (loss curves and Q-value trends) for this task on our anonymous project website linked here: \url{https://sites.google.com/view/offline-rl-workflow}.

\subsection{Real-World Experiments}
\label{app:real_studies}

\textbf{Sawyer tasks.} We present the missing CQL regularizer ($\mathcal{R}(\theta)$) plot for this task from the main text (Section~\ref{sec:real_world_case}) below. Note that even the CQL regularizer eventually increases (dashed lines in the figure below) in the case of the base CQL algorithm that does not utilize a large ResNet architecture. On the other hand, utilizing the ResNet architecture leads to a clearly decreasing trend in the value of the CQL regularizer as is evident below. Thus, utilizing a larger network addresses underfitting. 
\begin{center}
\includegraphics[width=0.3\linewidth]{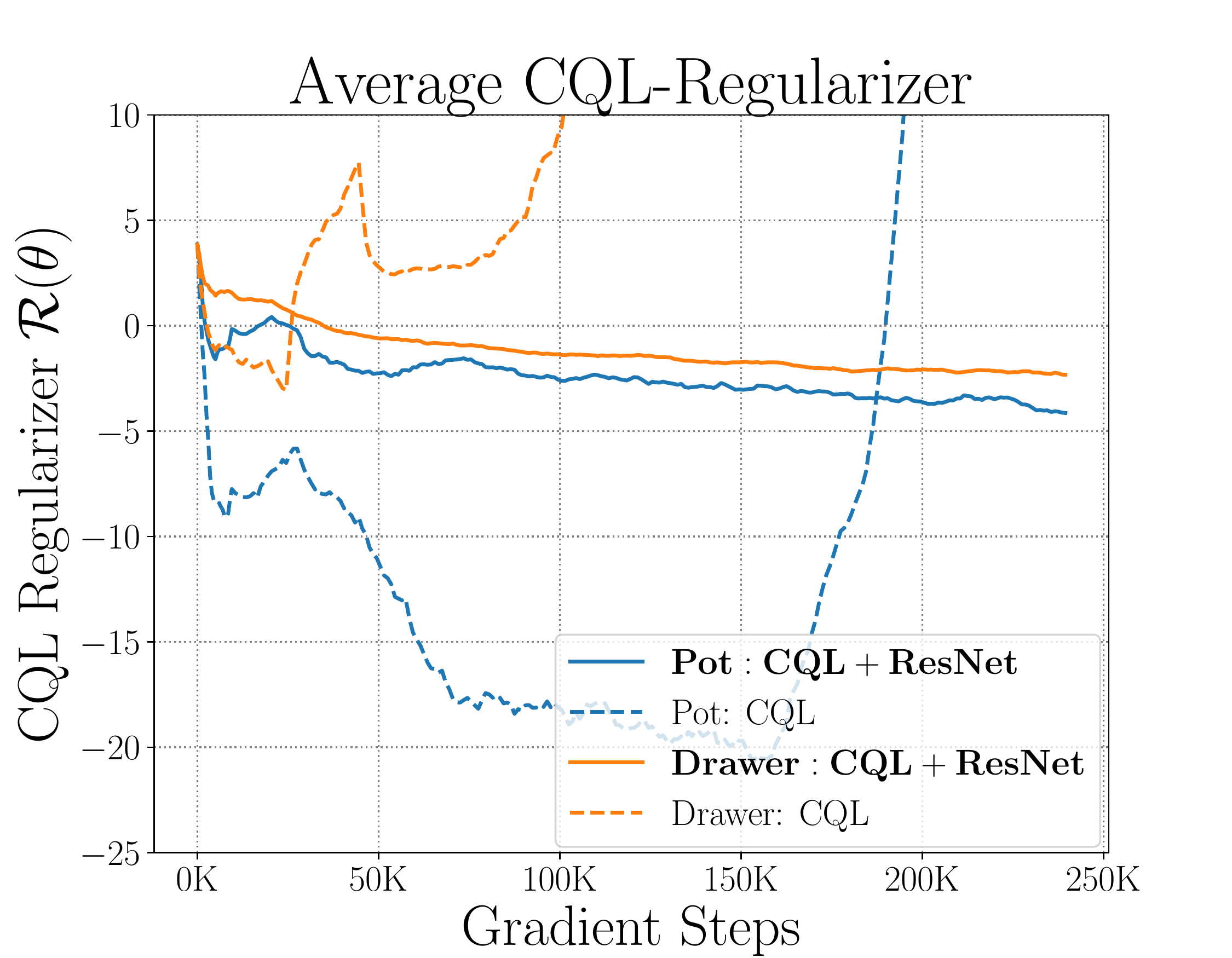}
\end{center}

\textbf{More results and videos for each task can be found on our anonymous website located here: \url{https://sites.google.com/view/offline-rl-workflow}}.

\vspace{-0.1cm}
{ 
\section{Applying Our Workflow to Other Offline RL Algorithms}
\vspace{-0.1cm}
In this section, we discuss how to apply our workflow to other offline RL algorithms beyond CQL. Our workflow is applicable to conservative offline RL algorithms that can be interpreted as abstractly optimizing the objective in Equation~\ref{eqn:generic_offline_rl} in some form. We elaborate on this class of algorithms in the next section (Appendix~\ref{app:which_algos}) and then present, in Appendix~\ref{app:brac_example}, an application of our workflow for detecting and correcting overfitting with BRAC~\citep{wu2019behavior}, a policy-constraint conservative offline RL method. 

\subsection{Which Algorithms Does Our Workflow Apply To?}
\label{app:which_algos}
Our workflow guidelines are intended to be applicable to ``conservative offline RL algorithms'' that can be abstractly represented using the policy optimization objective shown in Equation~\ref{eqn:generic_offline_rl}, which is restated below for convenience of the reader:

\begin{equation}
\label{eqn:generic_offline_rl_repeated}
    \pi^* := \arg \max_{\pi}~~ J_{\mathcal{D}}(\pi) - \alpha D(\pi, \pi_\beta)~~~~~~~~~~~ \text{(Conservative offline RL)}.
\end{equation}
$D(\pi, \pi_\beta)$ in Equation~\ref{eqn:generic_offline_rl_repeated} represents the divergence between the learned policy $\pi(\cdot|\bs)$ and the behavior policy $\pi_\beta(\cdot|\bs)$ averaged over the state-visitation distribution of the learned policy $\pi$. This is given by $D(\pi, \pi_\beta) = \E_{\bs, \ba \sim d^\pi_{\mathcal{D}}(\bs) \pi(\ba|\bs)}\left[D(\pi(\cdot|\bs), \pi_\beta(\cdot|\bs))\right]$. Thus, Equation~\ref{eqn:generic_offline_rl_repeated} can be expressed as:
\begin{equation*}
    J_\mathcal{D}(\pi) - \alpha D(\pi, \pi_\beta) = \E_{\bs, \ba \sim d^\pi_{\mathcal{D}}}\left[\underbrace{r(\bs, \ba) - \alpha D\left(\pi(\cdot|\bs), \pi_\beta(\cdot|\bs)\right)}_{\text{effective new reward function}}\right]
\end{equation*}
This can be viewed as solving the RL problem with a modified reward function that penalizes deviation between the learned policy $\pi$ and the behavior policy $\pi_\beta$. Thus, optimizing the policy against Equation~\ref{eqn:generic_offline_rl_repeated} requires utilizing the long-term, cumulative estimate of divergence $D$. 

\textbf{Which algorithms are covered by our definition of conservative offline RL from Equation~\ref{eqn:generic_offline_rl_repeated}?} Two algorithms covered under this definition are BRAC-v~\citep{wu2019behavior} and CQL~\citep{kumar2020conservative}. While CQL applies a Q-function regularizer (Equation~\ref{eqn:cql_training}) to learn a conservative Q-function that directly models the combined effect of environment reward $r(\bs, \ba)$ and divergence from the behavior policy $D(\pi(\cdot|\bs), \pi_\beta(\cdot|\bs))$ in the learned Q-function, BRAC-v instead exploits an explicit policy constraint. We discuss BRAC in detail below and demonstrate how to effectively apply our workflow to tune overfiting in BRAC in the next section. 

\textbf{Details and background on BRAC.} Unlike CQL, BRAC-v explicitly subtracts the divergence $D(\pi(\cdot|\bs'), \pi_\beta(\cdot|\bs'))$ from the target value while performing the Bellman update.  Additionally, since the divergence between the learned policy and the behavior policy \emph{at the current state} is not a part of the Q-function, BRAC-v also explicitly adds the divergence value at the current state to the policy update. We instantiate the version of BRAC that uses the KL-divergence:
\begin{align*}
D(\pi(\cdot|\bs), \pi_\beta(\cdot|\bs)) = D_{\mathrm{KL}}(\pi(\cdot|\bs), \pi_\beta(\cdot|\bs)) = \E_{\ba \sim \pi(\cdot|\bs)}\left[ \log \pi(\ba|\bs) - \log \pi_\beta(\ba|\bs) \right].
\end{align*}
The first term in this divergence $D_\mathrm{KL}$ corresponds to an entropy regularizer on the policy $\pi(\cdot|\bs)$ that standard MaxEnt RL algorithms like Soft Actor-Critic (SAC)~\citep{haarnoja} already apply. To estimate the second term, BRAC estimates a model of the behavior policy, that we denote as $\hat{\pi}_\beta$, and uses it to explicitly compute this divergence. Denoting the policy and the Q-function as $\pi_\phi$ and $Q_\theta$, the BRAC-v training objectives are (practical implementations use different values for $\alpha$ and $\beta$):
\begin{align}
    \text{Q-function:} ~~&~~ \min_\theta~~ \E_{\bs, \ba \sim \mathcal{D}}\left[\left(r(\bs, \ba) + \gamma \E_{\ba' \sim \pi_\phi(\cdot|\bs')}[\bar{Q}_\theta(\bs', \ba') + \textcolor{red}{\beta \log \hat{\pi}_\beta(\ba'|\bs')}] - Q_\theta(\bs, \ba) \right)^2 \right]. \nonumber\\
    \text{Policy:} ~~&~~ \max_\phi~~ \E_{\bs \sim \mathcal{D}, \ba \sim \pi_\phi(\cdot|\bs)}\left[ \underbrace{Q_\theta(\bs, \ba) + \textcolor{red}{\beta \log \hat{\pi}_\beta(\ba|\bs)}}_{\text{conservative Q-value}; Q_c(\bs, \ba)} - \underbrace{\alpha \log \pi_\phi(\ba|\bs)}_{\text{policy entropy; standard MaxEnt RL}} \right].  
    \label{eqn:brac_eqns}
\end{align}
We will refer to the estimate $Q_c(\bs, \ba) := Q_\theta(\bs, \ba) + \beta \log \hat{\pi}_\beta(\ba|\bs)$ as the \emph{conservative Q-value}, that estimates the combined effect of both the reward and the divergence from the behavior policy. $Q_c(\bs, \ba)$ is analogous to the Q-function trained via CQL which directly estimates this combined effect. To note further similarities, observe that CQL optimizes the policy against the conservative Q-value estimate, predicted directly by the Q-network, along with an added entropy regularizer, whereas BRAC uses $Q_c$ (Equation~\ref{eqn:brac_eqns}) in its place. $Q_c$ will play a crucial role in adapting our workflow for overfitting to BRAC which we discuss in the next section. 

\textbf{Which offline RL methods is our workflow not applicable to?} The formulation of conservative offline RL in Equations~\ref{eqn:generic_offline_rl} and \ref{eqn:generic_offline_rl_repeated} does not encompass offline RL algorithms that only utilize a ``myopic'' behavior regularization, such as BCQ~\citep{fujimoto2018off}, BEAR~\citep{kumar2019stabilizing}, AWR~\citep{peng2019advantage}, TD3+BC~\citep{fujimoto2021minimalist}. These methods only apply the behavior constraint locally at the current state and do not propagate its effect through the Bellman backup. The Q-functions for such myopic behavior-regularized algorithms are trained in a similar fashion as standard online actor-critic algorithms, and so we would not expect the Q-values of these algorithms to exhibit similar trends as conservative Q-functions. Our proposed workflow is not designed to handle such methods, though extending it to address them is an interesting direction for future work.          

\begin{table}[t]
%  \vspace{-0.4cm}
  \centering
  \footnotesize
  \def\arraystretch{0.9}
  \setlength{\tabcolsep}{0.42em}
\begin{tabularx}{0.95\linewidth}{c X X}
\toprule
\textbf{Metric/Guideline} & \textbf{CQL (Main paper)} & \textbf{BRAC-v (Appendix~\ref{app:brac_example})}\\
\midrule
Metric~\ref{guideline:overfitting} (Detecting overfitting) & Low average dataset Q-value, $\E_{\bs, \ba \sim \mathcal{D}}[Q_\theta(\bs, \ba)]$, decreasing with more gradient steps & Low average \textcolor{red}{conservative} Q-value, $\E_{\bs, \ba \sim \mathcal{D}}[Q_c(\bs, \ba)]$ on the dataset, that is decreasing with more gradient steps\\
\midrule
Guideline~\ref{guideline:policy checkpoint} (Policy selection) & If overfitting is detected, select the checkpoint with highest average dataset Q-value before overfitting & If overfitting is detected, select the checkpoint with highest average dataset \textcolor{red}{conservative} Q-value before overfitting\\
\midrule
Guideline~\ref{guideline:addressing_overfitting} (Addressing overfitting) & Use some form of capacity-decreasing regularizer on the Q-function, e.g., VIB regularizer, Dropout, etc & Use some form of capacity-decreasing regularizer on \textcolor{red}{both the estimated behavior policy $\hat{\pi}_\beta$ and Q-function $Q_\theta(\bs, \ba)$}, since both combine to form the conservative estimate $Q_c$\\
\bottomrule
    \end{tabularx}
    \vspace{0.1cm}
         \caption{\label{tab:from_cql_to_brac} \footnotesize{Summary of how our proposed overfitting workflow for CQL in the main paper can be adapted to BRAC, with main modifications from CQL to BRAC highlighted in \textcolor{red}{red}. The primary modification is to utilize conservative Q-value estimates, $Q_c(\bs, \ba)$ for BRAC (Equation~\ref{eqn:brac_eqns}), instead of the outputs of the Q-network.
     }}
\vspace{-0.4cm}
\end{table}

\subsection{Empirical Demonstration: Applying Our Overfitting Workflow to BRAC}
\label{app:brac_example}
In this section, we adapt our proposed workflow (Metrics~\ref{guideline:overfitting} and Guidelines \ref{guideline:policy checkpoint} and \ref{guideline:addressing_overfitting}) for detecting and addressing overfitting to the behavior-regularized actor-critic (BRAC) algorithm and empirically verify the efficacy of these metrics and guidelines. Per the discussion above, the main modification needed to apply our workflow from CQL to BRAC is to utilize the conservative Q-value estimate $Q_c(\bs, \ba)$ for BRAC, instead of the Q-values estimated by the Q-network which worked in the case of CQL. Barring this modification, the key principles of our workflow remain the same for BRAC. We detail these below, and present a comparison against our workflow for CQL in Table~\ref{tab:from_cql_to_brac}.

\textbf{Detecting overfitting in BRAC.} Unlike CQL, where the learned Q-values represent a conservative Q-estimate that accounts for both the reward and the divergence from the behavior policy, BRAC estimates these quantities separately as shown in Equation~\ref{eqn:brac_eqns}, with the Q-value not accounting for the divergence against the behavior policy at the current state. Therefore, to apply our workflow guidelines (Metric~\ref{guideline:overfitting}, Guideline~\ref{guideline:policy checkpoint}) to BRAC, we track the ``conservative Q-value estimate’’ discussed in the previous section ($Q_c(s, a) := Q_\theta(s, a) + \beta \log \hat{\pi}_\beta(a|s)$), which is BRAC's analogue of the Q-value learned by CQL. Similar to CQL, overfitting in BRAC-v can be detected via a non-monotonic trend in average dataset conservative Q-value: if the average dataset conservative Q-value first increases and then decreases with more training, this indicates the presence of overfitting. We summarize this in Table~\ref{tab:from_cql_to_brac}, first row.  

\textbf{Policy selection for BRAC.} When overfitting is detected, i.e., the conservative Q-value estimates first increase and then decrease with more gradient steps, we utilize early stopping to find a good policy checkpoint within this run for deployment. Analogously to CQL, our policy checkpoint selection guideline (Table~\ref{tab:from_cql_to_brac}, second row) suggests that a good checkpoint can be found by picking the one that attains the highest average conservative Q-value on the dataset before overfitting begins.

\begin{wrapfigure}{r}{0.64\textwidth}
\vspace{-0.85cm}
\begin{center}
\includegraphics[width=\linewidth]{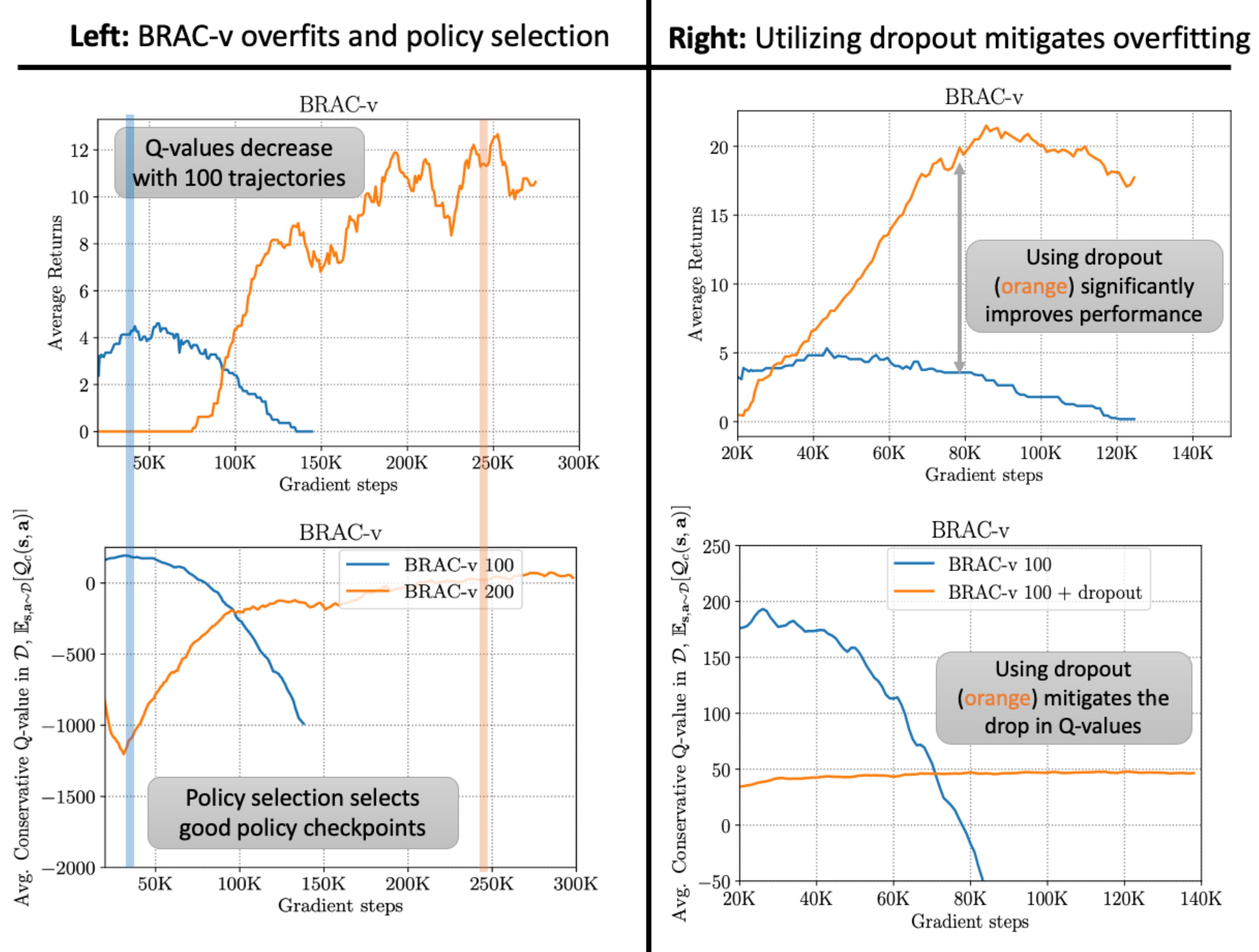}
\vspace{-0.7cm}
\end{center}
\caption{\footnotesize{\label{fig:brac_overfitting_runs} \textbf{Left:} \textbf{Overfiting and policy selection for BRAC-v}: Policy performance \textbf{(top)} and average dataset conservative Q-value \textbf{(bottom)} with varying number of trajectories (100 and 200). The conservative Q-value for the run with 100 trajectories (blue) eventually decreases, while it is relatively stable for 200 trajectories (orange). Vertical bands indicate regions around the peak Q-value and observe that these regions correspond to policies with good actual performance. \textbf{Right: Addressing overfitting in BRAC-v} by using the capacity-decreasing dropout regularizer leads to stable and non-decreasing conservative Q-values and improved policy performance.}}
\vspace{-0.3cm}
\end{wrapfigure}

\textbf{To empirically verify if the adaptation of our workflow is effective for BRAC-v}, we ran experiments on the simulated grasping from drawer task from Scenario \#1, with offline datasets containing 100 and 200 trajectories. Observe in Figure~\ref{fig:brac_overfitting_runs} (left), that with 100 trajectories, the average dataset conservative Q-values $\E_{\bs, \ba \sim \mathcal{D}}[Q_c(\bs, \ba)]$ first increases and then drops with more gradient steps. This observation is consistent with what we expect to happen if the run of BRAC-v overfits per the guideline in Table~\ref{tab:from_cql_to_brac}. In the figure, the vertical dashed lines indicate the policy checkpoints that will be selected by our policy selection guideline (Table~\ref{tab:from_cql_to_brac}). We also visualize the performance of the chosen checkpoints against the actual policy return in the top row for analysis purposes. Note that the selected policy checkpoint indeed attains close to the peak performance achieved over the entire training run of BRAC-v. This indicates the efficacy of our workflow for detecting overfitting and performing policy selection for the BRAC-v algorithm.

\textbf{Addressing overfitting in BRAC-v.} Once overfitting is detected, we need to find an method to alleviate it. As in our workflow for CQL, we can add any capacity-decreasing regularizer such as dropout~\citep{srivastava2014dropout}, variational information bottleneck (VIB), etc to mitigate overfitting. Technically, we want to apply this regularization on the conservative Q-function estimator, $Q_c(\bs, \ba)$, but in the case of BRAC-v, this quantity is not estimated using a single neural network. Thus, we recommend applying the capacity-decreasing regularization to both the critic ($Q_\theta(\bs, \ba)$) and the behavior policy estimate $\hat{\pi}_\beta(\cdot|\bs)$ separately. This guideline is summarized in the third row of Table~\ref{tab:from_cql_to_brac}.

\textbf{To empirically validate our guideline for addressing overfitting in BRAC-v}, we applied the capacity-decreasing dropout regularizer on the run of BRAC on the grasping from drawer task with 100 trajectories. {We chose the dropout regularizer since it worked for CQL (Figure~\ref{fig:other_regularizers}, Appendix~\ref{app:other_overfitting_corrections}), and because it is easier to apply than two separate information bottlenecks on the Q-function and the estimated behavior policy.} As shown in Figure~\ref{fig:brac_overfitting_runs} (right), applying dropout not only alleviates the drop in conservative Q-value estimates after many gradient steps, but it also allows us to pick later checkpoints in training, all of which perform equally well, and much better than the base BRAC-v algorithm. Crucially note that while the policy performance of BRAC-v degrades to zero with more training, utilizing dropout improves the policy performance and increases stability. This validates that overfitting in BRAC-v, as detected via our workflow, can be effectively mitigated by decreasing the capacity of the conservative Q-function in BRAC, in this case by applying dropout to the Q-network and the estimated behavior policy.

\section{Experiments Tuning The CQL $\alpha$ Hyperparameter}
\label{app:alpha_hparam}
In our experiments on both simulated domains and real robots, we utilized a default value of $\alpha = 1.0$ as the multiplier on the CQL term. This directly follows from the choice made in prior work \citep{singh2020cog}, without any modification or tuning. However, to understand the effect of $\alpha$, we now evaluate our workflow on runs with various $\alpha$ values, $\alpha \in \{0.0, 0.01, 0.1, 2, 10, 50\}$, using the two tasks (pick-and-place task and grasping from drawer task) from Scenario \#1 with 50 and 100 trajectories.  Generally, we find that our workflow for detecting overfitting and performing policy selection is reasonably effective for a range of values with $10 \geq \alpha \geq 0.1$, but fails to learn a good policy with very low $\alpha$ values ($\leq 0.01$), for which CQL does not prevent catastrophic overestimation. Similarly our workflow is unable to improve the policy performance in CQL runs with very high $\alpha$ values, which lead to Q-functions that overwhelmingly prioritize giving high value to dataset actions and lead to imitation-like behavior. It is therefore necessary to avoid such extreme $\alpha$ values. In this section, we apply our proposed guidelines for detecting if $\alpha$ is too small or too large (Guidelines~\ref{guideline:decreasing_alpha_main} and \ref{guideline:increasing_alpha_main}) and first adjust $\alpha$. We first discuss $\alpha \geq 0.1$, and then the lower values.

\begin{figure}[t]
% \vspace{-0.85cm}
\begin{center}
\includegraphics[width=0.95\linewidth]{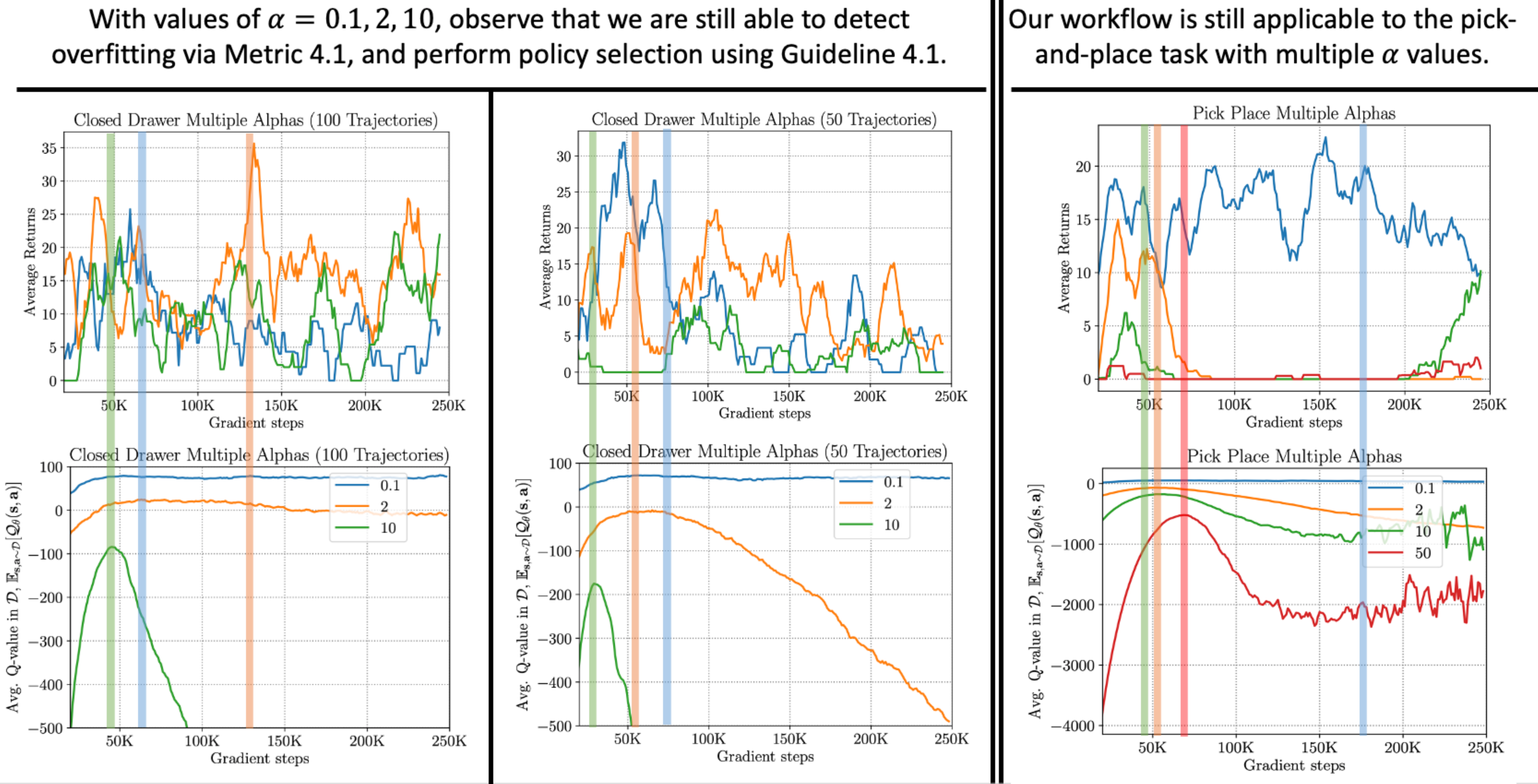}
\vspace{-0.3cm}
\end{center}
\caption{\footnotesize{\label{fig:alpha_values_sweep} \textbf{Evaluating our overfitting workflow with multiple values of the CQL hyperparameter $\alpha \in \{0.1, 2, 10, 50\}$} on three tasks from Scenario \#1: grasping from drawer task with 50 and 100 trajectories and the pick-and-place task with 100 trajectories. Observe that with all of these values, the average dataset Q-value first increases and then decreases, which indicates the presence of overfitting. Also note that policy checkpoints prescribed by our policy selection guideline perform well, especially when compared to other checkpoints within the run. The performance of both CQL and our workflow is generally poor in runs with large $\alpha=10$, because large $\alpha$ values constrain the learned policy to be close to the behavior policy and our workflow does not improve the policy performance in this case. We additionally evaluate $\alpha=50.0$ for the pick-and-place task and observe that the run is overfitting, however, the policy performance is bad for all the checkpoints because $\alpha$ is too large, making CQL similar to behavior cloning.}}
\vspace{-0.3cm}
\end{figure}

\subsection{Values of $\alpha$ That Are Not Too Small} 
\label{app:larger_alpha_values}
We present the trend in average dataset Q-values in Figure~\ref{fig:alpha_values_sweep} for $\alpha \geq 0.1$. Observe that for $\alpha \in \{0.1, 2, 10\}$, the average dataset Q-value first increases and then decreases with more gradient steps, indicating the presence of overfitting per Metric~\ref{guideline:overfitting}. Since overfitting is detected, we can perform policy selection using Guideline~\ref{guideline:policy checkpoint} by choosing the policy checkpoint that appears near the peak in the average dataset Q-value for deployment. For each $\alpha$, these checkpoints are marked with a vertical dashed line. Observe that the selected policy checkpoint indeed performs well compared to all other checkpoints within each training run. This indicates that Metric~\ref{guideline:overfitting} and our policy selection rule in Guideline~\ref{guideline:policy checkpoint} work well across these $\alpha$ values. However, the performance of CQL with large $\alpha$ values is worse compared to smaller $\alpha$ values, likely because CQL finds a policy close to the behavior policy
when the $\alpha$ values are too large (e.g., $\alpha=50$ for the pick-place task in Figure~\ref{fig:alpha_values_sweep}, or $\alpha=10$ for the drawer task with 50 trajectories in Figure~\ref{fig:alpha_values_sweep}). Thus, no matter how we select the policy checkpoint, the performance would be bad, since no checkpoint in the run actually attains good performance. We will shortly discuss how we can detect if $\alpha$ is large and decrease it, but we first present results of applying the VIB overfitting correction to runs with various $\alpha$ values. 

Since overfitting is detected, we would attempt alleviate overfitting by utilizing the VIB regularizer (Equation~\ref{eqn:bottleneck}) following Guideline~\ref{guideline:addressing_overfitting}. As shown in Figure~\ref{fig:alpha_vib_plot},
the VIB regularizer leads to improved policy performance for $\alpha \in \{0.1, 2\}$. However, the VIB regularizer is ineffective with $\alpha=10.0$, where it does not improve performance. 

\begin{figure}[h]
\vspace{-0.4cm}
\begin{center}
\includegraphics[width=0.95\linewidth]{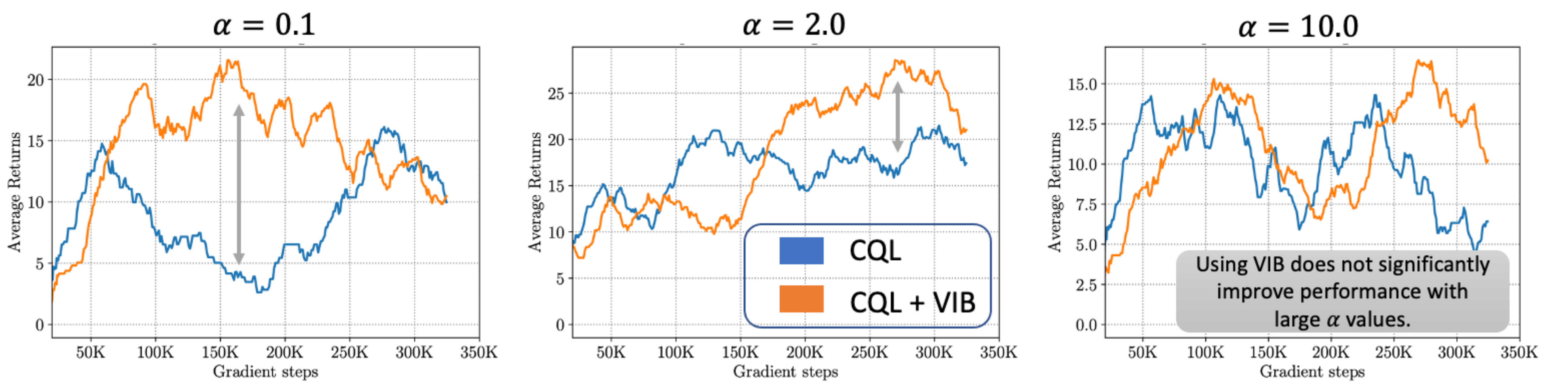}
\vspace{-0.3cm}
\end{center}
\caption{\footnotesize{\label{fig:alpha_vib_plot} \textbf{Utilizing the VIB regularizer from Equation~\ref{eqn:bottleneck} to correct overfitting in CQL runs with $\alpha \in \{0.1, 2, 10\}$} for the grasping from drawer task with 100 trajectories. Applying the VIB regularizer to decrease capacity and correct overfitting improves performance with $\alpha=0.1$ and $\alpha=2.0$, but does not improve performance with a larger value of $\alpha=10.0$.}}
\vspace{-0.3cm}
\end{figure}

The above evidence indicates that our overfitting workflow can fail if the $\alpha$ value is too large ($\alpha \geq 10.0$), but our workflow improves the performance of CQL when $\alpha \in \{0.1, 1.0, 2.0\}$. Hence, for large $\alpha$s we first follow Guidelein~\ref{guideline:decreasing_alpha_main} to decrease $\alpha$ before applying our overfitting workflow. 

To validate the efficacy of Guideline~\ref{guideline:decreasing_alpha_main}, we point the reader to Figure~\ref{fig:alpha_values_sweep}. If we start with $\alpha=10.0$ or $50.0$ on the the drawer task with 50 trajectories or the pick-and-place task, Guideline~\ref{guideline:decreasing_alpha_main} would prescribe reducing $\alpha$, since smaller $\alpha$ values such as $\alpha=0.1, 1.0, 2.0$ also exhibit overfitting per Metric~\ref{guideline:overfitting}. Doing so also does improve the policy performance, especially when starting from $\alpha=50.0$, indicating that this guideline is effective.

\vspace{-0.1cm}
\subsection{Small values of $\alpha$} 
\label{app:small_alphas}
\vspace{-0.2cm}
Next, we evaluate our workflow with the two smallest values of $\alpha=0.0, 0.01$. In both cases, as shown in Figure~\ref{fig:small_alpha_plot}, we find that the value of the CQL regularizer is large (close to 0, which means out-of-distribution actions have similar values as in-distribution actions), and average dataset Q-value does not decrease with more gradient steps. As expected, the corresponding policy performs poorly in each case, since a high CQL regularizer value implies that the out-of-distribution actions have a higher Q-value than in-distribution actions, which in turn means that policy optimization will select out-of-distribution actions. In this case, our underfitting workflow will not improve the performance of CQL, and the value of $\alpha$ would need to be raised. We thus follow Guideline~\ref{guideline:increasing_alpha_main} first, to tune $\alpha$ before applying the rest of our workflow. 

\begin{figure}[h]
\vspace{-0.25cm}
\begin{center}
\includegraphics[width=0.95\linewidth]{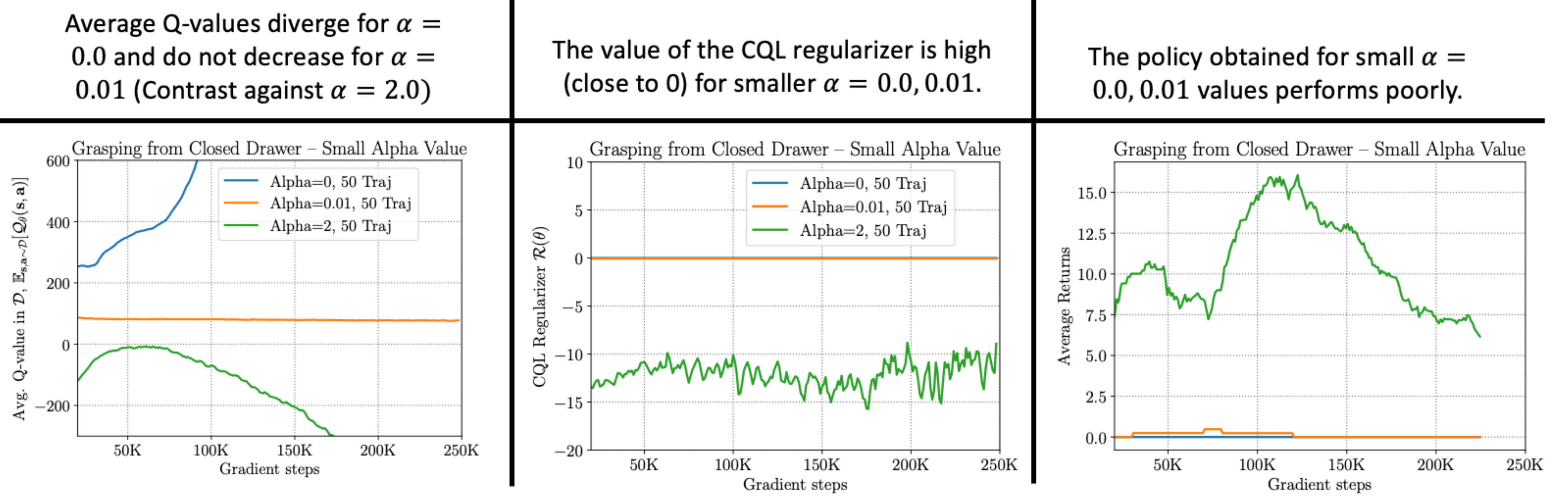}
\vspace{-0.3cm}
\end{center}
\caption{\footnotesize{\label{fig:small_alpha_plot} \textbf{CQL fails to prevent erroneous overestimation in the Q-function with small $\alpha$ values and hence performs poorly.} For $\alpha=0.0$, the Q-function positively diverges. For $\alpha=0.01$, the average Q-value is stable and does not decrease with more gradient steps. Note the vastly different trend in the average Q-value for $\alpha=2.0$ for contrast. Additionally, observe that the value of the CQL regularizer is close to $0$ for $\alpha=0.0$ and $\alpha=0.01$, which means Q-values for out-of-distribution actions are high compared to in-distribution actions, for contrast see the much smaller value of the CQL regularizer with $\alpha=2.0$.}}
\vspace{-0.3cm}
\end{figure}
%
%
%%AK: Should I remove the discussion below? All the stuff untill the line of comments "%%%%%%%%%%"
To empirically demonstrate the efficacy of Guideline~\ref{guideline:increasing_alpha_main}, we attempt to correct the apparent underfitting in the run of CQL with $\alpha=0.01$ by rerunning it with increased model-capacity and present the results in Figure~\ref{fig:underfitting_attempt}. Observe that the value of the CQL regularizer is still close to $0$, identical to the the na\"ive CQL run without the underfitting correction. Since underfitting correction does not reduce the value of the CQL regularizer, according to Guideline~\ref{guideline:increasing_alpha_main}, we need to increase $\alpha$ to allow for better minimization of the CQL regularizer. As we have already seen in the earlier runs in this section in Figure~\ref{fig:alpha_values_sweep}, if we increase $\alpha$ to $0.1$ or $1.0$, the value of the CQL loss would be small and sufficiently negative attaining values around $-5.0$, and overfitting is detected. Our policy selection guideline would then allow us to find a good policy for deployment. 

\begin{figure}[h]
\vspace{-0.4cm}
\begin{center}
\includegraphics[width=0.95\linewidth]{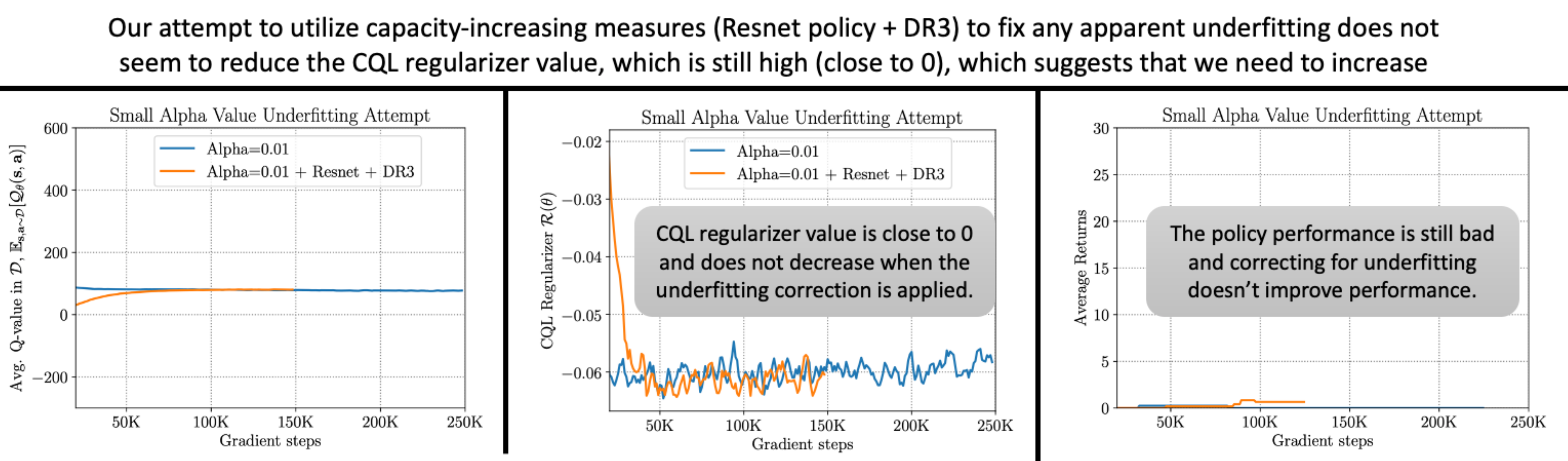}
\vspace{-0.3cm}
\end{center}
\caption{\footnotesize{\label{fig:underfitting_attempt} \textbf{Validating Guideline~\ref{guideline:increasing_alpha_main} by attempting to fix underfitting in CQL with a small $\alpha = 0.01$.} To verify if the run of CQL with $\alpha=0.01$ is underfitting or if it requires increasing $\alpha$, we re-run it with a capacity-increasing measures. However, even in this case, the value of the CQL regularizer is large. The value of the CQL regularizer is close to 0, which means the values of out-of-distribution actions is not small enough compared to in-distribution actions. Since underfitting correction does not help in this case, we conclude that is the case where the value of $\alpha$ needs to be raised to obtain improved performance.}}
\vspace{-0.3cm}
\end{figure}

\textbf{To summarize,} while our workflow for detecting overfitting, performing policy selection, and correcting overfitting works well across several $\alpha$ values, CQL can fail when $\alpha$ is too small, and will reduce to behavior cloning when $\alpha$ is too large. This is expected because smaller $\alpha$ values are insufficient to prevent overestimation and will cause the policy to choose unseen out-of-distribution actions and extremely large $\alpha$ values will strongly update the policy towards the behavior policy. To detect and handle if $\alpha$ is too small or too large, we proposed Guidelines \ref{guideline:decreasing_alpha_main} and \ref{guideline:increasing_alpha_main}, which prescribe \textbf{increasing} $\alpha$ if \textbf{(a)} the value of the CQL regularizer is large, and \textbf{(b)} utilizing capacity-increasing measures does not lead to reduction in the CQL regularizer value and \textbf{decreasing} $\alpha$ if \textbf{(a)} overfitting is observed with the current run, and \textbf{(b)} re-running CQL with a smaller value of $\alpha$ also exhibits an overfitting trend, with average Q-value decreasing with more gradient steps. After modifying the value of $\alpha$, we prescribe following the recommendations of the rest of our workflow.

\section{Underfitting With 10 and 20 Objects in Scenario \#2}
\label{app:other_num_objects}
In this section, we present our results on applying the proposed capacity-increasing measures on the simulated experiments with multiple training objects (10 and 20 objects) from Scenario \#2. The plot for 35 objects is shown in the main paper. In the case of 10 and 20 objects, we also observed a high value of TD error (see Figure~\ref{fig:object_diversity}), with relatively stable Q-values. In this case, our workflow would prescribe correcting for underfitting. In Figure~\ref{fig:dr3_resnet_appendix}, we present results of running CQL with the capacity-increasing DR3 regularizer and a ResNet policy to address underfitting in the case of 10 and 20 objects. We find that in both cases the performance of the policy improves and our capacity-increasing measures also generally lead to a slight decrease in the value of the TD error.

\begin{figure}[t]
% \vspace{-0.85cm}
\begin{center}
\includegraphics[width=0.6\linewidth]{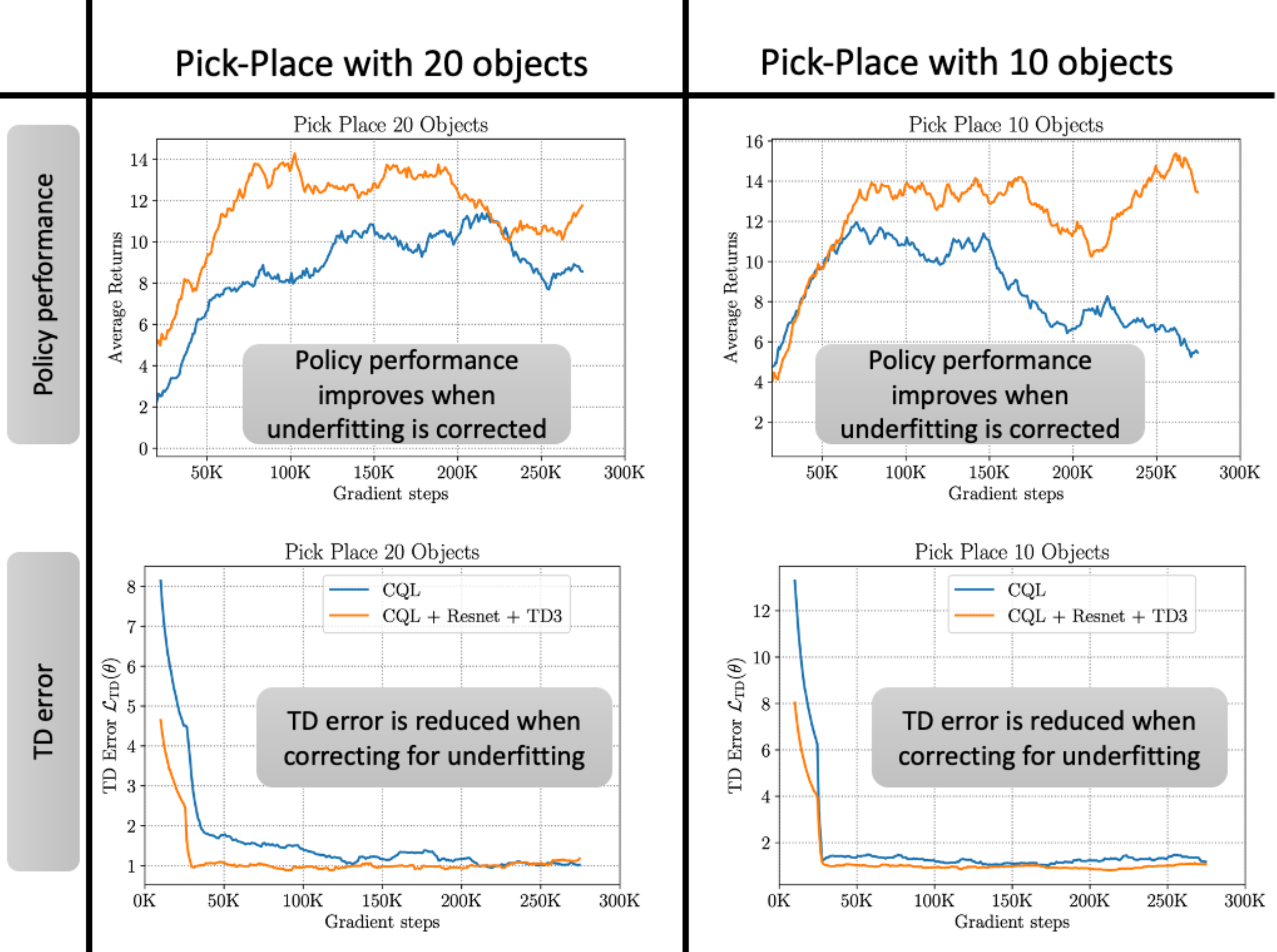}
\vspace{-0.2cm}
\end{center}
\caption{\footnotesize{\label{fig:dr3_resnet_appendix} \textbf{Correcting underfitting by utilizing a ResNet policy + DR3 regularizer for the case of 10 and 20 objects from Scenario \# 2.} Note that the addition of these underfitting corrections improves policy performance, while also reducing the training TD error by some amount.}}
\vspace{-0.6cm}
\end{figure}

\section{Other Capacity-Decreasing Regularizers for Addressing Overfitting}
\label{app:other_overfitting_corrections}

In this section, we present a study that evaluates different choices of capacity-decreasing regularizers to prevent overfitting in CQL. The candidate capacity-decreasing regularizers we evaluate are: 
\begin{itemize}
    \item dropout~\citep{srivastava2014dropout} with masking probability $p$ on the layers of the Q-function $Q_\theta$, 
    \item $\ell_1$ regularization on the parameters $\theta$ of the Q-function (i.e., $\min_\theta \mathcal{L}_\text{CQL}(\theta) + \rho ||\theta||_1$), and 
    \item $\ell_2$ regularization on the parameters $\theta$ (i.e., $\min_\theta \mathcal{L}_\text{CQL}(\theta) + \rho ||\theta||^2_2$).
\end{itemize}
We apply each of these regularizers to the run of CQL on the pick-and-place task from Scenario \#1, with 100 trajectories and report the average dataset Q-value, the corresponding performance of the policy (for analysis purposes) and the value of the training CQL regularizer for each of dropout, $\ell_1$ and $\ell_2$ regularization schemes in Figure~\ref{fig:other_regularizers}. To find a good value of $\rho$ and $p$ completely offline, we run each regularizer with different values of the hyperparameter $\rho \in \{0.1, 0.01, 1.0\}$ (for $\ell_1$/$\ell_2$ regularization) and $p \in \{0.01, 0.1, 0.2, 0.4\}$ (for dropout) and pick the value that stabilizes the trend in the average dataset Q-value, while not inhibiting the minimization of the training CQL regularizer. That is, we require the value of CQL regularizer to be sufficiently negative (ideally $\leq -2$ or $-3$). This is essential since excessive capacity-decreasing regularization can inhibit the minimization of the training CQL objective which would cause the policy to execute bad out-of-distribution actions. Using the scheme described above, we obtained $p=0.2$ for dropout and $\rho=0.01$ for the case of $\ell_2$ regularization.  

Observe in Figure~\ref{fig:other_regularizers} that utilizing dropout (left column) or applying $\ell_2$ regularization (middle column) mitigates the drop in average Q-value that is observed with na\"ive, untuned CQL on this task while also achieving a small CQL regularizer value. Applying dropout and $\ell_2$ regularization leads to improved and much more stable policy performance. This indicates that addressing overfitting by applying capacity-decreasing regularization can lead to improved performance.

We observed that $\ell_1$ regularization did not give rise to improved performance. Out of all three values of $\rho$, all of which are  presented in Figure~\ref{fig:other_regularizers} (right column) we found that $\rho=0.01$ was likely not large enough to mitigate the drop in Q-value, and runs with larger values of $\rho=0.1, 1.0$ failed to decrease the training CQL regularizer. We believe that an intermediate value of $\rho \in [0.01, 0.1]$ can possibly alleviate the overfitting issue, and we will run a finer search over $\rho$ for the final version.   

\begin{figure}
% \vspace{-0.85cm}
\begin{center}
\includegraphics[width=\linewidth]{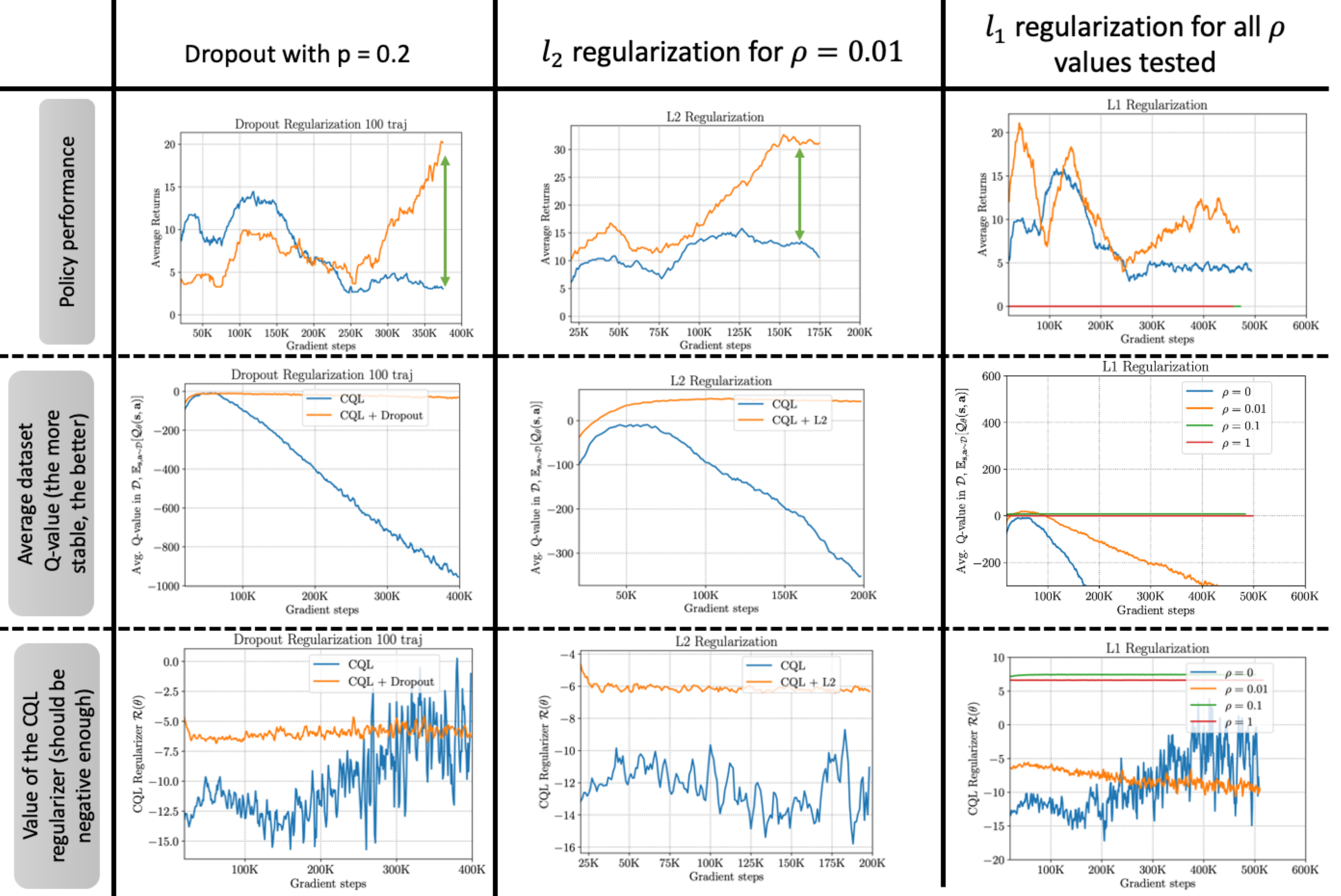}
\vspace{-0.2cm}
\end{center}
\caption{\footnotesize{\label{fig:other_regularizers} \textbf{CQL + different capacity-decreasing regularizers: dropout (left), $\ell_2$ regularization (middle) and $\ell_1$ regularization (right).} Comparison of different capacity-decreasing regularization schemes for the run of CQL on the drawer task with 100 trajectories from Scenario \#1. While na\"ive CQL (shown in blue in the plots) exhibits overfitting, i.e., the average dataset Q-value first increases and then decreases with more gradient steps, the addition of $\ell_2$ regularization or dropout with $\rho=0.01$ and $p=0.2$ respectively alleviates the drop in the Q-value (middle row shows the time series of the average dataset Q-value). Additionally observe that $\ell_2$ regularization and dropout improve policy performance, especially $\ell_2$ regularization. For $\ell_1$ regularization, none of the $\rho$ values we searched over was able to mitigate the drop in the average Q-value while retaining a negative value of the CQL regularizer, and thus did not improve in performance.}}
\vspace{-0.6cm}
\end{figure}

\section{Alternative Metrics for Overfitting}
\label{app:other_metrics}
In addition to Metric~\ref{guideline:overfitting} that prescribes tracking the average dataset Q-value for detecting overfitting and performing policy selection (Guideline~\ref{guideline:policy checkpoint}), we can also, in principle, choose to use an estimate of the policy return estimated using the learned conservative Q-function. Formally, this metric is given by policy value averaged under the initial state distribution $\mu_0(\bs)$: $\E_{\bs \sim \mu_0, \ba \sim \pi(\cdot|\bs)}[Q_\theta(\bs, \ba)]$. We perform a preliminary experimental study comparing metric $\E_{\bs, \ba \sim \mathcal{D}}[Q_\theta(\bs, \ba)]$ (Metric~\ref{guideline:overfitting}) and the policy return at the initial state on the drawer and the pick-place tasks from Scenario \#1, with 50 trajectories. As shown in Figure~\ref{fig:both_metrics}, we find that both of these metrics closely follow each other for most of the training iterations, and applying the policy selection guideline on either of them will choose the same policy checkpoint since these curves heavily overlap near the peak.   

\begin{figure}
% \vspace{-0.85cm}
\begin{center}
\includegraphics[width=0.7\linewidth]{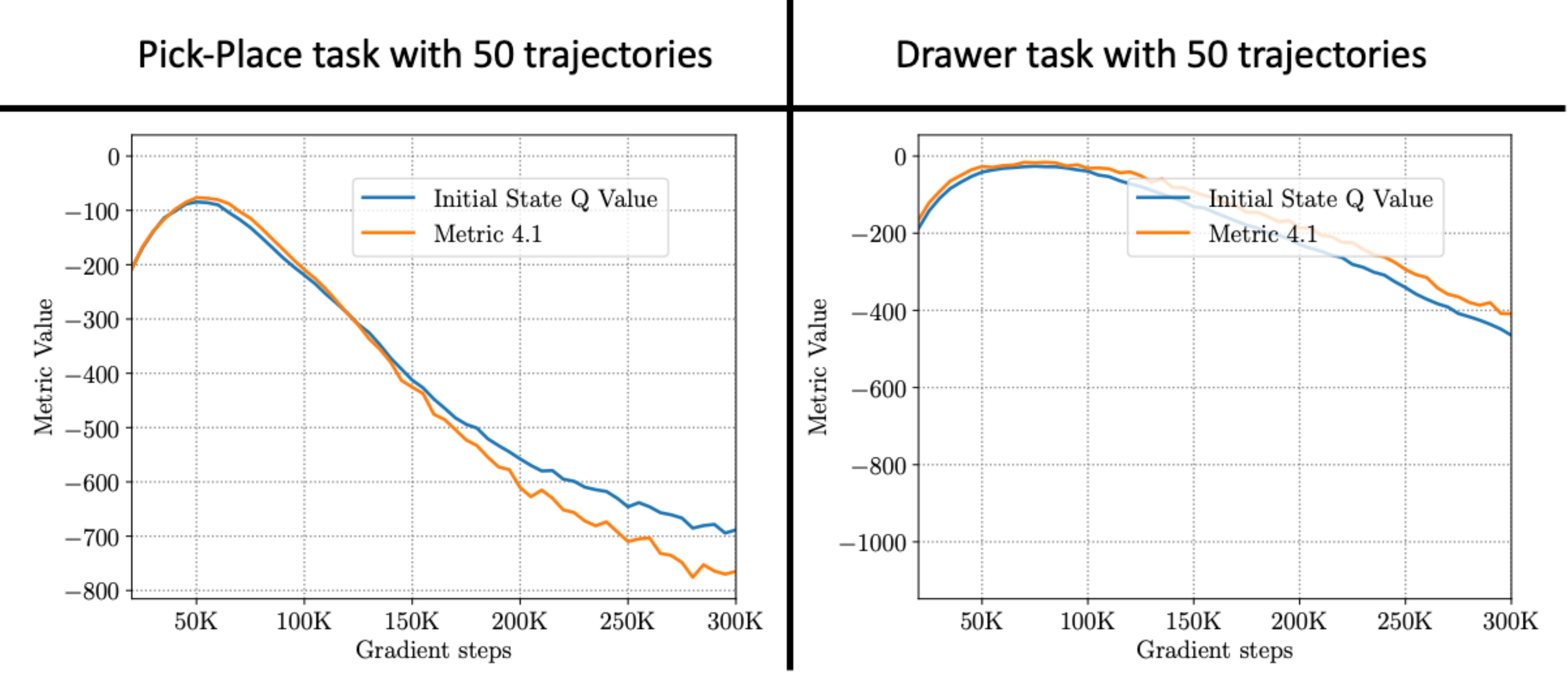}
\vspace{-0.2cm}
\end{center}
\caption{\footnotesize{\label{fig:both_metrics} \textbf{Preliminary experiments comparing the evolution of the average dataset Q-value in Metric~\ref{guideline:overfitting} (orange) and policy value averaged under the initial state distribution (blue).} Observe that both of these metrics follow each other closely for the most part in training, and exhibit a similar behavior, where the metric first increases and then decreases with more training. The peak in both of the curves overlap, indicating that utilizing either of the metrics for policy selection will return the same policy checkpoint.}}
\vspace{-0.6cm}
\end{figure}

}

\end{document}